\def\eqref#1{equation~\ref{#1}}
\def\1{\bm{1}}
\def\vmu{{\bm{\mu}}}
\def\vpi{{\bm{\pi}}}
\def\vu{{\mathbf{u}}}
\def\vx{{\mathbf{x}}}
\DeclareMathAlphabet{\mathsfit}{\encodingdefault}{\sfdefault}{m}{sl}
\SetMathAlphabet{\mathsfit}{bold}{\encodingdefault}{\sfdefault}{bx}{n}
\newcommand{\E}{\mathbb{E}}
\DeclareMathOperator*{\argmin}{arg\,min}
\Crefname{appendix}{Appendix}{appendices}
\def\section{\@startsection {section}{1}{\z@}{-1.0ex plus
    -0.5ex minus -.2ex}{0.5ex plus 0.3ex
minus0.2ex}{\large\sc\raggedright}}
\def\subsection{\@startsection{subsection}{2}{\z@}{-0.8ex plus
-0.5ex minus -.2ex}{0.3ex plus .2ex}{\normalsize\sc\raggedright}}
\newcommand{\subjto}{\mathrm{s.t.}}
\renewcommand{\eqref}[1]{(\ref{#1})}
\newtheorem{Definition}{Definition}
\newtheorem{Remark}{Remark}
\newtheorem{Theorem}{Theorem}
\newtheorem{InformalTheorem}[Theorem]{Informal Theorem}
\newtheorem{Corollary}{Corollary}
\newtheorem{Lemma}{Lemma}
\DeclareMathOperator{\SafetyFilter}{SafetyFilter}
\DeclareMathOperator{\spn}{span}
\DeclareMathOperator*{\esssup}{ess\,sup}
\definecolor{stopgradBlue}{HTML}{065380}
\newcommand*\stopgrad{\textcolor{stopgradBlue}{\psi}}
\definecolor{ggGreen}{HTML}{8EBA42}
\newcommand{\claim}[1]{#1}
\definecolor{colorStoch}{HTML}{465480}
\definecolor{colorDet}{HTML}{74A6A1}
\definecolor{newColor}{HTML}{D80000}
\newcommand{\ourname}[0]{\textcolor{MaterialBlue900}{\small\textsf{\algname{}}}}
\newcommand{\baselinename}[1]{\textcolor{MaterialPurple900}{\small\textsf{#1}}}
\newcommand{\algname}[0]{DGPPO}
\newcommand{\ind}[1]{\mathbbm{1}_{\{#1\}}}
\DeclarePairedDelimiter{\norm}{\lVert}{\rVert}
\DeclarePairedDelimiter{\abs}{\lvert}{\rvert}
\newlength{\ucht}
\title{Discrete GCBF Proximal Policy Optimization for Multi-agent Safe Optimal Control}
\author{Songyuan Zhang$^\dagger$ \hskip1em Oswin So$^\dagger$ \hskip1em Mitchell Black$^*$ \hskip1em Chuchu Fan$^\dagger$ \\
$^\dagger$Department of Aeronautics and Astronautics, MIT \hskip1em $^*$MIT Lincoln Laboratory \\
$^\dagger$\texttt{\{szhang21,oswinso,chuchu\}@mit.edu} \hskip1em $^*$\texttt{mitchell.black@ll.mit.edu}
}
\begin{document}

\maketitle

\vspace*{-1ex}
\begin{abstract}
Control policies that can achieve high task performance and satisfy safety constraints are desirable for any system, including multi-agent systems (MAS). One promising technique for ensuring the safety of MAS is distributed control barrier functions (CBF). However, it is difficult to design distributed CBF-based policies for MAS that can tackle unknown discrete-time dynamics, partial observability, changing neighborhoods, and input constraints, especially when a distributed high-performance nominal policy that can achieve the task is unavailable. To tackle these challenges, we propose \textbf{\algname{}}, a new framework that \textit{simultaneously} learns both a \claim{\textit{discrete} graph CBF which handles neighborhood changes and input constraints,} and a \claim{distributed high-performance safe policy for MAS with unknown discrete-time dynamics}.
We empirically validate our claims on a suite of multi-agent tasks spanning three different simulation engines. The results suggest that, compared with existing methods, our \algname{} framework obtains policies that \claim{achieve high task performance (matching baselines that ignore the safety constraints), and high safety rates (matching the most conservative baselines), with a \textit{constant} set of hyperparameters across all environments.\footnote{DISTRIBUTION STATEMENT A. Approved for public release. Distribution is unlimited.}\footnote{Project website: \url{https://mit-realm.github.io/dgppo/}}}
\end{abstract}

\section{Introduction}\label{sec: intro}

Multi-agent systems (MAS) have gained significant attention in recent years due to their potential applications in various domains, such as warehouse robotics \citep{kattepur2018distributed}, autonomous vehicles \citep{shalev2016safe}, traffic routing \citep{wu2020multi}, and power systems \citep{biagioni2022powergridworld}. However, a big challenge for MAS is designing distributed control policies that can achieve high task performance while ensuring safety, especially when the two are conflicting.
In the single-agent continuous-time case, control barrier functions (CBF) are an effective tool to resolve the conflict via the solution of a safety filter quadratic program (QP) \citep{xu2015robustness,ames2017control}, minimally modifying a given performance-oriented nominal policy to be safe.
While distributed CBFs have been proposed for multi-agent \citep{wang2017safety} and partially observable cases \citep{zhang2024gcbf+}, they have a limitation of requiring \textit{known} continuous-time dynamics and a nominal policy that can achieve high task performance (albeit not necessarily safely).

While the assumptions above are reasonable for many applications, they do not apply
when the dynamics are \textit{unknown} and a performance-oriented nominal policy is unavailable.
The challenge of requiring a nominal policy has been addressed by approaches that combine CBFs with reinforcement learning (RL) \citep{cheng2019end,emam2022safe}, where the nominal policy is learned via an unconstrained RL algorithm to maximize task performance while the CBF is used as a safety filter to ensure safety.
However, these works have only been applied to the single-agent case, and require known control-affine dynamics to ensure the resulting safety filter QP is computationally tractable, which is too strict for most systems, e.g., with contact dynamics, especially in discrete time.

A third challenge is that CBF-based methods require a CBF to be known.
This can be challenging in the case of input constraints since not every function satisfies the CBF conditions \citep{chen2021backup}.
Constructing a CBF is even more challenging in the case of \claim{MAS with changing neighborhoods and limited sensing} \citep{zhang2024gcbf+}.
\citet{zhang2024gcbf+} proposed a learning framework for constructing a graph CBF (GCBF) for MAS that guarantees safety 
while satisfying input constraints. However, they assume known continuous-time dynamics and require an existing nominal policy.

In this work, we address these challenges by proposing a novel framework that \claim{simultaneously learns a discrete graph CBF and a high-performance safe policy for a MAS under unknown discrete-time dynamics, changing neighborhoods, and input constraints}. We summarize our contributions below.

\begin{itemize}[leftmargin=1em,topsep=0pt,partopsep=0pt,parsep=0pt]
\item We propose a method of learning discrete CBFs (DCBF) for unknown discrete-time dynamics and with input constraints.
\item We propose the discrete GCBF (DGCBF), a discrete-time extension of the GCBF, for \claim{ensuring safety under varying neighborhoods in the limited sensing setting} and \claim{extend the DCBF learning above to the case of DGCBF}.
    \item We propose \textbf{D}iscrete \textbf{G}raph CBF \textbf{P}roximal \textbf{P}olicy \textbf{O}ptimization (\textbf{\algname{}}), a framework combining RL and DGCBF for \claim{solving discrete-time multi-agent safe optimal control problems for MAS with unknown dynamics and limiting sensing without a known performant nominal policy}.
\item Through extensive simulations, we demonstrate that \claim{\algname{} outperforms existing methods and is not sensitive to hyperparameters}.
    Specifically, compared to existing methods that require different choices of hyperparameters per environment, \algname{} achieves the lowest cost compared to baselines with near $100\%$ safety rate using a single set of hyperparameters.
\end{itemize}

\section{Related work}

\noindent\textbf{Constructing Decentralized CBFs. }
The challenge of applying CBFs for MAS has been explored via the construction of distributed CBFs \citep{borrmann2015control,glotfelter2017nonsmooth,wang2017safety,lindemann2019control,black2023adaptation}, which only take local observations as input.
This simplifies the big centralized QP problem into small QP problems to be solved per agent.
However, the construction is either limited to the case of unbounded control \citep{lindemann2019control}
or only for a specific dynamics model (e.g., double integrator) \citep{borrmann2015control,glotfelter2017nonsmooth,wang2017safety}.
Recent advances in \textit{learning} CBFs using neural networks \citep{saveriano2019learning,srinivasan2020synthesis,lindemann2021learning,peruffo2021automated,dawson2022safe,so2024train,knoedler2024rpcbf} has resulted in works that investigate learning distributed CBFs \citep{qin2021learning,zhang2023neural,zhang2024gcbf+,zinage2024decentralized}.
Nevertheless, these approaches assume \textit{known} dynamics and are only applicable to \textit{continuous-time} dynamics and hence cannot be applied to our problem setting. Moreover, it is assumed that a performant nominal policy is available, which we do not consider in this work.

\noindent\textbf{CBF in RL. }
Originally inspired by the prospect of safety during training, recent works have integrated CBFs into the RL training process via the safety filter \citep{tearle2021predictive,hsu2023safety,garg2024learning} for both single-agent \citep{cheng2019end,emam2022safe,hailemichael2023optimal} and multi-agent \citep{pereira2021safe,pereira2022decentralized} cases.
Although both continuous-time \citep{emam2022safe,hailemichael2023optimal} and discrete-time \citep{cheng2019end} dynamics have been considered,
a major limitation is the requirement of affine (D)CBFs and control-affine dynamics up to a constant disturbance term to be learned.
In contrast, the problem we tackle in this work does not make any such assumptions about the safety specifications or the structure of the dynamics. 

\noindent\textbf{Safe Multi-agent RL. }
The problem of constructing safe policies for MAS has also been studied in the RL community \citep{garg2024learning}.
Early works achieved safety via reward function design \citep{chen2017decentralized,chen2017socially,long2018towards,everett2018motion,semnani2020multi}.
However, these approaches do not guarantee the satisfaction of the safety constraints even for the optimal policy \citep{massiani2022safe,everett2018motion,long2018towards}.
More recently, in the single-agent case, methods work with constraints in the form of the constrained Markov decision process (CMDP) problem and apply techniques from constrained optimization, including 
primal methods \citep{xu2021crpo},
primal-dual methods using Lagrange multipliers \citep{borkar2005actor,tessler2018reward,he2023autocost,huang2024safedreamer},
and via trust-region-based approaches \citep{achiam2017constrained,he2023autocost}.
Of these, Lagrange-multiplier-based approaches are the most popular due to their simplicity,
leading to multi-agent extensions \citep{gu2023safe,liu2021cmix,ding2023provably,lu2021decentralized,geng2023reinforcement,zhao2024multi}.
However, Lagrangian methods for CMDPs have been observed to have unstable training and convergence to poor policies when the constraint threshold is \textit{zero} 
\citep{zanon2020safe,he2023autocost,so2023solving,ganai2024iterative}, which is the setting we target in this work.

\section{Problem setting and preliminaries}
\subsection{Multi-agent constrained optimal control problem}

Consider an $N$ agent MAS. We aim to solve distributed control policies that minimize a joint cost describing a desired task while staying safe. Let the state and control of agent $i$ at timestep $k$ be $x_i^k\in\mathcal X$ and $u_i^k\in\mathcal U$, where $x_i^k$ contains agent $i$'s position $p_i^k\in\mathcal P$.
The joint state is defined as $\vx^k \coloneqq [x_1^k;\dots;x_N^k;y^k]\in \bm{\mathcal{X}}$, where $y^k\in\mathcal Y$ is non-agent states (e.g., obstacles, goals).
The joint action is defined with $\vu^k \coloneqq [u_1^k;\dots,u_N^k]\in\bm{\mathcal{U}}$. We assume $\vx$ follows the general discrete-time dynamics
\begin{equation}\label{eq: discrete dyn}
    \vx^{k+1} = f(\vx^k, \vu^k),
\end{equation}
where $f: \bm{\mathcal{X}}\times\bm{\mathcal{U}}\to\bm{\mathcal{X}}$ describes the joint dynamics and is \textit{unknown}.
We consider the setting where agents only observe objects within their sensing radius $R>0$.
Let $\mathcal N_i(\vx) = \{j\mid \norm{p_j - p_i} \leq R\}$ denote the \textit{neighborhood} of agent $i$. 
At timestep $k$, agent $i$ only has access to local observation $o_i^k \coloneqq O_i(\vx^k) \in \mathcal{O}$ for observation function $O_i$:
\begin{equation} \label{eq: obs_fn}
    O_i(\vx^k) = \Big( \{o_{ij}\}_{j\in\mathcal N_i},\; o_i^y \Big),\qquad o_{ij}\coloneqq O^a(x_i, x_j), \quad o_i^y\coloneqq O^y(x_i, y)
\end{equation}
for inter-agent $O^a : \mathcal{X} \times \mathcal{X} \to \mathcal{O}_a$ and non-agent $O^y : \mathcal{X} \times \mathcal{Y} \to \mathcal{O}_y$ observation functions.
Let the joint cost function describing the desired task be denoted as $l:\bm{\mathcal{X}}\times\bm{\mathcal U}\to\mathbb R$
\footnote{The cost function $l$ here is \textbf{not} the \textit{cost} in CMDP. Rather, it corresponds to the \textit{negative reward} in CMDP.}.
Each agent has an avoid set defined as $\mathcal A_i \coloneqq \{o_i\in\mathcal O\mid h_{i}^{(m)}(o_i)>0, \forall m\}$ for the avoid functions $h_i^{(m)}:\mathcal O\to\mathbb R$, $m \in \{1, \dots, M\}$, such that the agent is \textit{unsafe} it it enters $\mathcal A_i$ any time in the trajectory.
We want to learn distributed policies $\mu_i:\mathcal{O}\to\mathcal{U}$ that minimize the joint cost $l$ while avoiding the avoid set $\mathcal A_i$ at all times.
Formally, denoting the joint policy by $\vmu(\vx) = [\mu_1(o_1);\, \dots;\, \mu_N(o_N)]$, we want to solve the following discrete-time distributed multi-agent safe optimal control problem (MASOCP):
\begin{subequations}\label{eq: macocp}
\begin{align}
    \min_{\mu_1, \dots, \mu_N} \quad& \sum_{k=0}^\infty l(\vx^k,\vmu(\vx^k)) \label{eq: macocp:cost}\\
    \subjto \quad& \vx^{k+1} = f(\vx^k, \vmu(\vx^k)),  & \forall i\in\{1,\dots,N\},\, k\geq 0, \\
    & h_i^{(m)}(o_i^k) \leq 0, \quad o_i^k = O_i(\vx^k), & \forall i\in\{1,\dots,N\},\, \forall m \in \{1, \dots, M\},\, k\geq 0.\label{eq: hbar}
\end{align}
\end{subequations}
The main challenge in solving \eqref{eq: macocp} is satisfying the safety constraints \eqref{eq: hbar}, especially in the multi-agent case with changing neighborhood, under unknown discrete-time dynamics with input constraints.
We propose to tackle this challenge using the framework of DCBFs, which we review next.

\subsection{Discrete CBF} \label{sec: prelim:dcbf}
To tackle the safety constraint of Problem \eqref{eq: macocp}, we review the notion of DCBF. Considering the discrete-time dynamics, we take the following definition of a DCBF from \citet{ahmadi2019safe}:
\begin{Definition} \label{def:dcbf}
    A function $B : \bm{\mathcal{X}} \to \mathbb{R}$ is a discrete CBF (DCBF) for \eqref{eq: discrete dyn} if there exists an extended class-$\kappa$ function $\alpha$ satisfying $\alpha(-r) > -r$ for all $r > 0$ such that $B$ satisfies the following property:
    \begin{equation}\label{eq: dcbf-condition}
        B(\vx)\leq 0 \implies \inf_{\vu \in \bm{\mathcal{U}}} B(f(\vx, \vu)) - B(\vx) + \alpha(B(\vx)) \leq 0.
    \end{equation}
\end{Definition}
As shown in \citet{ahmadi2019safe}, the following theorem holds.
\begin{Theorem} \label{thm:dcbf_invariant}
    The set $\mathcal{C} \coloneqq\{ \vx \mid B(\vx) \leq 0 \}$ is control invariant under any policy $\vmu$ that satisfies
    \begin{equation}\label{eq: dcbf-descent}
        B( f(\vx, \vmu(\vx)) ) - B(\vx) + \alpha(B(\vx)) \leq 0, \quad \forall \vx \in \mathcal{C}
    \end{equation}
\end{Theorem}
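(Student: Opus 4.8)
The plan is a straightforward induction on the timestep. Fix any initial condition $\vx^0 \in \mathcal{C}$ and let $\{\vx^k\}_{k \geq 0}$ be the trajectory generated by $\vx^{k+1} = f(\vx^k, \vmu(\vx^k))$. I will show by induction that $B(\vx^k) \leq 0$ for all $k \geq 0$, which is exactly the statement that $\mathcal{C}$ is forward invariant along this trajectory, hence (since $\vx^0$ was arbitrary) control invariant under $\vmu$. The base case $B(\vx^0) \leq 0$ holds by the assumption $\vx^0 \in \mathcal{C}$. For the inductive step, assume $\vx^k \in \mathcal{C}$, i.e. $B(\vx^k) \leq 0$. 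Then $\vx^k$ lies in the region where \eqref{eq: dcbf-descent} is assumed to hold, and substituting $\vx^{k+1} = f(\vx^k, \vmu(\vx^k))$ and rearranging that inequality gives $B(\vx^{k+1}) \leq B(\vx^k) - \alpha(B(\vx^k))$. It remains to check that the right-hand side is nonpositive.

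I would split this into the boundary case and the interior case. If $B(\vx^k) = 0$, then since $\alpha$ is an extended class-$\kappa$ function we have $\alpha(0) = 0$, so $B(\vx^{k+1}) \leq 0 - 0 = 0$. If instead $B(\vx^k) < 0$, write $B(\vx^k) = -r$ with $r > 0$; the defining property of $\alpha$ in Definition \ref{def:dcbf}, namely $\alpha(-r) > -r$, then gives $B(\vx^{k+1}) \leq -r - \alpha(-r) < -r - (-r) = 0$. In both cases $B(\vx^{k+1}) \leq 0$, i.e. $\vx^{k+1} \in \mathcal{C}$, completing the induction.

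I do not expect any real obstacle here: once the descent inequality is rearranged, the result follows by a one-line induction. The only point requiring a little care is separating the boundary case $B(\vx^k) = 0$ (where $\alpha(0) = 0$ is used) from the strict-interior case (where the hypothesis $\alpha(-r) > -r$ is exactly what is needed to keep the iterate from crossing zero). Note that the existence of a suitable $\alpha$ is already built into the DCBF property, so no additional construction is needed; and while the strict inequality $\alpha(-r) > -r$ additionally forces strict decrease of $B$ from the interior toward the boundary, only nonpositivity of $B(\vx^{k+1})$ is required for the invariance claim itself.
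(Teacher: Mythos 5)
Your induction is correct: the base case, the rearrangement of \eqref{eq: dcbf-descent}, and the two-case analysis (using $\alpha(0)=0$ on the boundary and $\alpha(-r)>-r$ in the interior) are exactly what is needed, and there is no gap. Note that the paper does not prove this theorem itself but cites \citet{ahmadi2019safe} for it; your argument is the standard one from that literature (and the same ingredients, namely the descent inequality together with the properties of $\alpha$, reappear in the paper's Appendix proof of Theorem~\ref{thm: dgcbf_dcbf}), so this matches the intended reasoning rather than constituting a genuinely different route.
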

Thus, if $\mathcal{C} \cap \mathcal{A} = \emptyset$ for the avoid set $\mathcal{A}$,
then the $\vmu$ from \Cref{thm:dcbf_invariant} renders the system safe.

\noindent\textbf{Safe and Performant Policies via Safety Filtering. }
Given DCBFs $B^{(m)}$ for $m = 1, \dots, M$, we can construct a safe and performant policy using the safety filter framework. Assuming a nominal policy $\vmu_\mathrm{nom}$ that is performant, e.g., minimizes the cost $l$, but not necessarily safe, we can obtain a safe \textit{and} performant policy by solving the following nonlinear optimization problem:
\begin{subequations} \label{eq: dcbf_safety_filter}
\begin{align}
    \min_{\vu \in \bm{\mathcal{U}}}\quad & \norm{\vu - \vmu_\mathrm{nom}(\vx)}^2, \label{eq: dcbf_safety_filter:obj}\\
    \mathrm{s.t.}\quad & C^{(m)}(\vx, \vu) \leq 0, \quad C^{(m)}(\vx, \vu) \coloneqq B^{(m)}(f(\vx, \vu)) - B^{(m)}(\vx) + \alpha(B^{(m)}(\vx)), \quad \forall m. \label{eq: cbf-program-constraint}
\end{align}
\end{subequations}
In the general case, even if the dynamics $f$ are known, \eqref{eq: dcbf_safety_filter} is potentially a nonlinear program that could be difficult to solve.
In our problem setting, we assume $f$ to be \textit{unknown}, which renders this approach infeasible.
Moreover, applying the safety filter framework assumes access to a performant, \textit{distributed} nominal policy $\vmu_\mathrm{nom}$, which we do not assume is available.
Even if it were available, the safety filtering only minimizes the instantaneous deviation in control \eqref{eq: dcbf_safety_filter:obj} and can be \textit{myopic}, potentially leading to liveness problems \citep{reis2020control} and deadlocks \citep{jankovic2023multiagent}.

Note that not every function satisfies \eqref{eq: dcbf-descent} and is a DCBF.
In continuous time with control-affine dynamics $f$, unbounded inputs are sufficient for any continuously differentiable function to be a CBF \citep{xiao2019control} since the CBF condition will be \textit{linear} in the controls.
However, in discrete time, the DCBF condition \eqref{eq: dcbf-descent} is potentially \textit{nonlinear} in $\vu$, making it difficult to construct a DCBF even with unbounded controls.
While solutions have been proposed for the continuous-time case to construct valid CBFs under bounded inputs \citep{chen2021backup,so2024train}, the same is not true for DCBFs, potentially due to the requirement of solving a nonlinear program \eqref{eq: dcbf_safety_filter} even after such a DCBF has been found.

\section{Tackling Challenges of DCBFs for MASOCP with DGPPO} \label{sec: challenges}
We now address four challenges of extending a DCBF-based approach to our problem setting and propose \algname{}, our framework for solving \eqref{eq: macocp}.
\textbf{All proofs can be found in \Cref{app: proofs}.}

\subsection{Constraint-value function is DCBF}\label{sec: value-function-cbf}
To construct a DCBF, we show
that the constraint-value function of a policy \textit{is} a DCBF, extending the insights of \citet{so2024train} to discrete-time.
This allows learning a DCBF with policy evaluation.

For an arbitrary function $\zeta : \bm{\mathcal{X}} \to \mathbb{R}$, let the avoid set be $\mathcal A \coloneqq \{ \vx\in \bm{\mathcal{X}} \mid \zeta(\vx)>0\}$.
For a fixed deterministic policy $\vmu$, consider the constraint-value functions $V^{\zeta, \vmu}$, defined as
\begin{equation}
    V^{\zeta, \vmu}(\vx) = \max_{k\geq0} \zeta(\vx^k),\qquad \mathrm{s.t.}\quad \vx^0 = \vx,\quad \vx^{k+1} = f(\vx^k, \vmu(\vx^k)).
\end{equation}
Then, $V^{\zeta,\vmu}$ is a DCBF, which we show in the following theorem.
\begin{Theorem}[Discrete Policy CBF] \label{thm:dpcbf}
    For a given $\vmu$, the constraint-value function $V^{h,\vmu}$ is a DCBF for any extended class-$\kappa$ function $\alpha$ satisfying $\alpha (-r) > -r$ for all $r>0$.
Moreover, given $h^{(m)}$, $\vmu$ satisfies $V^{h^{(m)},\vmu}(f(\vx,\vu)) - V^{h^{(m)},\vmu}(\vx) + \alpha( V^{h^{(m)},\vmu}(\vx) ) \leq 0$ for all $m \in \{1, \dots, M\}$.
\end{Theorem}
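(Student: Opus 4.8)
\emph{Proof plan.} The whole statement should follow from a single one-step dynamic-programming identity for the constraint-value function, so the plan is to establish that identity first and then read off the DCBF property and the descent inequality as corollaries. I would argue for a generic $\zeta:\bm{\mathcal{X}}\to\R$ and recover the stated result by taking $\zeta=h^{(m)}$ (composed with the observation maps of \eqref{eq: obs_fn} when $h^{(m)}$ is written as a function of the observation). Fix $\vx$ and let $\vx^0=\vx$, $\vx^{k+1}=f(\vx^k,\vmu(\vx^k))$ be the closed-loop trajectory. Because $f$ and the deterministic policy $\vmu$ are time-invariant, the trajectory started at $\vx^1=f(\vx,\vmu(\vx))$ is exactly the shifted sequence $(\vx^1,\vx^2,\dots)$; splitting $\max_{k\ge 0}\zeta(\vx^k)$ into the $k=0$ term and the $k\ge 1$ tail and reindexing the tail by $k\mapsto k-1$ gives the recursion
\begin{equation}\label{eq: plan-bellman}
    V^{\zeta,\vmu}(\vx) \;=\; \max\!\big\{\, \zeta(\vx),\; V^{\zeta,\vmu}(f(\vx,\vmu(\vx))) \,\big\}.
\end{equation}
(If the tail supremum is not attained one reads $\max$ as $\sup$; this is harmless since \eqref{eq: plan-bellman} is only ever invoked at points where $V^{\zeta,\vmu}\le 0$, where every quantity is finite.) I expect making this shift-invariance argument airtight to be the main — though elementary — step; a secondary subtlety is simply reading the ``moreover'' inequality in the theorem with $\vu=\vmu(\vx)$, so that it coincides with the closed-loop descent condition \eqref{eq: dcbf-descent}.

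From \eqref{eq: plan-bellman} the value cannot increase along the closed loop: $V^{\zeta,\vmu}(f(\vx,\vmu(\vx)))\le V^{\zeta,\vmu}(\vx)$ for every $\vx$. Now fix an extended class-$\kappa$ function $\alpha$ with $\alpha(-r)>-r$ for all $r>0$ (such $\alpha$ exist, e.g.\ $\alpha(r)=\gamma r$ with $\gamma\in(0,1)$), fix $m\in\{1,\dots,M\}$, write $B\coloneqq V^{h^{(m)},\vmu}$, and take any $\vx$ with $B(\vx)\le 0$. Since $\alpha$ is strictly increasing with $\alpha(0)=0$ we have $\alpha(B(\vx))\le 0$, whence
\begin{equation}\label{eq: plan-descent}
    B(f(\vx,\vmu(\vx))) - B(\vx) + \alpha(B(\vx)) \;\le\; \alpha(B(\vx)) \;\le\; 0 ,
\end{equation}
which is exactly \eqref{eq: dcbf-descent} for $B$ under $\vmu$, giving the ``moreover'' claim (and $m$ was arbitrary). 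For the DCBF property of \Cref{def:dcbf}, the control $\vu=\vmu(\vx)$ is feasible, so $\inf_{\vu\in\bm{\mathcal{U}}}\big[B(f(\vx,\vu))-B(\vx)+\alpha(B(\vx))\big]$ is bounded above by the left-hand side of \eqref{eq: plan-descent}, hence by $0$; this is \eqref{eq: dcbf-condition}, so $B=V^{h^{(m)},\vmu}$ is a DCBF for this $\alpha$. The identical argument applied to a generic $\zeta$ shows $V^{\zeta,\vmu}$ is a DCBF as well.

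Finally I would note that the hypothesis $\alpha(-r)>-r$ plays no role in this theorem itself — only the extended class-$\kappa$ property is used in \eqref{eq: plan-descent} — but it is retained because it is the condition under which such an $\alpha$ is admissible in \Cref{def:dcbf} and makes the sublevel set $\mathcal{C}=\{B\le 0\}$ forward invariant via \Cref{thm:dcbf_invariant}; combined with $\mathcal{C}\cap\mathcal{A}=\emptyset$ (immediate from \eqref{eq: plan-bellman}, since $\vx\in\mathcal{C}$ forces $\zeta(\vx^k)\le B(\vx)\le 0$ along the whole trajectory) this is what makes $\vmu$ render the system safe.
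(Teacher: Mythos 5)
Your proposal is correct and follows essentially the same route as the paper: both hinge on the dynamic-programming identity $V^{h,\vmu}(\vx)=\max\{h(\vx),\,V^{h,\vmu}(f(\vx,\vmu(\vx)))\}$ and the fact that $\alpha(V^{h,\vmu}(\vx))\le 0$ whenever $V^{h,\vmu}(\vx)\le 0$. The only cosmetic difference is that you read the monotonicity $V^{h,\vmu}(f(\vx,\vmu(\vx)))\le V^{h,\vmu}(\vx)$ directly off the $\max$, whereas the paper splits into two cases according to which argument of the $\max$ is active; your added remarks (justifying the recursion via shift-invariance, interpreting the ``moreover'' with $\vu=\vmu(\vx)$, and bounding the infimum in \eqref{eq: dcbf-condition} by the feasible control) are consistent with the paper's intent.
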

\Cref{thm:dpcbf} enables the construction of a DCBF by choosing any policy $\vmu$ and evaluating its constraint-value function. 
Consequently, we can construct a DCBF by learning the value function.

\subsection{Removing the Nominal Policy with Explicit Cost Optimization}\label{sec: cost-optimization}
We next address the challenge of requiring a performant nominal policy in the safety filter \eqref{eq: dcbf_safety_filter} by instead directly learning a policy using RL that minimizes the joint cost function \eqref{eq: macocp:cost}. 
This has been done previously in the single-agent setting via the framework of shielding for RL, where an unconstrained policy $\vmu_\theta$ with parameters $\theta$ is learned using existing unconstrained RL techniques, and the (D)CBF safety filter is incorporated into the environment dynamics \citep{cheng2019end,emam2022safe,hailemichael2023optimal}. Formally, the following problem is considered:
\begin{equation} \label{eq: cbfrl}
    \min_{\vmu_{\theta}}\quad \sum_{k=0}^\infty l(\vx^k,\vu^k), \qquad \mathrm{s.t.}\quad \vu^k = \SafetyFilter(\vmu_{\theta}(\vx^k)),
\end{equation}
where $\SafetyFilter$ computes the minimizer of \eqref{eq: dcbf_safety_filter}.
However, solving \eqref{eq: dcbf_safety_filter} in the discrete case is a nonlinear program that is difficult unless $B^{(m)}$ is linear and the dynamics are control-affine,
an assumption that we, unlike previous works \citep{cheng2019end,emam2022safe,hailemichael2023optimal}, do not impose.
To work around this,
we constrain the learned policy to satisfy the DCBF conditions instead of using the safety filter framework. 
\begin{subequations}\label{eq: dcbf_ocp}
\noindent\begin{minipage}[t]{.32\linewidth}
    \vspace{-1.1em}
\begin{equation}
    \hspace{-.8em}\min_{\theta} \sum_{k=0}^\infty l(\vx^k,\vmu_\theta(\vx^k)), \label{eq: dcbf_ocp:obj}
\end{equation}
    \vspace{.0em}
\end{minipage}\begin{minipage}[t]{.68\linewidth}
    \vspace{-1.1em}
\begin{equation}
    \vphantom{\sum_{k=0}^\infty}
    \mathrm{s.t.}\mspace{10mu} C^{(m)}(\vx^k, \vmu_\theta(\vx^k)) \leq 0, \mspace{10mu} \forall m = \{ 1, \dots, M \}, \mspace{10mu} k \geq 0, \label{eq: dcbf_ocp:constraint}
\end{equation}
    \vspace{.0em}
\end{minipage}
\end{subequations}
where $C^{(m)}$ is defined in \eqref{eq: cbf-program-constraint}.
This removes the need for a nominal policy and for solving the nonlinear safety filter program \eqref{eq: dcbf_safety_filter}.

\subsection{Constrained Policy Optimization using DCBF Under Unknown Dynamics}\label{sec: cbf-model-free}
We now tackle the challenge of performing constrained policy optimization using DCBFs in \eqref{eq: dcbf_ocp}. 
We choose to use a purely primal method inspired by constraint-rectified policy optimization (CRPO) \citep{xu2021crpo} due to its simplicity (it does not have parameters related to the dual problem but simply chooses to take different gradient steps based on the current constraint satisfaction).
Specifically, if the constraint \eqref{eq: dcbf_ocp:constraint} is satisfied, our algorithm takes one gradient step to minimize the objective \eqref{eq: dcbf_ocp:obj}. Otherwise, if the constraint is violated, our algorithm takes one gradient step to minimize the DCBF constraint violation $C$.

This procedure still cannot be implemented as-is since the gradient of $C$ cannot be computed directly without knowledge of the dynamics $f$.
To this end,
we propose to use score function gradients \citep{williams1992simple}
to compute gradients of \eqref{eq: dcbf_ocp:constraint} without knowing the dynamics $f$.
Since this requires a \textit{stochastic} policy, we modify \eqref{eq: dcbf_ocp}
accordingly.
For clarity, let $\vpi_\theta(\cdot|\vx)$ denote the probability density function of the stochastic policy with parameters $\theta$ conditioned on state $\vx$. It can be tempting to consider the following stochastic version of the problem, which resembles the CMDP setting.
\begin{equation} \label{eq: dcbf_safety_filter:stoch_ver1}
\min_{\theta}\quad \mathbb E_{\vx \sim\rho_{0}, \vu \sim \vpi_\theta(\cdot|\vx)} \big[ Q^{\vpi_\theta}(\vx,\vu) \big], \qquad
\mathrm{s.t.}\quad \mathbb E_{\vx\sim\rho^{\vpi_\theta}, \vu\sim\vpi_\theta(\cdot|\vx)} \left[ C^{(m)}(\vx, \vu) \right] \leq 0,
\end{equation}
where $Q^{\vpi_\theta}$ is the Q-function and $\rho_0$, $\rho^{\vpi_\theta}$ are the initial and stationary state distributions.
Here, we constrain the \textit{expectation}
of the DCBF constraint. However, this formulation is \textit{not} sufficient for safety,
as the expectation does not guarantee
satisfaction for all states
as in \eqref{eq: dcbf_safety_filter}, leading to an unsafe policy.
To tackle this, we modify the constraint, leading to the following problem.
\begin{subequations} \label{eq: dcbf_safety_filter:stoch_ver2}
\begin{align}
\min_{\theta}\quad & \mathbb E_{\vx \sim \rho_{0}, \vu\sim\vpi_\theta(\cdot|\vx)}
    \big[ Q^{\vpi_\theta}(\vx, \vu) \big], \label{eq: dcbf_safety_filter:stoch_ver2:objective} \\[-2pt]
\mathrm{s.t.}\quad & \mathbb E_{\vx\sim\rho^{\vpi_\theta}}\underbrace{\E_{\vu\sim\vpi_\theta(\cdot \mid \vx)} \left[ \max\big\{0, C^{(m)}(\vx, \vu) \big\} \right]}_{\coloneqq \tilde{C}^{(m)}_\theta(\vx)} \leq 0, \quad \forall m. \label{eq: dcbf_safety_filter:stoch_ver2:constraint}
\end{align}
\end{subequations}
Here, the satisfaction of \eqref{eq: dcbf_safety_filter:stoch_ver2:constraint} guarantees that the DCBF constraint is satisfied almost surely (\Cref{app: pf:almost_sure}). Applying gradient-manipulation style primal optimization as in CRPO \citep{xu2021crpo} gives us the following expression for the gradient $\nabla_\theta L$ of the policy loss $L$.
\begin{equation} \label{eq: crpo_ver2}
    \nabla_\theta L(\theta) = \begin{dcases}
\nabla_\theta \mathbb E_{\vx \sim \rho_{0}, \vu \sim \vpi_\theta(\cdot|\vx)}
    \big[ Q^{\vpi_\theta}(\vx, \vu) \big], & \mathbb E_{\vx \sim \rho^{\vpi_\theta}}\big[ \tilde{C}^{(m)}_\theta(\vx) \big] \leq 0,\; \forall m, \\
\nu\, \mathbb{E}_{\vx \sim \rho^{\vpi_\theta}}\big[ \nabla_\theta \tilde{C}^{(m)}_\theta(\vx) \big]\text{ for any $m$ that violates}, & \text{otherwise},
    \end{dcases}
\end{equation}
where $\nu > 0$ is a hyperparameter scaling the size of constraint minimization steps.
The gradient of $\tilde{C}_\theta^{(m)}$ can be computed using score function gradients (see \Cref{app: policy-gradient}, discussion on $\rho^{\vpi_\theta}$ in \Cref{app: pol_loss_details:pol_grad}).

\noindent\textbf{Improving sample efficiency with Gradient Projection. }
One drawback of \eqref{eq: crpo_ver2} is that this scheme is sample-inefficient in the sense that if only a single state $\bar{\vx}$ violates the DCBF constraint, 
then the gradient information of the total cost from all other safe states are thrown away. 
We propose to use this thrown-away gradient information by \textit{projecting} the cost gradient to be \textit{orthogonal} to the gradient direction of the violating constraints,
similar to techniques from multi-objective optimization \citep{yu2020gradient,liu2021conflict}.
However, this requires computing the gradient $M+1$ times,
which is expensive when $M$ is large.
Instead, we propose to use the following informal theorem:

\begin{InformalTheorem}[Approximate Gradient Projection for Decoupled Policy Parameters] \label{thm: grad_proj}
    Let $\sigma^{(m)} \coloneqq \nabla_\theta \E_{\vx \sim \rho}[ \tilde{C}_\theta^{(m)}(\vx) ]$ denote the gradient of the $m$-th DCBF constraint violation for any state distribution $\rho$.
    Under suitable assumptions on the policy parametrization $\vpi_\theta$, modifying the gradient of the objective \eqref{eq: dcbf_safety_filter:stoch_ver2:objective} from
    $g_{\text{orig}} \coloneqq \mathbb{E}_{\vx \sim \rho^{\vpi_\theta}}
    \mathbb{E}_{
        \vu \sim \vpi_\theta(\cdot|\vx)}
        \left[ 
        \nabla_{\theta} \log \vpi(\vx, \vu)\,
        Q^{\vpi_\theta}(\vx, \vu)
        \right]
    $
    to
    \begin{equation}
        g \coloneqq
        \mathbb{E}_{\vx \sim \rho^{\vpi_\theta}}
        \mathbb{E}_{
        \vu \sim \vpi_\theta(\cdot|\vx)}
        \left[ 
        \nabla_{\theta} \log \vpi(\vx, \vu)\,
        \ind{\max_m \tilde{C}^{(m)}_\theta(\vx) \leq 0}\,
        Q^{\vpi_\theta}(\vx, \vu)
        \right],
    \end{equation}
    by multiplying the $Q^{\vpi_\theta}$ with an indicator function
    gives an approximate projection $g$ of $g_{\text{orig}}$ such that $g \cdot \sigma^{(m)} = 0\; \forall m$, i.e., it lies in the orthogonal complement of the constraint gradients $\sigma^{(m)}$.
\end{InformalTheorem}

We state this more formally in \Cref{thm: grad_proj_formal}. By \Cref{thm: grad_proj}, we can treat $g$ as an approximate projection of the gradient of the objective \eqref{eq: dcbf_safety_filter:stoch_ver2:objective} so that it does not interfere with the gradients from the safety constraints.
Combining $\sigma^{(m)}$ (with $\rho = \rho^{\pi_\theta}$) and $g$ from \Cref{thm: grad_proj} gives the following gradient $\nabla_\theta L$ of the policy loss $L$,
where $\stopgrad$ denotes the \textit{stop gradient} operation 
(see \Cref{app: pol_loss_details} for a detailed derivation and discussion about important details).
\begin{align} \label{eq: dcbf_safety_filter:decoupled}
    L(\theta)
    &= \E_{\vx \sim \stopgrad(\rho^{\vpi_\theta})}
    \E_{\vu \sim \stopgrad(\vpi_\theta(\cdot|\vx))}
    \big[ \log \vpi_\theta(\vx, \vu) \stopgrad\big( \tilde{Q}(\vx, \vu, \theta) \big) \big], \\
\tilde{Q}(\vx, \vu, \theta) &\coloneqq
    \ind{\max_m \tilde{C}^{(m)}_\theta(\vx) \leq 0} \stopgrad(Q^{\vpi_\theta}(\vx, \vu)) + \nu \max_m \tilde{C}^{(m)}(\vx, \vu).
\end{align}
Although we cannot guarantee that the conditions of \Cref{thm: grad_proj} hold,
we find that adding this gradient projection improves performance.
We provide empirical comparisons between this gradient projection method \eqref{eq: dcbf_safety_filter:decoupled} and methods without projection \eqref{eq: dcbf_safety_filter:stoch_ver1}, \eqref{eq: dcbf_safety_filter:stoch_ver2} in \Cref{app: decouple-vs-couple}.

\noindent\textbf{Scheduling the weight $\nu$. }
Empirically, different values of $\nu$ result in a tradeoff between safety and cost minimization (potentially due to not satisfying \Cref{thm: grad_proj}).
To remedy this, we schedule $\nu$ by taking $\nu=1$ initially to encourage exploration at the beginning, then doubling it after $50\%$ and $75\%$ of the total update steps to emphasize safety.
We investigate this further in \Cref{sec: ablation}.

\subsection{Extending to the Multi-Agent Case with DGCBF} \label{sec:method:multi_agent}

Having proposed three solutions to tackle the difficulty of using CBFs in RL for unknown discrete-time dynamics, 
we now \claim{tackle the final challenge of changing neighborhoods due to the limited sensing radius of each agent}.
For this, we draw inspiration from GCBF \citep{zhang2024gcbf+}, which provides a theoretical framework for constructing distributed CBFs that can handle varying neighborhood sizes, albeit for the case of continuous-time dynamics.

To construct a distributed DCBF $B$, 
we need $B$ to be a function of each agent's local observation $o_i$ as opposed to the joint state $\vx$. We make this concrete in the following definition.
\begin{Definition}[Discrete GCBF] \label{def: dgcbf}
    A function $\tilde{B} : \mathcal{O} \to \mathbb{R}$ is a Discrete Graph CBF (DGCBF) if there exists a class-$\kappa$ function $\alpha$ with $\alpha(-r) > -r$ for all $r > 0$ and a control policy $\mu : \mathcal{O} \to \mathcal{U}$ satisfying 
    \begin{equation} \label{eq: dgcbf}
        \tilde{B}( o_j(\vx) ) \leq 0,\; \forall j \implies \tilde{B}( o_i^+ (\vx) ) - \tilde{B}( o_i(\vx) ) + \alpha( \tilde{B}( o_i(\vx) ) ) \leq 0,
        \quad\forall \vx \in \bm{\mathcal{X}},\; \forall i,
    \end{equation}
    where $o_i^+(\vx) = O_i( f(\vx, \vmu(\vx) ))$ and $\vmu$ denotes the resulting joint policy from each agent using $\mu$.
\end{Definition}

Since a DCBF $B$ is defined for a MAS with a \textit{fixed} size $N$, $B$ can not be used when $N$ changes.
Given a DGCBF $\tilde{B}$, we can construct a DCBF $B$ as $B(\vx) \coloneqq \max_i \tilde{B}(o_i(\vx))$ (\Cref{app: dgcbf_dcbf}),
thus $\tilde{B}$ guarantees safety.
However, the \textit{same} DGCBF $\tilde{B}$ can \textit{also} be used to guarantee safety for \textit{any} $N$, which we show in \Cref{app: dgcbf_safety_proof}.

\begin{Remark}[Discontinuity due to neighborhood changes]
    Unlike the continuous-time case \citep{zhang2024gcbf+}, which assumes that the GCBF is unaffected by agents at the sensing-radius boundary, DGCBF does not have this requirement.
This is because the proof of safety in GCBF relies on continuity (w.r.t. time) during neighborhood changes.
    However, in the discrete-time case, the proof of safety only looks at finite differences and hence does not require the continuity of the DGCBF. 
\end{Remark}

Finding a function $\tilde B$ that satisfies \eqref{eq: dgcbf} is nontrivial, 
especially in the case of neighborhood changes due to the limited sensing radius.
Though we can learn a DGCBF using \Cref{sec: value-function-cbf},
it is unclear how the learned function satisfies \eqref{eq: dgcbf} when the neighborhood of agents changes.
One sufficient way for this to hold is to take advantage of the attention mechanism to place zero weights on the features corresponding to agents far enough away such that the value of $\tilde{B}$ is not affected too much by such agents.

We state this informally in the following theorem (see \Cref{app: pf:attn_dgcbf} for the formal version).
\begin{InformalTheorem}[Satisfying \eqref{eq: dgcbf} during neighborhood changes]\label{thm:attn_dgcbf}
    Let $\xi_1$, $\xi_2$, and $\xi_3$ be functions that encode 
    the input observations into some feature space.
    Suppose $\tilde{B}$ is of the form
    \begin{equation} \label{eq: tilde_B_attn_form}
        \tilde{B}(O_i(\vx)) = \xi_1\left( \sum_{j \in \mathcal{N}_i} w(o_{ij}) \xi_2(o_{ij}),\; \xi_3(o_i^{y}) \right),
    \end{equation}
    where $w : \mathcal{O}_a \to \mathbb{R}$ is a weighting function that goes to $0$ for observations $o_{ij}$ of agents that are far enough away.
    Then, under technical conditions on the dynamics, $\tilde{B}$ satisfies \eqref{eq: dgcbf} and is a DGCBF.
\end{InformalTheorem}

We encourage $\tilde{B}$ to satisfy \eqref{eq: dgcbf} during neighborhood changes by parameterizing the value functions using graph neural networks (GNN) with graph attention \citep{velivckovic2017graph}, which takes the form of \eqref{eq: tilde_B_attn_form} and hence is amenable to \Cref{thm:attn_dgcbf}.
Similar to \citet{zhang2024gcbf+}, the attention mechanism naturally learns to place zero weights on the features corresponding to neighboring agents that are far enough away, enabling the learned $\tilde{B}$ to satisfy \eqref{eq: dgcbf} during neighborhood changes. 

Finally, \Cref{thm:dpcbf} can be extended to DGCBF, as shown in the following Corollary.
\begin{Corollary}[Discrete Policy GCBF] \label{thm:dpgcbf}
    Suppose
$V^{h^{(m)},\vmu}_i : \bm{\mathcal{X}} \to \mathbb{R}$ for agent $i$ can be expressed using only local observations $o_i$, i.e., there exists some $\tilde{V}^{h^{(m)}, \vmu} : \mathcal{O} \to \mathbb{R}$ such that
    \begin{equation}
        \tilde{V}^{h^{(m)},\vmu}(o^0_i) = V^{h^{(m)}, \vmu}_i(\vx^0) \coloneqq \max_{k \geq 0} h^{(m)}(o^k_i).
    \end{equation}
    Then, $\tilde{V}^{h^{(m)}}$ is a DGCBF.
\end{Corollary}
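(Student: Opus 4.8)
The plan is to mimic the proof of \Cref{thm:dpcbf} at the level of a single agent's observation sequence, and then read the DGCBF inequality off the resulting dynamic‑programming recursion. Fix $m$, write $V_i \coloneqq V^{h^{(m)},\vmu}_i$ and $\tilde{V} \coloneqq \tilde{V}^{h^{(m)},\vmu}$, let $\mu : \mathcal{O} \to \mathcal{U}$ be the (common) local policy whose induced joint policy is $\vmu$, and take $\tilde{B} \coloneqq \tilde{V}$ together with this $\mu$ as the witnesses required by \Cref{def: dgcbf}. First I would establish the recursion
\begin{equation*}
    V_i(\vx) = \max\!\big\{\, h^{(m)}(O_i(\vx)),\; V_i\big( f(\vx, \vmu(\vx)) \big) \,\big\}, \qquad \forall \vx \in \bm{\mathcal{X}},
\end{equation*}
by splitting $\max_{k \geq 0} h^{(m)}(o^k_i)$ into the $k = 0$ term and the tail $\max_{k \geq 1} h^{(m)}(o^k_i)$, and observing that the tail equals $V_i$ evaluated at the successor state $f(\vx, \vmu(\vx))$, since $(\vx^k)_{k \geq 1}$ is precisely the $\vmu$-rollout started from $f(\vx,\vmu(\vx))$ (this is where stationarity/Markovianity of $\vmu$ and the state-dependence of $O_i$ are used).

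Next I would push this through the corollary's hypothesis. Since $V_i$ depends on the joint state only through agent $i$'s observation \emph{for every} initial joint state, we have $V_i(\vx') = \tilde{V}(o_i(\vx'))$ for all $\vx'$, in particular at $\vx' = f(\vx, \vmu(\vx))$. Writing $o_i = o_i(\vx)$ and $o_i^+ = o_i^+(\vx) = O_i(f(\vx,\vmu(\vx)))$, the recursion becomes $\tilde{V}(o_i) = \max\{\, h^{(m)}(o_i),\; \tilde{V}(o_i^+) \,\}$. Two facts follow: (i) $\tilde{V}(o_i^+) \leq \tilde{V}(o_i)$, hence $\tilde{V}(o_i^+) - \tilde{V}(o_i) \leq 0$; and (ii) since $\alpha$ is an (extended) class-$\kappa$ function with $\alpha(0) = 0$ and strictly increasing, $\tilde{V}(o_i) \leq 0$ implies $\alpha(\tilde{V}(o_i)) \leq 0$. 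Note that (i)–(ii) hold for any such $\alpha$, in particular one satisfying $\alpha(-r) > -r$ for all $r > 0$ as demanded by \Cref{def: dgcbf}.

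Finally, to verify \eqref{eq: dgcbf}, suppose $\tilde{B}(o_j(\vx)) \leq 0$ for all $j$; in particular $\tilde{B}(o_i(\vx)) = \tilde{V}(o_i) \leq 0$ (only the $j = i$ instance is actually needed). Then
\begin{equation*}
    \tilde{B}\big(o_i^+(\vx)\big) - \tilde{B}\big(o_i(\vx)\big) + \alpha\big( \tilde{B}(o_i(\vx)) \big)
    = \underbrace{\big( \tilde{V}(o_i^+) - \tilde{V}(o_i) \big)}_{\leq 0} + \underbrace{\alpha\big( \tilde{V}(o_i) \big)}_{\leq 0} \;\leq\; 0 ,
\end{equation*}
by (i) and (ii). As $\vx$ and $i$ were arbitrary, $\tilde{B} = \tilde{V}^{h^{(m)},\vmu}$ satisfies \Cref{def: dgcbf} with policy $\mu$ and this $\alpha$, so it is a DGCBF; the argument applies verbatim for each $m \in \{1,\dots,M\}$. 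I expect the only delicate point to be the recursion step — one must check carefully that the tail $\max_{k \geq 1} h^{(m)}(o^k_i)$ is genuinely $V_i$ at the successor \emph{joint} state, and that the expressibility hypothesis is being invoked at that successor state and not merely at the nominal initial state. Everything past that is the same "both terms are non-positive" observation underlying \Cref{thm:dpcbf}, now localized to a single agent, so the corollary follows with essentially no additional work.
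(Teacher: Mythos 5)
Your proposal is correct and follows essentially the same route as the paper: the paper's proof simply applies \Cref{thm:dpcbf} to the per-agent constraint-value function $V^{h^{(m)},\vmu}_i$ and uses the expressibility hypothesis to rewrite the resulting inequality in terms of $\tilde{V}^{h^{(m)},\vmu}$ and the observations $o_i, o_i^+$, whereas you inline the proof of \Cref{thm:dpcbf} (the dynamic-programming recursion and the ``both terms are non-positive'' case analysis) at the single-agent level. The one delicate point you flag — that the expressibility hypothesis must hold at the successor joint state, not just the nominal initial state — is handled identically (and implicitly) in the paper's proof, so no additional work is needed.
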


\begin{figure}[t]
    \centering
    \vspace{-1.5ex}
    \includegraphics[width=.9\columnwidth]{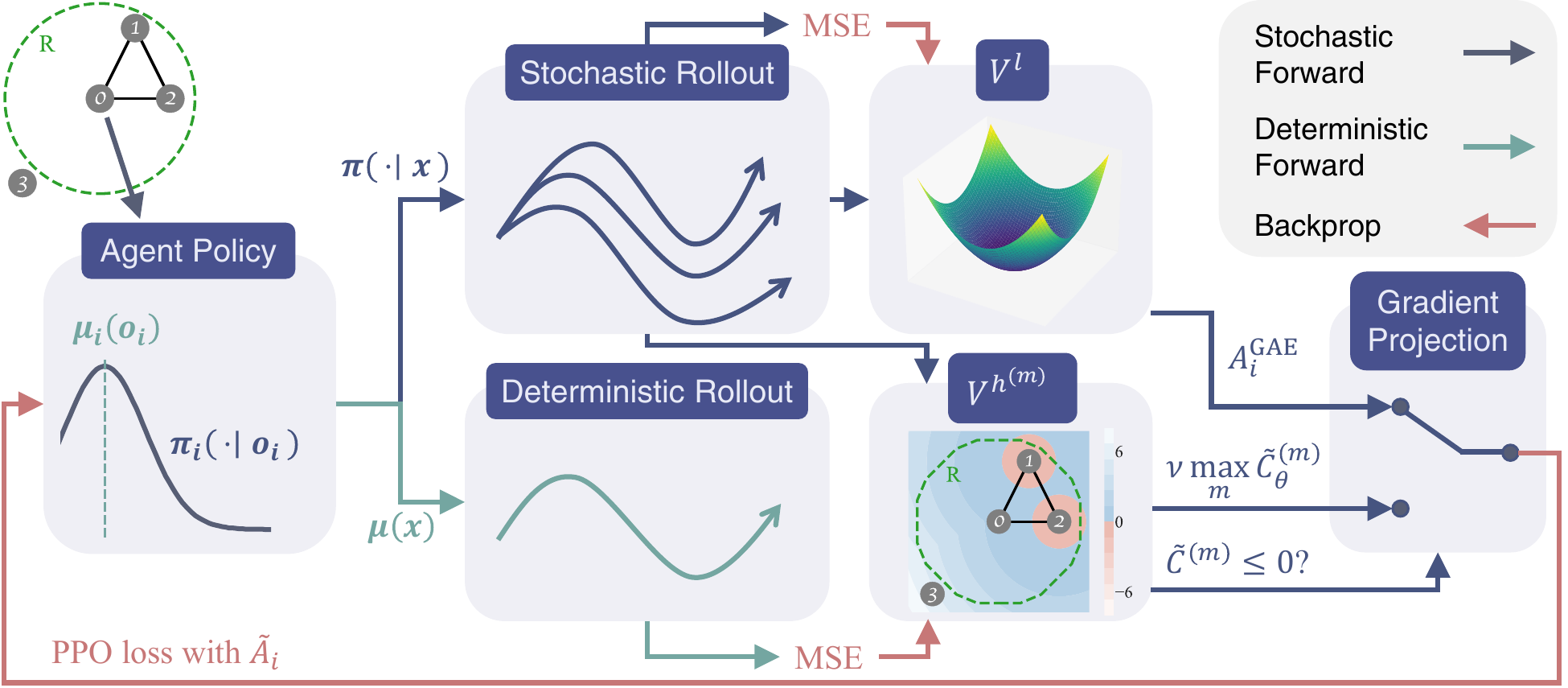}
    \vspace*{-1.5ex}
    \caption{\textbf{\algname{} algorithm.} In addition to the normal MAPPO path (top) using \textcolor{colorStoch}{\textbf{stochastic rollouts}}, we introduce a second path (bottom) that uses \textcolor{colorDet}{\textbf{deterministic rollouts}} to learn a DGCBF.
    }
    \label{fig: algorithm}
\vspace*{-2ex}
\end{figure}
\subsection{DGPPO: Putting everything together} \label{sec: final_method}
Combining the proposed solutions from the previous subsections, we present \textbf{\algname{}}, a framework for solving the discrete-time multi-agent safe optimal control problem \eqref{eq: macocp}.
\algname{} follows the basic structure of on-policy MARL algorithms such as MAPPO \citep{yu2022surprising}.
\begin{enumerate}[leftmargin=1em, topsep=0pt,itemsep=-0.9ex,partopsep=1ex,parsep=1ex]
    \item We perform a $T$-step stochastic rollout with the policy $\vpi_\theta$. However, unlike MAPPO, we additionally perform a $T$-step \textit{deterministic} rollout using a deterministic version of $\vpi_\theta$ (by taking the mode), which we denote $\vmu$, to learn the DGCBF (per \Cref{thm:dpcbf}).
\item We update the value functions via regression on the corresponding targets computed using GAE \citep{schulman2015high}, where the targets for 
    the cost-value function $V^l$ uses the stochastic rollout, and the targets for the constraint-value functions $V^{h^{(m)},\vmu}$ use the deterministic rollout.
\item We update the policy $\vpi_\theta$ by replacing the $Q$-function with its GAE \citep{schulman2015high}, then combining the CRPO-style \textit{decoupled} policy loss \eqref{eq: dcbf_safety_filter:decoupled} with the PPO clipped loss \citep{schulman2017proximal} using the learned constraint-value functions $V^{h^{(m)},\vmu}$ as the DGCBFs $\tilde{B}^{(m)}$.
    Specifically, we treat the expression within the expectation in \eqref{eq: dcbf_safety_filter:decoupled} as a pseudo-advantage $\tilde{A}_i$ for agent $i$
    and use a single-sample estimator $\hat{C}^{(m)}_{\theta,i}$ of $\tilde{C}^{(m)}_{\theta,i}$ in \eqref{eq: dcbf_safety_filter:decoupled}, 
    giving us
\begin{align}
\hat{C}^{(m)}_{\theta,i} &\coloneqq \max\left\{ 0,\, V^{h^{(m)},\vmu}(o_i^+) - V^{h^{(m)},\vmu}(o_i) + \alpha(V^{h^{(m)},\vmu}(o_i)) \right\}, \\
\tilde{A}_i & \coloneqq A^{\mathrm{GAE}} \ind{\max_m \hat{C}_{\theta,i}^{(m)} \leq 0} + 
        \nu \max_m \hat{C}_{\theta,i}^{(m)}
        \ind{\max_m \hat{C}_{\theta,i}^{(m)} > 0}
\end{align}
where $A^{\mathrm{GAE}}$ denotes the GAE \citep{schulman2015high} for agent $i$.
    We then use $\tilde{A}_i$ in the PPO policy loss \citep{schulman2017proximal} as done in MAPPO \citep{yu2022surprising}.
\end{enumerate}
We summarize our \algname{} algorithm in \Cref{fig: algorithm}.

\section{Experiments}\label{sec: experiments}

In this section, we design experiments to answer the following research questions:
\textbf{(Q1)} Does \algname{} learn a safe policy that also achieves low costs without hyperparameter tuning in different environments?
\textbf{(Q2)} How stable is the training of \algname{}?
\textbf{(Q3)} Can \algname{} maintain its performance with an increasing number of agents?
\textbf{(Q4)} Is \algname{} sensitive to the hyperparameters?

To compare the methods, we look at the cost and safety rate.
The \textbf{cost} is the trajectory cumulative cost $\sum_{k=0}^{T}l(\vx^k,\vu^k)$. The \textbf{safety rate} is the ratio of agents that are safe over the \textit{entire} trajectory. 
Details on implementation, tasks, hyperparameters, and additional experiments are in \Cref{app: experiments}.

\begin{figure}[t]
    \centering
    {
    \newcommand{\fwidth}{.22\linewidth}
    \newcommand{\fwidthA}{.02\linewidth}
    \newcommand{\fwidthB}{.04\linewidth}
    \begin{subfigure}{\fwidth{}}
        \centering
        \includegraphics[width=\columnwidth]{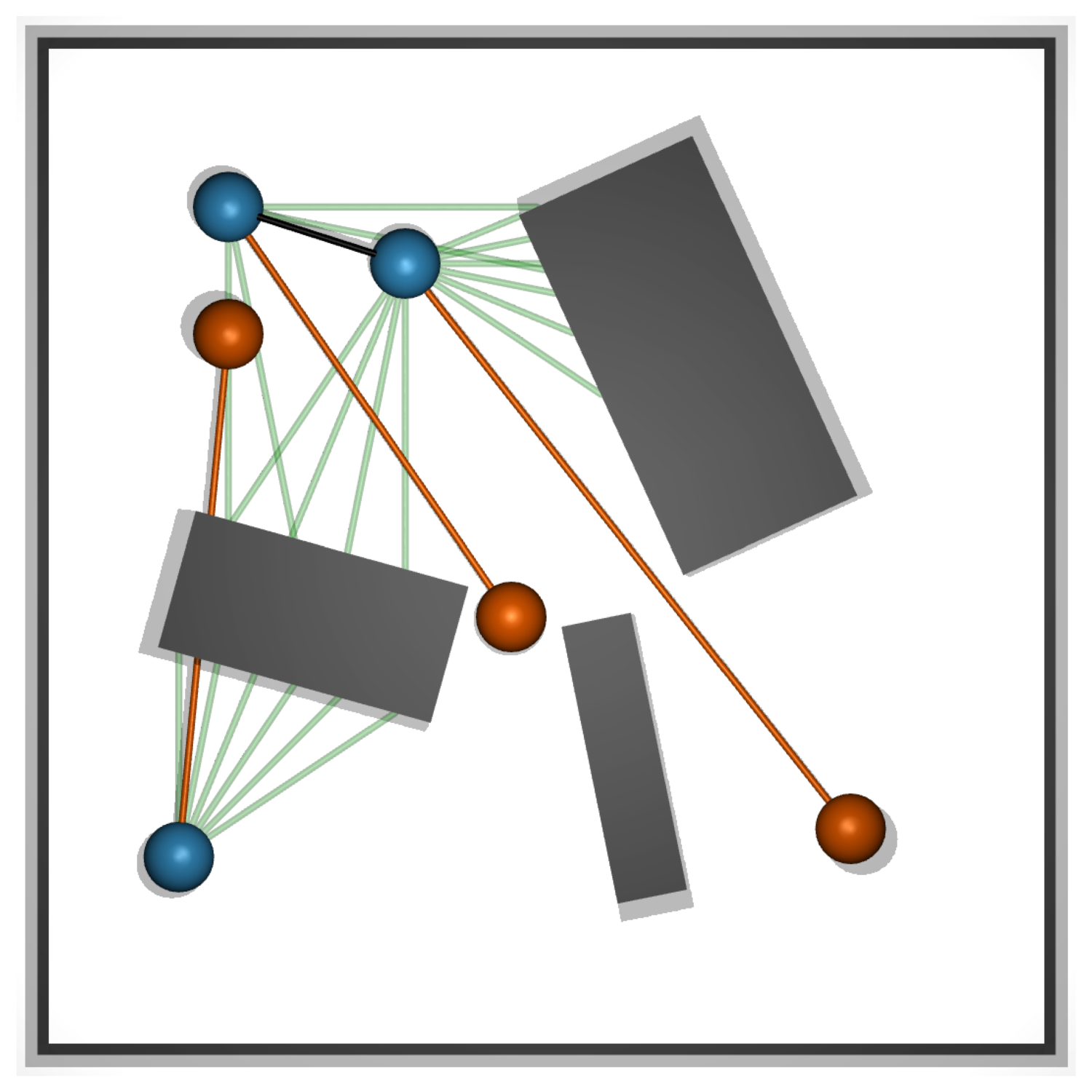}
        \vspace*{-4ex}
        \caption{\small\textsc{Target}}
        \label{fig: lidarnav}
    \end{subfigure}\hspace{\fwidthA{}}\begin{subfigure}{\fwidth{}}
        \centering
        \includegraphics[width=\columnwidth]{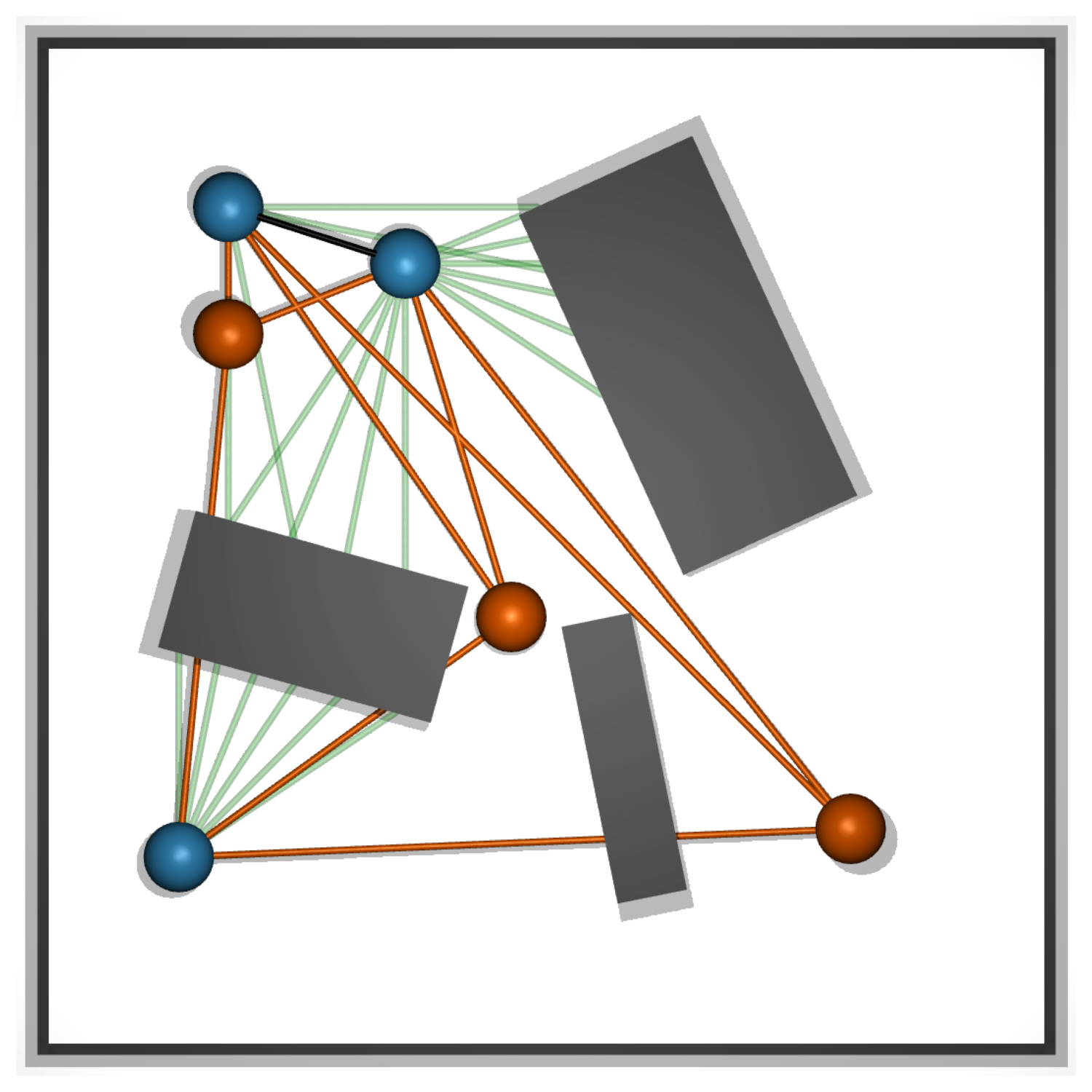}
        \vspace*{-4ex}
        \caption{\small\textsc{Spread}}
        \label{fig: lidarspread}
    \end{subfigure}\hspace{\fwidthA{}}\begin{subfigure}{\fwidth{}}
        \centering
        \includegraphics[width=\columnwidth]{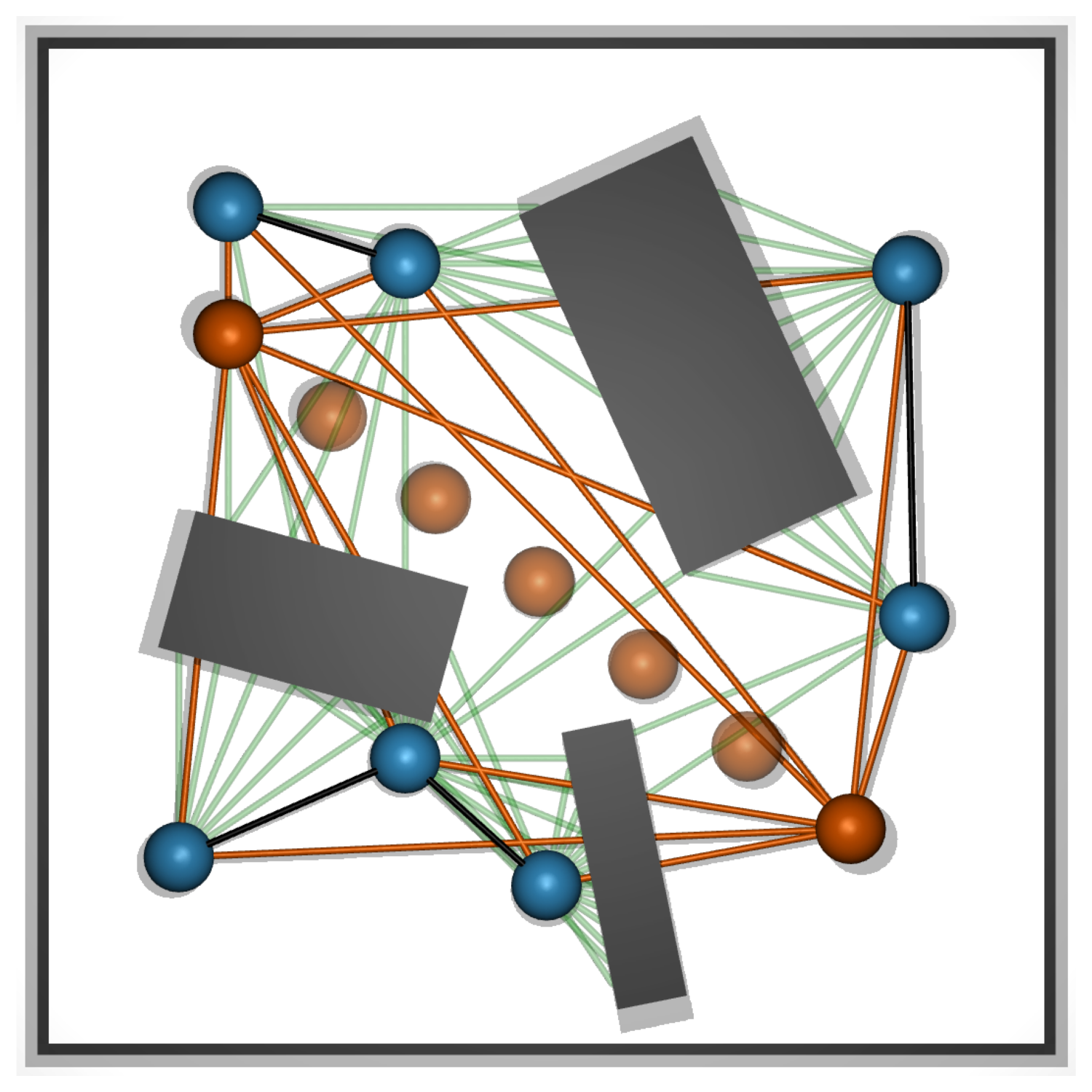}
        \vspace*{-4ex}
        \caption{\small\textsc{Line}}
        \label{fig: lidarline}
    \end{subfigure}\hspace{\fwidthA{}}\begin{subfigure}{\fwidth{}}
        \centering
        \includegraphics[width=\columnwidth]{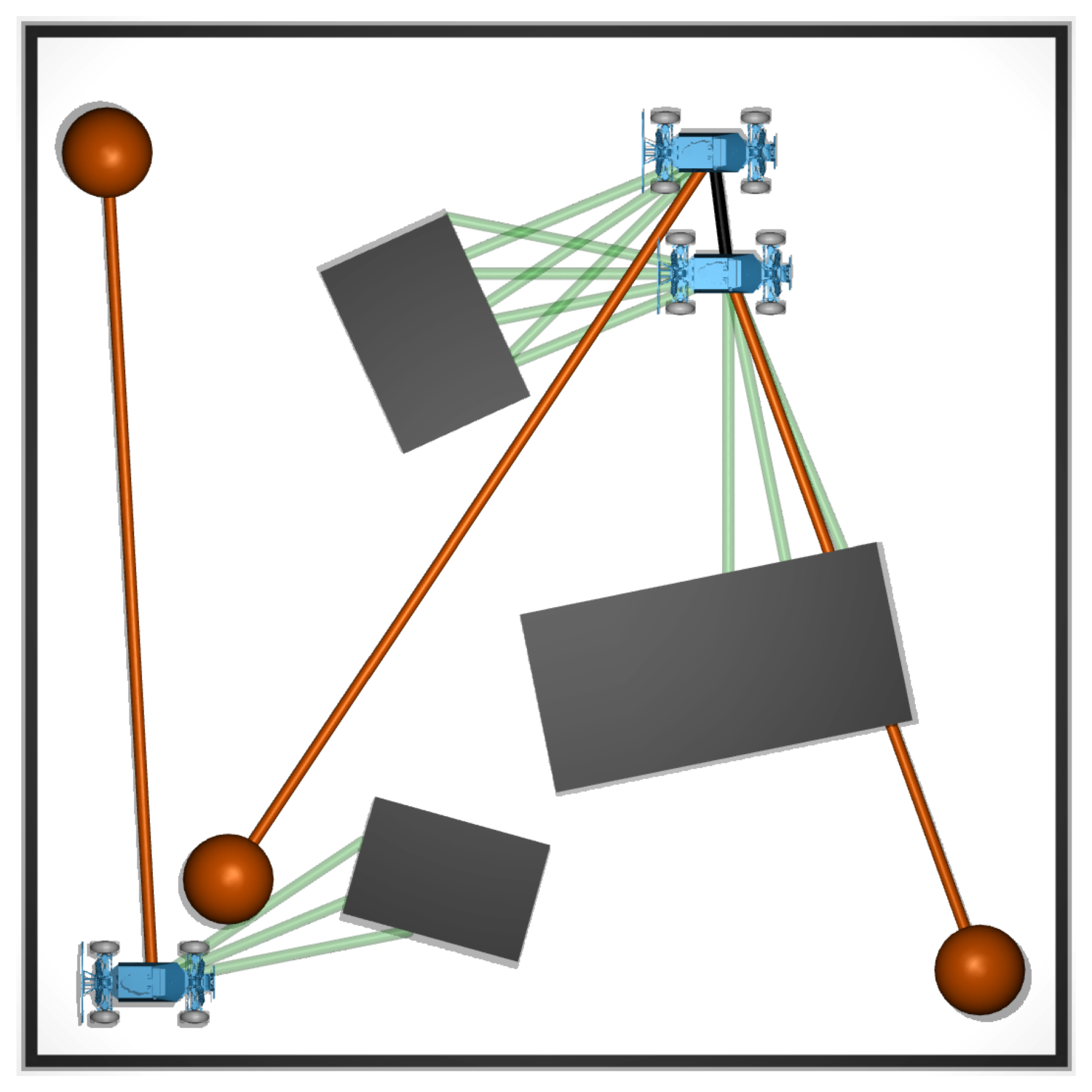}
        \vspace*{-4ex}
        \caption{\small\textsc{Bicycle}}
        \label{fig: lidarbicycle}
    \end{subfigure}

    \vspace{0.9ex}
    \begin{subfigure}{\fwidth{}}
        \centering
        \includegraphics[width=\columnwidth]{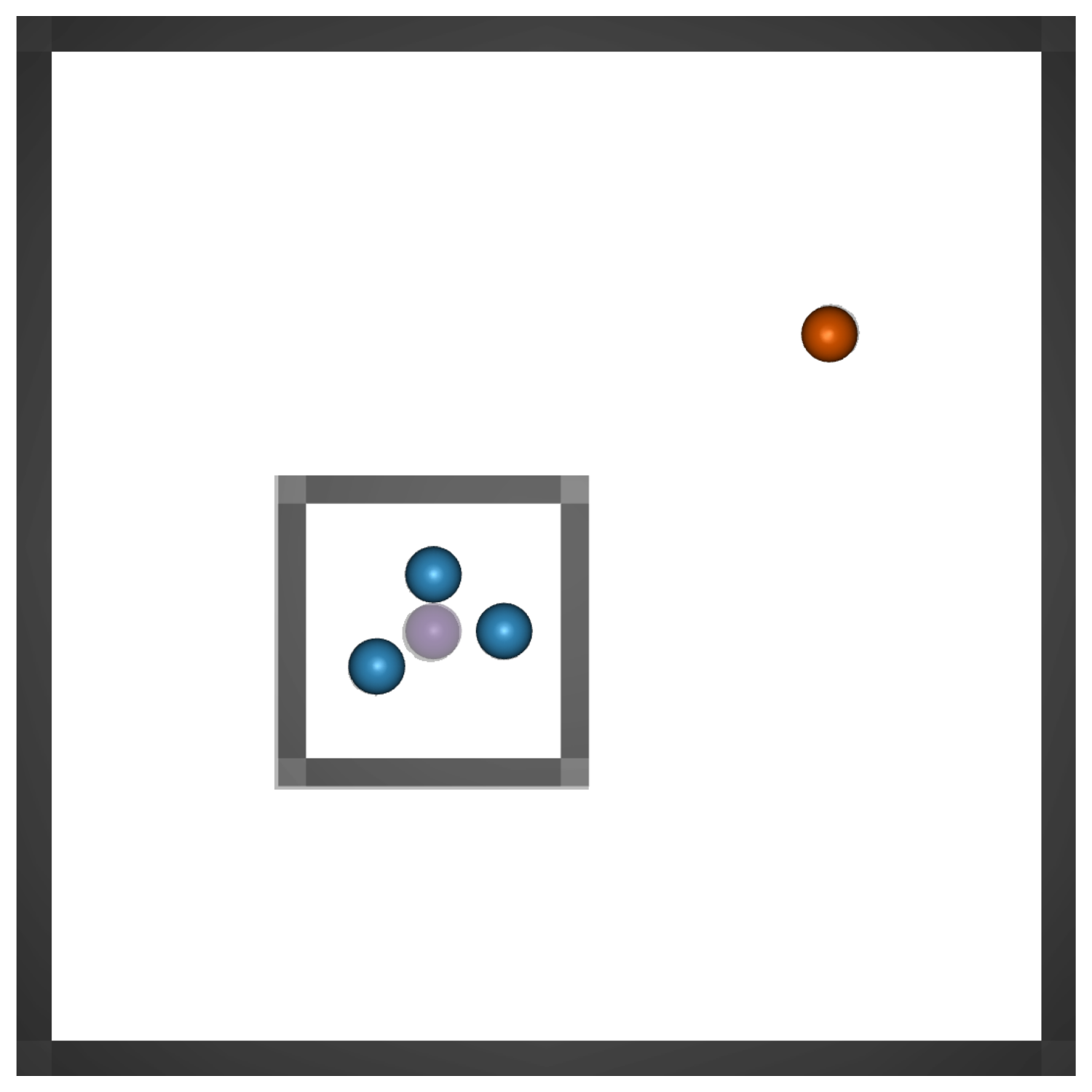}
        \vspace*{-3.1ex}
        \caption{\small\textsc{Transport}}
        \label{fig: transport}
    \end{subfigure}\hspace{\fwidthB{}}\begin{subfigure}{\fwidth{}}
        \centering
        \includegraphics[width=0.98\columnwidth]{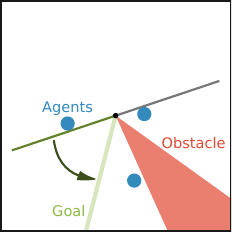}
        \vspace*{-3.1ex}
        \caption{\small\textsc{Wheel}}
        \label{fig: wheel}
    \end{subfigure}\hspace{\fwidthB{}}\begin{subfigure}{\fwidth{}}
        \centering
        \includegraphics[width=0.98\columnwidth]{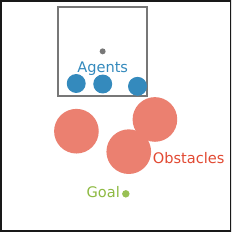}
        \vspace*{-3.1ex}
        \caption{\small\textsc{Transport2}}
        \label{fig: transport2}
    \end{subfigure}
    }
    \vspace*{-1.5ex}
    \caption{\textbf{Environments. } We test on (top) LiDAR, (bottom) MuJoCo, and VMAS environments.}
    \vspace*{-.4ex}
\end{figure}

\subsection{Setup}

\noindent\textbf{Environments. }
We evaluate \algname{} in a wide range of environments including four LiDAR environments (\textsc{Target}, \textsc{Spread}, \textsc{Line}, \textsc{Bicycle}) where the agents use LiDAR to detect obstacles \citep{keyumarsi2023lidar}, one MuJoCo environment \textsc{Transport} \citep{mujoco}, and two VMAS environments (\textsc{Transport2}, \textsc{Wheel}) \citep{bettini2022vmas,bettini2024benchmarl}.

\noindent\textbf{Baselines. }
We compare \ourname{} against baseline methods that can solve MASOCP under unknown discrete-time dynamics, including the state-of-the-art MARL algorithm InforMARL \citep{nayak2023scalable} and the safe MARL algorithm MAPPO-Lagrangian \citep{gu2021multi,gu2023safe}. 
For InforMARL, we add the constraint violations $\max\{0, \max_m h^{(m)} \}$ weighted by $\beta$ to the cost function for different $\beta$s (\baselinename{Penalty($\beta$)}).
We also try a weight-scheduling scheme where $\beta$ starts at $0.01$ and increases at $50\%$ and $75\%$ of the total steps (\baselinename{Schedule}).
For MAPPO-Lagrangian, we use a GNN backbone for fair comparison.
We notice the official implementation \citep{gu2023safe} uses a tiny learning rate on the Lagrange multipliers ($10^{-7}$),
so we consider two different initialization $\lambda_0$ (\baselinename{Lagr($\lambda_0$)}).
We also increase the learning rate of $\lambda$ to $0.1$\footnote{This is the smallest learning rate of $\lambda$ that does not make the algorithm ignore the safety constraints.} (\baselinename{Lagr(lr)}). 
We run each method for the same number of update steps, chosen to be large enough such that all methods converge.\footnote{We also test baselines with double environment steps similar to \algname{} for fairness (\Cref{app: more-data}).}

\begin{figure}[t]
    \centering
    \includegraphics[width=.995\columnwidth]{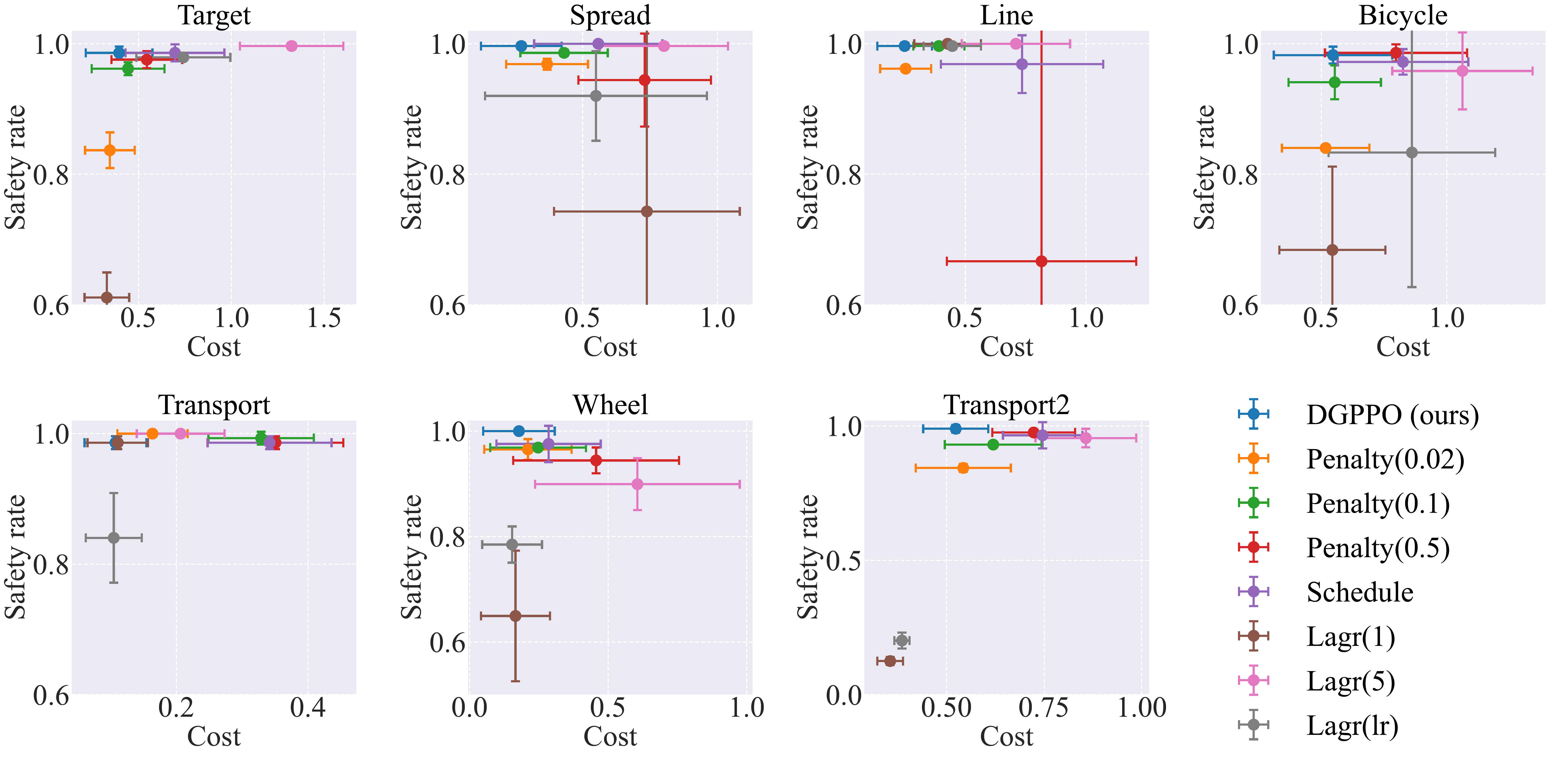}
    \vspace*{-.4em}
\caption{\textbf{Comparison on $N = 3$ agents.}
    \protect\includegraphics[height=1.05\ucht]{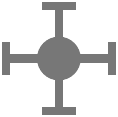} denotes the mean $\pm$ standard deviation.
    Methods closer to the top left yield lower costs and higher safety rates.}
\label{fig: safe_cost_main}
\end{figure}

\subsection{Main results}\label{sec: results}
For each environment, we run each algorithm with $3$ different seeds and evaluate each run on $32$ different initial conditions.
We draw the following conclusions. 

\begin{figure}[t]
    \centering
    \includegraphics[width=0.995\columnwidth]{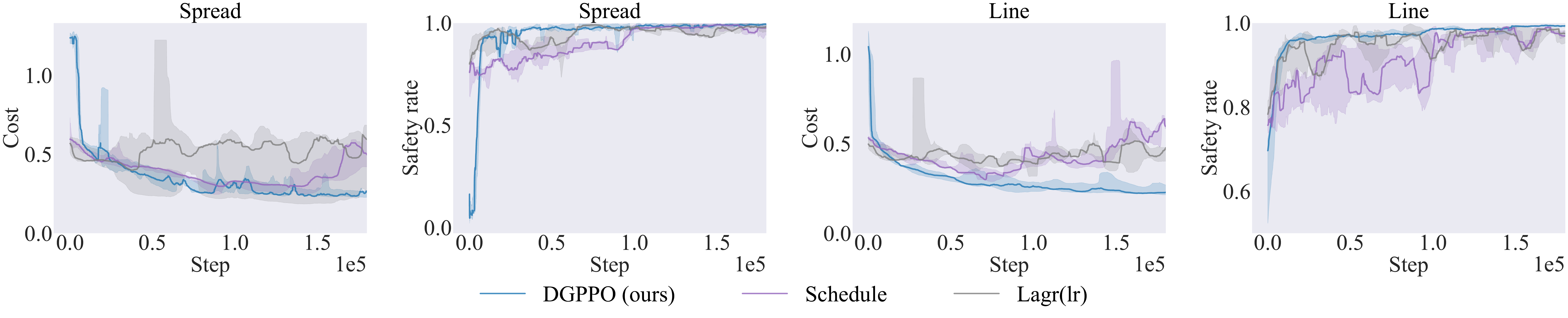}
    \vspace*{-0.8em}
    \caption{\textbf{Training stability.} \ourname{} yields smoother training curves compared to the baselines.}
    \label{fig: train_main}
    \vspace{-.4em}
\end{figure}

\noindent\textbf{(Q1): \ourname{} has the best performance and is hyperparameter insensitive. } 
We first compare the converged policies of all algorithms (\Cref{fig: safe_cost_main}).
\ourname{} is closest to the top left corner in all environments, indicating that it performs the best.
For \baselinename{Penalty} and \baselinename{Lagr},
different choices of hyperparameters result in either focusing only on safety or focusing only on performance.
Even with a fixed hyperparameter, the performance of these two baselines also varies between environments.
On the other hand, using the same set of hyperparameters in all environments, \ourname{} consistently achieves the lowest cost among methods with a safety rate close to $100\%$. 

\noindent\textbf{(Q2): Training of \ourname{} is more stable. }
Next, we compare the training stability of \ourname{} with \baselinename{Schedule}, due to having a weight scheduled, and \baselinename{Lagr(lr)}, due to having a non-negligible learning rate (\Cref{fig: train_main}).
\baselinename{Schedule} experiences an increase in cost as $\beta$ increases throughout training. \baselinename{Lagr(lr)} experiences high variance and many spikes in both cost and safety rate throughout training. This is similar to previous results obtained in the single-agent when the cost threshold is zero \citep{so2023solving,he2023autocost}.
\ourname{} has a much smoother training curve than both.
We provide training curves on other algorithms and environments in \Cref{app: training-curve}. 

\noindent\textbf{(Q3): \ourname{} scales well with more agents. }
Finally, we test the scalability of the methods on \textsc{Line} by increasing the $N$ from $3$ to $5$ and $7$ (\Cref{fig: train_increase_n}).
The same trends from before still hold, with \ourname{} achieving the best performance and high safety rates.
We also see that \ourname{} performs well even with more agents, but the baseline methods are more inconsistent (e.g., \baselinename{Schedule} is mostly safe with $N=5$ but not so for $N=7$), possibly due to their hyperparameter sensitivity.

\begin{figure}[t]
    \centering
    \includegraphics[width=0.995\columnwidth]{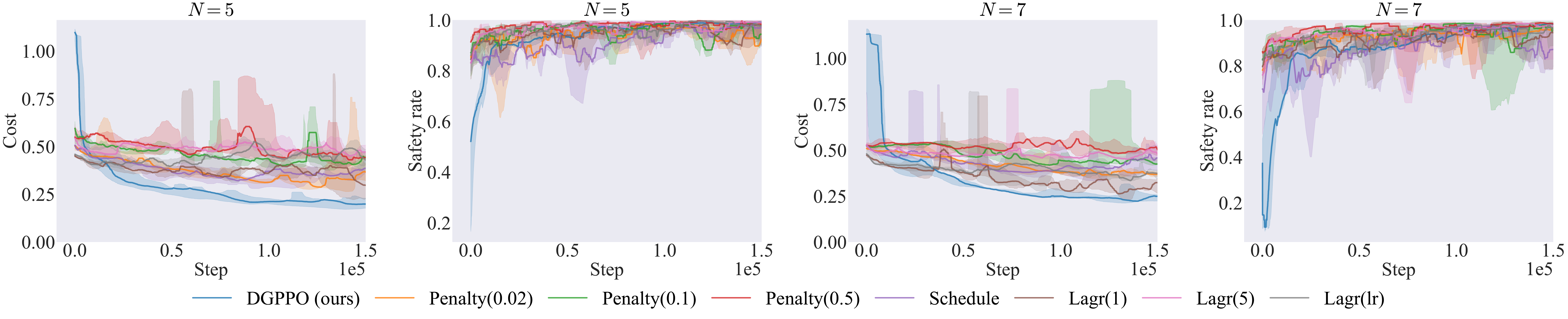}
    \vspace*{-0.8em}
    \caption{\textbf{Scaling to $N=5, 7$.} Unlike other methods, \ourname{} performs similarly with more agents.}
    \label{fig: train_increase_n}
    \vspace{-1em}
\end{figure}

\begin{figure}[t]
    \centering
    \begin{subfigure}{.495\textwidth}
        \centering
        \includegraphics[width=\columnwidth]{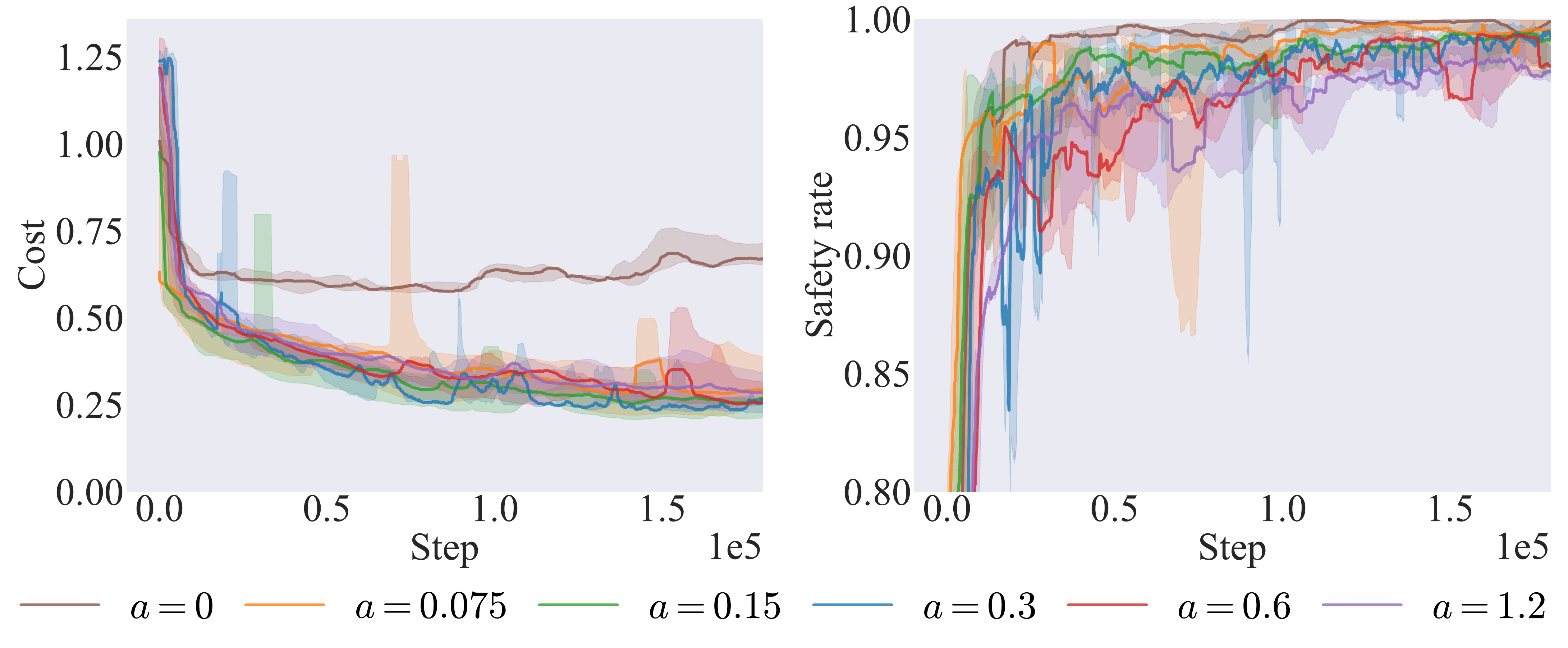}
        \vspace*{-1.7em}
        \caption{Varying $\alpha$ in \algname{}.}
        \label{fig: train_alpha}
    \end{subfigure}
    \begin{subfigure}{.495\textwidth}
        \centering
        \includegraphics[width=\columnwidth]{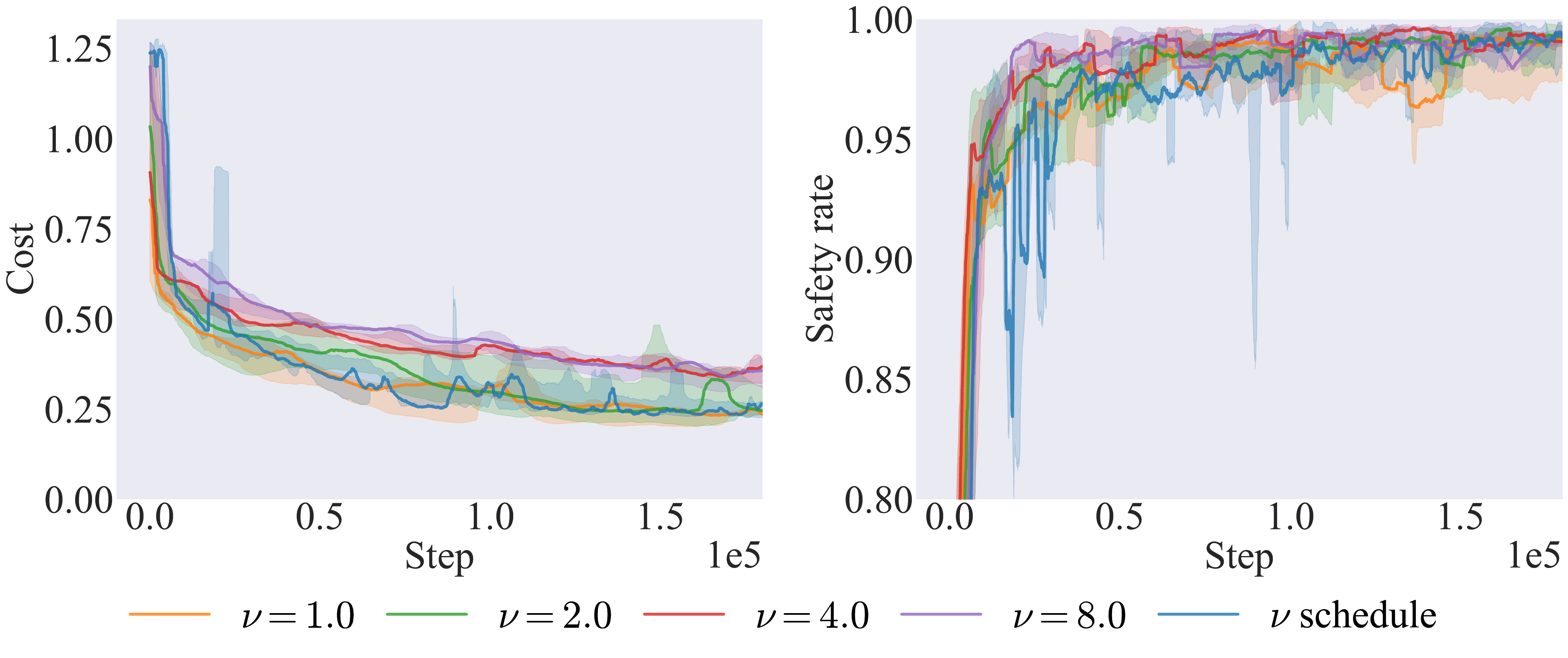}
        \vspace*{-1.7em}
        \caption{Varying $\nu$ in \algname{}.}
        \label{fig: train_nu}
    \end{subfigure}

    \vspace{.2em}
    \begin{subfigure}{.495\textwidth}
        \centering
        \includegraphics[width=\columnwidth]{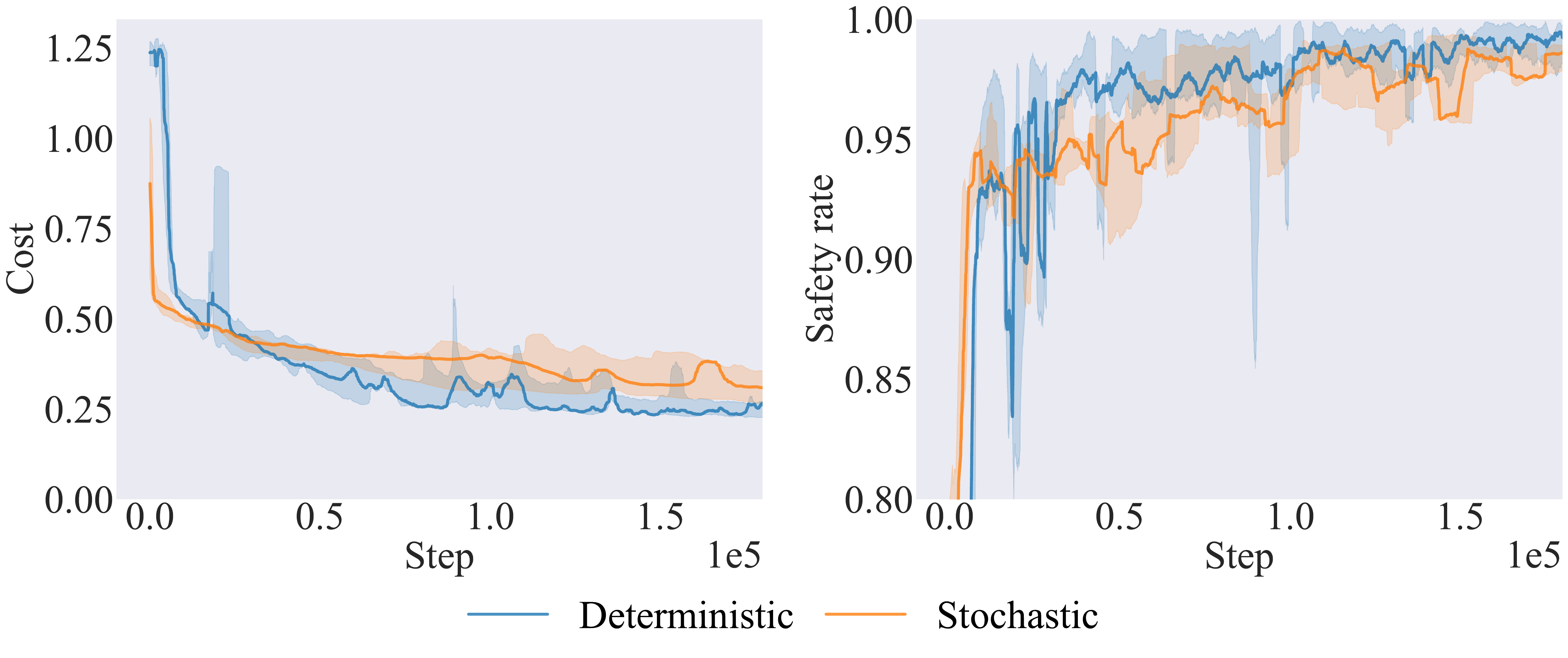}
        \vspace{-1.7em}
\caption{The necessity of deterministic rollouts to learn $V^h$.}
        \label{fig: train_vh_stochastic}
        \vspace{-.5em}
    \end{subfigure}
    \begin{subfigure}{.495\textwidth}
        \centering
        \includegraphics[width=\columnwidth]{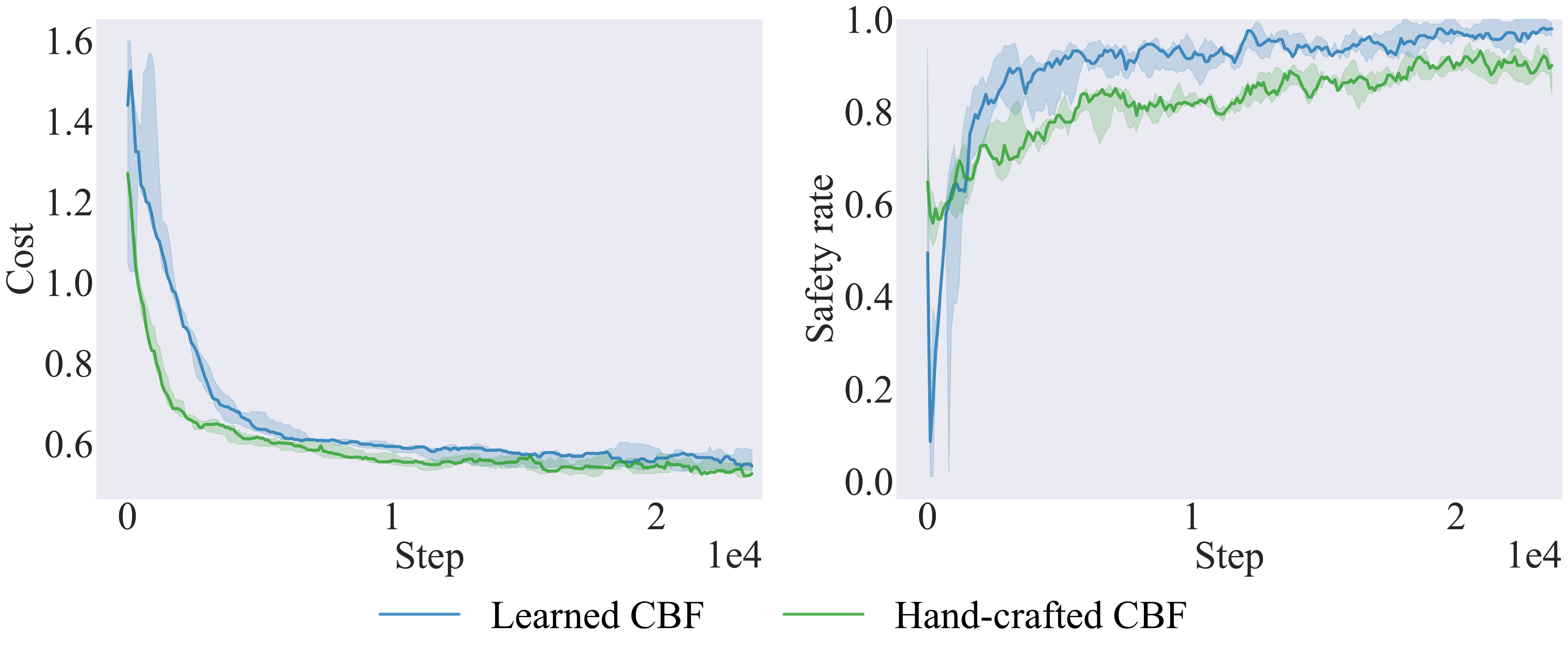}
        \vspace{-1.7em}
\caption{The necessity of learning a CBF.}
        \label{fig: train_hcbf}
        \vspace{-.5em}
    \end{subfigure}
    \caption{\textbf{Ablations.} We vary hyperparameters (top) and verify our design decisions (bottom).}
    \vspace{-1em}
\end{figure}

\subsection{Ablation studies}\label{sec: ablation}

We now study hyperparameter sensitivity \textbf{(Q4)} by varying different hyperparameters in \ourname{}.

\noindent\textbf{Class-$\kappa$ function $\alpha$. }
For the class-$\kappa$ function in \eqref{eq: dgcbf}, we use a linear $\alpha(r)=a r$ with $a=0.3$.
We test the sensitivity of \ourname{} to this by varying $a$ (\Cref{fig: train_alpha}) on \textsc{Spread}.
We observe that $a=0$ leads to conservative behavior with a high cost. $a=1.2$ leads to an unsafe policy, which is to be expected since the $\alpha(-r) > -r$ condition is violated.
For the other values that satisfy this condition, there is no significant difference in either cost or safety. We can thus choose any $a \in (0, 1)$.

\noindent\textbf{Weight $\nu$ on the gradient of $\tilde C$. }
We introduced a schedule for $\nu$, which weights the constraint minimization step (\Cref{sec: cbf-model-free}). We test the sensitivity to $\nu$ on \textsc{Spread} with different static schedules (\Cref{fig: train_nu}).
The safety rate is lower with $\nu=1$, while the convergence in cost for $\nu=4, 8$ is slower.
The proposed schedule leads to faster cost convergence and a high safety rate.

\noindent\textbf{Learning $V^h$ with a stochastic policy. }
\Cref{thm:dpcbf}, used to learn the DGCBF $V^{h^{(m)}, \vmu}$, requires \textit{deterministic} rollouts.
Consequently, \ourname{} uses double the environment samples by performing both a stochastic and deterministic rollout.
We verify whether this is necessary by seeing how the type of rollout (deterministic vs stochastic) used to learn the DGCBF affects performance (\Cref{fig: train_vh_stochastic}),
which shows that using a stochastic rollout degrades both the cost and safety rate.
Thus, the use of a deterministic rollout to learn $V^{h^{(m)}, \vmu}$ is necessary despite the increased data use.

\noindent\textbf{Using a hand-crafted DGCBF. }
One motivation for \ourname{} is that it is difficult to construct a DGCBF with changing neighborhoods and input constraints.
We test this by using $h^{(m)}$ \textit{directly} as the DGCBF (instead of the learned $V^{h^{(m)}, \vmu}$,), as is commonly done for CBFs, on \textsc{Transport2} (\Cref{fig: train_hcbf}).
Using this hand-crafted ``DGCBF'' results in a decreased safety rate ($\sim\!15\%$), validating the need to learn a DGCBF. If no DGCBF is used, it performs even worse ({\Cref{app: no-cbf}}).

\section{Conclusion}\label{sec: conclusion}
We propose \algname{} to learn distributed safe policies for discrete-time MAS with unknown dynamics under a limited sensing range.
We extend CBFs to this problem setting with DGCBFs,
propose a construction using constraint-value functions,
and apply CBFs to the case of unknown dynamics using score function gradients.
Experimental results across three simulation engines suggest that \algname{} is robust to hyperparameters and performs well, achieving a safety rate matching conservative baselines while matching the performance of the performant but unsafe baselines.

\noindent\textbf{Limitations. }
\algname{} uses both stochastic and deterministic rollouts,
decreasing the sample efficiency.
Moreover, safety under stochastic dynamics has not been considered.
Finally, although safety is guaranteed when the DGCBF constraints are satisfied at \textit{all} states, achieving this in practice using learning is hard.
We leave these limitations to future work.

\section*{acknowledgement}
This work was partly supported by the Under Secretary of Defense for Research and Engineering under Air Force Contract No. FA8702-15-D-0001. In addition, Zhang, So, and Fan are supported by the MIT-DSTA program. 
Any opinions, findings, conclusions, or recommendations expressed in this publication are those of the authors and don’t necessarily reflect the views of the sponsors.

© 2025 Massachusetts Institute of Technology.

Delivered to the U.S. Government with Unlimited Rights, as defined in DFARS Part 252.227-7013 or 7014 (Feb 2014). Notwithstanding any copyright notice, U.S. Government rights in this work are defined by DFARS 252.227-7013 or DFARS 252.227-7014 as detailed above. Use of this work other than as specifically authorized by the U.S. Government may violate any copyrights that exist in this work.

\bibliography{iclr2025_conference}
\bibliographystyle{iclr2025_conference}

\newpage
\appendix

\setcounter{Theorem}{0}
\setcounter{Lemma}{0}

\renewcommand{\theTheorem}{A\arabic{Theorem}}

\section{Proofs} \label{app: proofs}

\subsection{Proof of \texorpdfstring{\Cref{thm:dpcbf}}{Theorem~\ref{thm:dpcbf}}}\label{app: pf1}

\begin{proof}
    Dynamic programming on $V^{h,\mu}$ gives
    \begin{equation}
        V^{h,\mu}(x) = \max\Big\{ h(x),\; V^{h,\mu}(f(x, \mu(x))) \Big\}.
    \end{equation}
    Let $V^{h,\mu}(x)\leq 0$. This gives us two cases depending on which argument in the $\max$ is larger.

    \noindent\textbf{Case 1: $h(x) \leq V^{h,\mu}(f(x, \mu(x)))$: }
    Here, we have that $V^{h,\mu}(x) = V^{h,\mu}( f(x, \mu(x) ) )$, which implies
    \begin{equation}
        V^{h, \mu}(f(x, \mu(x))) - V^{h,\mu}(x) = 0 \leq 0 \leq -\alpha(V^{h, \mu}(x)).
    \end{equation}

    \noindent\textbf{Case 2: $h(x) > V^{h,\mu}(f(x, \mu(x)))$: }
    Here, we have that $V^{h,\mu}(x) = h(x) > V^{h,\mu}( f(x, \mu(x)) )$, which implies
    \begin{equation}
        V^{h,\mu}( f(x, \mu(x) ) ) - V^{h,\mu}(x) < 0 \leq 0\leq -\alpha(V^{h, \mu}(x)).
    \end{equation}

    Thus, $V^{h, \mu}(f(x, \mu(x))) - V^{h,\mu}(x) + \alpha(V^{h, \mu}(x)) \leq 0$ if $V^{h,\mu}(x)\leq 0$, and $V^{h,\mu}$ is a DCBF. 
\end{proof}

\subsection{Proof that \texorpdfstring{\Cref{eq: dcbf_safety_filter:stoch_ver2:constraint}}{Equation \eqref{eq: dcbf_safety_filter:stoch_ver2:constraint} } implies $C^{(m)}(\vx, \vu) \leq 0$ almost surely} \label{app: pf:almost_sure}

\begin{Theorem}
    Suppose
    \begin{equation} \label{eq: almost_sure_thing}
      \E_{\vx\sim\rho^{\vpi_\theta}}\underbrace{\E_{\vu\sim\vpi_\theta(\cdot \mid \vx)} \left[ \max\big\{0, C^{(m)}(\vx, \vu) \big\} \right]}_{\coloneqq \tilde{C}^{(m)}_\theta(\vx)} \leq 0.
    \end{equation}
    Then, $C^{(m)}(\vx, \vu) \leq 0$ almost surely.
\end{Theorem}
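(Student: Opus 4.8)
The plan is to show that the non-negative random variable $Y \coloneqq \max\{0, C^{(m)}(\vx, \vu)\}$ — where $(\vx, \vu)$ is drawn by first sampling $\vx \sim \rho^{\vpi_\theta}$ and then $\vu \sim \vpi_\theta(\cdot\mid\vx)$ — has expectation zero, and then invoke the elementary fact that a non-negative random variable with zero expectation is zero almost surely.

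First I would observe that $\tilde{C}^{(m)}_\theta(\vx) = \E_{\vu \sim \vpi_\theta(\cdot\mid\vx)}[\max\{0, C^{(m)}(\vx,\vu)\}] \geq 0$ pointwise, since it is an expectation of the non-negative quantity $\max\{0, C^{(m)}(\vx,\vu)\}$. Then the hypothesis \eqref{eq: almost_sure_thing} says $\E_{\vx \sim \rho^{\vpi_\theta}}[\tilde{C}^{(m)}_\theta(\vx)] \leq 0$. Combining these two facts, the non-negative function $\vx \mapsto \tilde{C}^{(m)}_\theta(\vx)$ has non-positive integral against $\rho^{\vpi_\theta}$, so it must equal zero $\rho^{\vpi_\theta}$-almost everywhere; i.e.\ $\tilde{C}^{(m)}_\theta(\vx) = 0$ for $\rho^{\vpi_\theta}$-almost every $\vx$. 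Next, for each such $\vx$, $\tilde{C}^{(m)}_\theta(\vx) = 0$ is itself the integral of the non-negative quantity $\max\{0, C^{(m)}(\vx,\vu)\}$ against $\vpi_\theta(\cdot\mid\vx)$, so by the same argument $\max\{0, C^{(m)}(\vx,\vu)\} = 0$ for $\vpi_\theta(\cdot\mid\vx)$-almost every $\vu$, which is exactly $C^{(m)}(\vx,\vu) \leq 0$ almost surely. Finally I would assemble this into a statement about the joint law: the set of $(\vx,\vu)$ with $C^{(m)}(\vx,\vu) > 0$ has measure zero under the joint distribution, by Fubini/Tonelli applied to the non-negative integrand.

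There is no real obstacle here; the only thing worth stating carefully is the measure-theoretic lemma (non-negative, zero-integral $\Rightarrow$ zero a.e.) applied twice — once at the level of the outer expectation over $\vx$ and once at the level of the conditional expectation over $\vu$ — and being explicit about which probability measure ``almost surely'' refers to (the joint law induced by $\vx \sim \rho^{\vpi_\theta}$, $\vu \sim \vpi_\theta(\cdot\mid\vx)$). One could alternatively do it in a single step using Tonelli's theorem to write $\E_{\vx}\E_{\vu}[\,\cdot\,] = \E_{(\vx,\vu)}[\,\cdot\,]$ directly, then apply the lemma once; I would probably present it that way for brevity.
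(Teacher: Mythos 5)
Your proposal is correct and rests on the same underlying fact as the paper's proof: since $\max\{0,\cdot\}$ is non-negative, the hypothesis forces the expectation to be exactly zero, and a non-negative quantity with zero expectation vanishes almost surely under the joint law of $(\vx,\vu)$. The paper packages this as a proof by contradiction (assuming $P(C^{(m)}(\vx,\vu)>0)\geq\epsilon$ and lower-bounding the expectation), whereas you argue directly via the zero-integral lemma and Tonelli, but this is only a cosmetic difference in presentation.
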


\begin{proof}
    Note that \eqref{eq: almost_sure_thing} can only be satisfied when the expectation equals $0$ since $\max\{0, \cdot\}$ is non-negative. 

    Assume for contradiction that $P( C^{(m)}(\vx, \vu) \leq 0 ) \leq 1 - \epsilon$ for $\epsilon > 0$, i.e., $P( C^{(m)}(\vx, \vu) > 0 ) \geq \epsilon$. Then,
    \begin{align}
        0
        &= \E_{\vx \sim \rho^{\vpi_\theta}} \E_{\vu \sim \vpi_\theta(\cdot \mid \vx)}\left[ \max\big\{0, C^{(m)}(\vx, \vu) \big\} \right] \\
        &\geq P\Big(C^{(m)}(\vx, \vu) > 0 \Big) \E_{\vx \sim \rho^{\vpi_\theta}, \vu \sim \vpi_\theta(\cdot \mid \vx)}\left[ \max\big\{0, C^{(m)}(\vx, \vu) \big\} \mid C^{(m)}(\vx, \vu) > 0 \right] \\
        &= \epsilon\, \E_{\vx \sim \rho^{\vpi_\theta}, \vu \sim \vpi_\theta(\cdot \mid \vx)}\left[ \max\big\{0, C^{(m)}(\vx, \vu) \big\} \mid C^{(m)}(\vx, \vu) > 0 \right] \\
        &> 0.
    \end{align}
    which is a contradiction. Thus, $P( C^{(m)}(\vx, \vu) \leq 0 ) = 1$, and $C^{(m)}(\vx, \vu) \leq 0$ almost surely.
\end{proof}

\subsection{Formal Statement and Proof of \texorpdfstring{\Cref{thm: grad_proj}}{Informal Theorem~\ref{thm: grad_proj}}} \label{app: pf:grad_proj}

We first formally state \Cref{thm: grad_proj} below.

\begin{Theorem}[Approximate Gradient Projection for Decoupled Policy Parameters] \label{thm: grad_proj_formal}
    Suppose that for all $\vx_1 \not= \vx_2$, the parameters $\theta$ of the stochastic policy $\vpi_\theta$ are orthogonal, i.e.,
    $\left( \nabla_{\theta} \vpi_\theta(\vu_1 \mid \vx_1) \right) \cdot \left( \nabla_{\theta} \vpi_\theta(\vu_2 \mid \vx_2) \right) = 0$ for all $\vu_1, \vu_2 \in \bm{\mathcal{U}}$
    (e.g., a finite state-space $\bm{\mathcal{X}}$ with independent distribution at each state).
    Let $\sigma^{(m)} \coloneqq \nabla_\theta \E_{\vx \sim \rho}[ \tilde{C}_\theta^{(m)}(\vx) ]$ denote the gradient of the $m$-th DCBF constraint violation for any state distribution $\rho$.
Then, the gradient of the objective \eqref{eq: dcbf_safety_filter:stoch_ver2:objective}, modified with an extra indicator as follows:
\begin{equation}
        g \coloneqq
        \mathbb \mathbb{E}_{\vx \sim \rho^{\vpi_\theta}}
        \left[
        \ind{\max_m \tilde{C}^{(m)}_\theta(\vx) \leq 0}\,
        \mathbb{E}_{
        \vu \sim \vpi_\theta(\cdot|\vx)}
        \left[ 
        \nabla_{\theta} \log \vpi(\vx, \vu)\,
        Q^{\vpi_\theta}(\vx, \vu)
        \right] \right],
    \end{equation}
    satisfies $g \cdot \sigma^{(m)} = 0\; \forall m$, i.e., it lies in the orthogonal complement of the constraint gradients $\sigma^{(m)}$.
\end{Theorem}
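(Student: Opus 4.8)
The plan is to exploit the assumed orthogonality of the policy parameters across distinct states to show that the per-state contributions to $g$ and to each $\sigma^{(m)}$ live in mutually orthogonal subspaces of parameter space, so that their inner product decomposes state-by-state and each term vanishes.

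\textbf{Step 1: Write both gradients as sums (integrals) of per-state terms.} I would first expand
\[
g = \E_{\vx \sim \rho^{\vpi_\theta}}\!\big[\, \ind{\max_m \tilde{C}^{(m)}_\theta(\vx) \leq 0}\, g_{\vx} \,\big], \qquad g_{\vx} \coloneqq \E_{\vu \sim \vpi_\theta(\cdot|\vx)}\!\big[ \nabla_\theta \log \vpi_\theta(\vu\mid\vx)\, Q^{\vpi_\theta}(\vx,\vu)\big],
\]
and similarly $\sigma^{(m)} = \E_{\vx'\sim\rho}\big[ \nabla_\theta \tilde{C}^{(m)}_\theta(\vx')\big]$. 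The key structural observation is that both $g_{\vx}$ and $\nabla_\theta \tilde{C}^{(m)}_\theta(\vx')$ are built only out of gradients of $\vpi_\theta$ (equivalently $\log\vpi_\theta$) evaluated at the \emph{single} state $\vx$ (resp.\ $\vx'$): the score-function identity $\nabla_\theta\log\vpi_\theta(\vu\mid\vx) = \nabla_\theta\vpi_\theta(\vu\mid\vx)/\vpi_\theta(\vu\mid\vx)$ shows $g_{\vx}\in\spn\{\nabla_\theta\vpi_\theta(\vu\mid\vx):\vu\in\bm{\mathcal{U}}\}$, and $\nabla_\theta\tilde{C}^{(m)}_\theta(\vx') = \E_{\vu\sim\vpi_\theta(\cdot\mid\vx')}[\nabla_\theta\log\vpi_\theta(\vu\mid\vx')\max\{0,C^{(m)}(\vx',\vu)\}]$ (again using the log-derivative trick, noting the dynamics term $C^{(m)}$ has no $\theta$-dependence) lies in $\spn\{\nabla_\theta\vpi_\theta(\vu\mid\vx'):\vu\in\bm{\mathcal{U}}\}$.

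\textbf{Step 2: Apply orthogonality across states.} By the hypothesis, $\big(\nabla_\theta\vpi_\theta(\vu_1\mid\vx_1)\big)\cdot\big(\nabla_\theta\vpi_\theta(\vu_2\mid\vx_2)\big)=0$ whenever $\vx_1\neq\vx_2$. Hence for $\vx\neq\vx'$ we get $g_{\vx}\cdot \nabla_\theta\tilde{C}^{(m)}_\theta(\vx') = 0$ by bilinearity. Taking the inner product $g\cdot\sigma^{(m)}$ and expanding over the two state integrations, every cross term with $\vx\neq\vx'$ drops, leaving only the diagonal contribution $\E_{\vx}\big[w(\vx)\, g_{\vx}\cdot\nabla_\theta\tilde{C}^{(m)}_\theta(\vx)\big]$ for some scalar weight $w(\vx)$ (a product of the two densities and the indicator).

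\textbf{Step 3: Kill the diagonal term via the indicator.} On the support of the integrand in $g$, the indicator $\ind{\max_m\tilde{C}^{(m)}_\theta(\vx)\le 0}$ is $1$, so $\tilde{C}^{(m)}_\theta(\vx)\le 0$; but $\tilde{C}^{(m)}_\theta(\vx)=\E_{\vu}[\max\{0,C^{(m)}(\vx,\vu)\}]\ge 0$ always, forcing $\tilde{C}^{(m)}_\theta(\vx)=0$, i.e.\ $C^{(m)}(\vx,\vu)\le 0$ for $\vpi_\theta(\cdot\mid\vx)$-a.e.\ $\vu$, hence $\max\{0,C^{(m)}(\vx,\vu)\}=0$ a.s. Therefore $\nabla_\theta\tilde{C}^{(m)}_\theta(\vx) = \E_{\vu}[\nabla_\theta\log\vpi_\theta(\vu\mid\vx)\cdot 0] = \vzero$ at exactly those states $\vx$ where the indicator is nonzero. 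So the diagonal term vanishes as well, giving $g\cdot\sigma^{(m)}=0$ for every $m$, as claimed.

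\textbf{Main obstacle.} The routine parts (the log-derivative manipulations, bilinearity of the inner product) are straightforward; the conceptual crux is recognizing that the indicator in $g$ does double duty — it is what makes the \emph{diagonal} term vanish (Step 3), while the \emph{off-diagonal} terms are handled purely by the orthogonality hypothesis (Step 2). Care is also needed to state precisely the regularity/interchange-of-derivative-and-integral conditions (finite or discrete $\bm{\mathcal{X}}$, so sums rather than integrals, as the theorem's parenthetical suggests) under which the score-function identities and the decomposition of $g$ into per-state pieces are valid; this is why the informal version is hedged with ``under suitable assumptions.''
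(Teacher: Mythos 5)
Your proposal is correct and follows essentially the same route as the paper: both arguments rest on the per-state decomposition of the gradients into the subspaces $\spn\{\nabla_\theta \vpi_\theta(\vu \mid \vx)\}$, kill cross-state terms via the orthogonality hypothesis, and observe that at any state where the indicator in $g$ is active the constraint violation $\max\{0, C^{(m)}\}$ vanishes almost surely in $\vu$, so the corresponding per-state contribution to $\sigma^{(m)}$ is zero (the paper phrases this as disjointness of the state sets supporting $g$ and $\sigma^{(m)}$). Your explicit diagonal/off-diagonal bookkeeping and the hedge about finite state spaces match the paper's level of rigor, so no gap.
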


\begin{proof}
    For $\vx \in \bm{\mathcal{X}}$, define the set $\Theta_\vx$ as the column space of the gradient of the policy $\vpi$ at $\vx$, i.e.,
    \begin{equation}
        \Theta_\vx \coloneqq \spn\{ \nabla_\theta \vpi_\theta(\vu \mid \vx) : \vu \in \bm{\mathcal{U}} \}.
    \end{equation}
    By assumption, this implies that for $\vx_1 \not= \vx_2$,
    \begin{equation}
        \theta_1 \in \Theta_{\vx_1},\, \theta_2 \in \Theta_{\vx_2} \implies \theta_1 \cdot \theta_2 = 0.
    \end{equation}
    Now, note that
    \begin{align}
\sigma^{(m)}
        &\coloneqq \nabla_\theta \E_{\vx \sim \rho}[ \tilde{C}_\theta^{(m)}(\vx) ] \\
        &= \nabla_\theta \E_{\vx \sim \rho} \E_{\vu \sim \vpi_\theta(\cdot \mid \vx)}[ \max(0, C_\theta^{(m)}(\vx, \vu)) ] \\
        &= \E_{\vx \sim \rho} \E_{\vu \sim \vpi_\theta(\cdot \mid \vx)}\left[ \nabla_\theta \log \vpi_\theta(\vu \mid \vx) \max(0, C_\theta^{(m)}(\vx, \vu)) \right] \\
        &= \E_{\vx \sim \rho} \E_{\vu \sim \vpi_\theta(\cdot \mid \vx)}\left[ \ind{C_\theta^{(m)}(\vx, \vu) > 0} \nabla_{\theta} \log \vpi_\theta(\vu \mid \vx) C_\theta^{(m)}(\vx, \vu) \right] \\
        &\subseteq \bigcup_{\vx : \mathcal{E}^{(m)}} \Theta_\vx, \qquad \mathcal{E}^{(m)} \coloneqq \left\{ \vx \in \bm{\mathcal{X}} : \esssup_{\vu \in \bm{\mathcal{U}}} C_\theta^{(m)}(\vx, \vu) > 0 \right\},
    \end{align}
    where we have used the score function gradient estimator of $\tilde{C}_\theta$ (\Cref{app: policy-gradient}). Similarly,
    \begin{align}
        g
        &\coloneqq \mathbb E_{\vx \sim \rho^{\vpi_\theta}, \vu \sim \vpi_\theta(\cdot|\vx)} \big[ \nabla_\theta \log \vpi_\theta(\vu \mid \vx) \ind{\max_m \tilde{C}^{(m)}_\theta(\vx) \leq 0} Q^{\vpi_\theta}(\vx, \vu) \big], \\
        &= \mathbb E_{\vx \sim \rho^{\vpi_\theta}, \vu \sim \vpi_\theta(\cdot|\vx)} \big[ \nabla_\theta \log \vpi_\theta(\vu \mid \vx) \ind{\max_m \esssup_{\vu \in \bm{\mathcal{U}}} C^{(m)}_\theta(\vx) \leq 0} Q^{\vpi_\theta}(\vx, \vu) \big], \\
        &\subseteq \bigcup_{\vx : \mathcal{F}} \Theta_\vx,
    \end{align}
    where
    \begin{align}
        \mathcal{F}
        &\coloneqq \left\{ \vx \in \bm{\mathcal{X}} : \max_m \esssup_{\vu \in \bm{\mathcal{U}}} C_\theta^{(m)}(\vx, \vu) \leq 0 \right\}, \\
        &= \left\{ \vx \in \bm{\mathcal{X}} : \esssup_{\vu \in \bm{\mathcal{U}}} C_\theta^{(m)}(\vx, \vu) \leq 0,\; \forall m \right\}.
    \end{align}
    Since $\mathcal{E}^{(m)} \cap \mathcal{F} = \emptyset$ for all $m$, we have that $\sigma^{(m)} \cdot g = 0$ for all $m$.
\end{proof}

\subsection{Score Function Gradient Estimator of $\tilde{C}_\theta$}
\label{app: policy-gradient}

\begin{proof}
    Using the log trick,
    \begin{equation}
    \begin{aligned}
        \nabla_\theta \mathbb{E}_{\vu \sim \vpi_\theta(\vx)}\left[ C(\vx, f(\vx, \vu)) \right]
        &= \nabla_\theta \int C(\vx, f(\vx, \vu)) \, \vpi_\theta(\vx,\vu)\, d\vu, \\
        &=  \int C(\vx, f(\vx, \vu)) \left(\nabla_\theta \log \vpi_\theta(\vx, \vu) \right) \vpi_\theta(\vx,\vu)\, d\vu, \\
        &= \mathbb{E}_{\vu \sim \vpi_\theta(\vx)}\left[ \nabla \log \vpi_\theta(\vx, \vu)\, C(\vx, f(\vx, \vu)) \right].
    \end{aligned}
    \end{equation}
\end{proof}

\subsection{Formal Statement and Proof of \texorpdfstring{\Cref{thm:attn_dgcbf}}{Informal Theorem~\ref{thm:attn_dgcbf}}} \label{app: pf:attn_dgcbf}

We first formally state \Cref{thm:attn_dgcbf} below.
\begin{Theorem}[Satisfying \eqref{eq: dgcbf} during neighborhood changes]\label{thm:attn_dgcbf_formal}
    Suppose that the maximum distance an agent can travel in a time step is $\bar{d}$. Let $\tilde{B}$ be of the following form.
    \begin{equation} \label{eq: tilde_B_attn_form_formal}
        \tilde{B}(O_i(\vx)) = \xi_1\left( \sum_{j \in \mathcal{N}_i} w(o_{ij}) \xi_2(o_{ij}),\; \xi_3(o_i^{y}) \right),
    \end{equation}
    where $\xi_1 : \mathbb{R}^{\rho_1} \times \mathbb{R}^{\rho_2} \to \mathbb{R}$, $\xi_2 : \mathcal O_a\to \mathbb{R}^{\rho_1}$, and $\xi_3:\mathcal O_y\to\mathbb R^{\rho_2}$ encode the observations into some feature space, and $w : \mathcal{O}_a \to \mathbb{R}$ is a weighting function such that
    \begin{equation} \label{eq: attn_weight_cond}
        w(o_{ij}) = 0, \quad \text{ for all } x_i, x_j \text{ such that } \norm{p_i - p_j} \geq R - 2 \bar{d}.
    \end{equation}
    If 1) $\tilde{B}$ satisfies \eqref{eq: dgcbf} for all transitions where the neighborhood does not change,
    and 2) for any transition $\vx \to \vx^+$ with a neighborhood change $\mathcal{N}_i(\vx) \not= \mathcal{N}_i(\vx^+)$, there exists a transition $\bar{\vx} \to \bar{\vx}^+$ where all agents that either enter or leave the neighborhood (i.e., the complement of $\mathcal{N}_i(\vx) \cap \mathcal{N}_i(\vx^+)$) are moved outside the sensing radius, and all the remaining agents move identically in $\vx$ and $\bar{\vx}$,
then $\tilde{B}$ is a DGCBF.
\end{Theorem}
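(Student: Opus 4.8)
The plan is to reduce the general case to the no-neighborhood-change case, which is granted by hypothesis~(1), using the weighting condition \eqref{eq: attn_weight_cond} to show that any agent crossing agent $i$'s sensing boundary contributes nothing to $\tilde{B}$. Fix a joint state $\vx$ and an agent $i$, write $\vx^+ \coloneqq f(\vx, \vmu(\vx))$, $o_i \coloneqq O_i(\vx)$, $o_i^+ \coloneqq O_i(\vx^+)$, and suppose the premise of \eqref{eq: dgcbf} holds, i.e., $\tilde{B}(o_j(\vx)) \leq 0$ for all $j$. If $\mathcal{N}_i(\vx) = \mathcal{N}_i(\vx^+)$, the desired descent inequality is exactly hypothesis~(1), so from now on assume $\mathcal{N}_i(\vx) \neq \mathcal{N}_i(\vx^+)$.

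\textbf{Distance lemma and weight vanishing.} First I would establish the elementary bound that, since every agent moves at most $\bar{d}$ per step, any pairwise distance $\norm{p_i - p_j}$ changes by at most $2\bar{d}$ over one transition. Hence an agent $j$ that \emph{leaves} the neighborhood (within $R$ of $i$ at $\vx$, beyond $R$ at $\vx^+$) satisfies $\norm{p_i - p_j} > R - 2\bar{d}$ at $\vx$, and an agent $j$ that \emph{enters} satisfies $\norm{p_i^+ - p_j^+} > R - 2\bar{d}$ at $\vx^+$. By \eqref{eq: attn_weight_cond}, this forces $w(o_{ij}) = 0$ for every leaving $j$ in the sum defining $\tilde{B}(o_i)$ and $w(o_{ij}^+) = 0$ for every entering $j$ in the sum defining $\tilde{B}(o_i^+)$. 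Consequently, restricting the sums in \eqref{eq: tilde_B_attn_form_formal} to $\mathcal{N}_i(\vx) \cap \mathcal{N}_i(\vx^+)$ changes neither $\tilde{B}(o_i)$ nor $\tilde{B}(o_i^+)$.

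\textbf{Twin transition.} Next I would invoke hypothesis~(2) to obtain a transition $\bar{\vx} \to \bar{\vx}^+$ in which every agent outside $\mathcal{N}_i(\vx) \cap \mathcal{N}_i(\vx^+)$ is placed beyond agent $i$'s sensing radius, while every agent in the intersection, agent $i$ itself, and the non-agent state evolve identically. Then $\mathcal{N}_i(\bar{\vx}) = \mathcal{N}_i(\bar{\vx}^+) = \mathcal{N}_i(\vx) \cap \mathcal{N}_i(\vx^+)$, so the twin transition has no neighborhood change for $i$, and the restricted sums over the intersection computed at $\bar{\vx}, \bar{\vx}^+$ agree termwise with those at $\vx, \vx^+$. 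Combined with the previous step, this gives $\tilde{B}(O_i(\bar{\vx})) = \tilde{B}(o_i)$ and $\tilde{B}(O_i(\bar{\vx}^+)) = \tilde{B}(o_i^+)$. Applying hypothesis~(1) to $\bar{\vx} \to \bar{\vx}^+$ then yields $\tilde{B}(O_i(\bar{\vx}^+)) - \tilde{B}(O_i(\bar{\vx})) + \alpha(\tilde{B}(O_i(\bar{\vx}))) \leq 0$, which, after substituting the equalities just derived, is precisely the DGCBF descent inequality for the original transition. Since $\vx$ and $i$ were arbitrary, $\tilde{B}$ satisfies \eqref{eq: dgcbf}.

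\textbf{Main obstacle.} The hard part is making the twin-transition step fully rigorous: hypothesis~(1) requires the premise $\tilde{B}(O_j(\bar{\vx})) \leq 0$ for \emph{all} $j$ in the twin configuration, not merely in the original one, and relocating the boundary-crossing agents can in principle alter another agent $j$'s local observation and hence $\tilde{B}(O_j(\bar{\vx}))$. Closing this gap requires either building this premise into hypothesis~(2) or replaying the weight-vanishing argument agent by agent; the relocated agents sit near agent $i$'s sensing boundary but need not be near $j$'s, so this is where the ``technical conditions on the dynamics'' from the informal statement (small $\bar{d}$ relative to $R$, and sufficient locality/decoupling of $f$ so that ``the remaining agents move identically'' is well-defined) must be used. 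The remaining points---that $2\bar{d}$ is the correct slack in \eqref{eq: attn_weight_cond} given a per-step travel bound of $\bar{d}$, and that $\xi_3(o_i^y)$ is unaffected---are routine.
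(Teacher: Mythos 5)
Your proposal follows essentially the same route as the paper's proof: the $2\bar{d}$ distance bound forces the attention weights of leaving/entering agents to vanish, hypothesis (2) supplies the twin transition $\bar{\vx}\to\bar{\vx}^+$ with neighborhood $\mathcal{N}_i(\vx)\cap\mathcal{N}_i(\vx^+)$, the form \eqref{eq: tilde_B_attn_form_formal} then gives $\tilde{B}(O_i(\vx))=\tilde{B}(O_i(\bar{\vx}))$ and $\tilde{B}(O_i(\vx^+))=\tilde{B}(O_i(\bar{\vx}^+))$, and hypothesis (1) applied to the twin transition yields the descent inequality. The ``main obstacle'' you flag---that hypothesis (1) requires the premise $\tilde{B}(O_j(\bar{\vx}))\leq 0$ for \emph{all} $j$ in the twin configuration, which relocating the boundary-crossing agents could upset---is a genuine subtlety, but it is not resolved in the paper either: its proof invokes hypothesis (1) on $\bar{\vx}\to\bar{\vx}^+$ without verifying that premise, so your write-up matches the paper's argument and is, if anything, more explicit about this implicit assumption.
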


\begin{proof}
Let $\vx$ and $\vx^+$ be consecutive states such that the neighborhood of agent $i$ changes, i.e., $\mathcal{N}_i(\vx) \not= \mathcal{N}_i(\vx^+)$. Let $E \coloneqq \mathcal{N}_i(\vx) \setminus \mathcal{N}_i(\vx^+)$ and $F \coloneqq \mathcal{N}_i(\vx^+) \setminus \mathcal{N}_i(\vx)$ denote the set of agents that leave and enter the neighborhood of agent $i$, respectively.
Since the maximum distance agents can travel in one timestep is $\bar{d}$, the distance between agents can change by at most $2\bar{d}$ in one timestep.
Hence, all agents exiting the neighborhood are at least $R - 2\bar{d}$ away agent $i$, i.e.,
\begin{equation}
    j \in E \implies \norm{p_i - p_j} \geq R - 2\bar{d}.
\end{equation}
Similarly, all agents entering the neighborhood are at least $R - 2\bar{d}$ away from agent $i$ at the $\vx^+$, i.e.,
\begin{equation}
    j \in F \implies \norm{p_i^+ - p_j^+} \geq R - 2\bar{d}.
\end{equation}
Hence, by \eqref{eq: attn_weight_cond}, we have that $w(o_{ij}) = 0$ for all $j \in E$ and $w(o^+_{ij}) = 0$ for all $j \in F$.

Now, by asumption, there exists consecutive states $\bar{\vx}$ and $\bar{\vx}^+$ with the same neighborhood $\mathcal{N}_i(\bar{\vx}) = \mathcal{N}_i(\bar{\vx}^+) = \mathcal{N}_i(\vx) \cap \mathcal{N}_i(\vx^+)$, such that $\bar{x}_j = x_j$ and $\bar{x}^+_j = x^+_j$ for $j \in \mathcal{N}_i(\bar{\vx})$, and similarly for non-agent states $y=\bar{y}$, $y^+ = \bar{y}^+$.
By definition of the observation function $O_i$, $\vx$ and $\bar{\vx}$ share the same observation except for the $o_{ij}$ for $j \in E$, and similarly for $\vx^+$ and $\bar{\vx}^+$ for $j \in F$.
Since $w_{ij} = 0$ for all $j \in E \cup F$, the form of $\tilde{B}$ \eqref{eq: tilde_B_attn_form_formal} implies that
\begin{align}
    \sum_{j \in \mathcal{N}_i} w(o_{ij}) \xi_2(o_{ij}) &= \sum_{j \in \mathcal{N}_i} w(\bar{o}_{ij}) \xi_2(\bar{o}_{ij}) \\
    \sum_{j \in \mathcal{N}_i} w(o^+_{ij}) \xi_2(o_{ij}) &= \sum_{j \in \mathcal{N}_i} w(\bar{o}_{ij}) \xi_2(\bar{o}^+_{ij})
\end{align}
Hence, we must have that
\begin{align}
    \tilde{B}(O_i(\vx)) = \tilde{B}(O_i(\bar{\vx})), \quad \tilde{B}(O_i(\vx^+)) = \tilde{B}(O_i(\bar{\vx}^+)).
\end{align}
By assumption, $\tilde{B}$ satisfies the DGCBF condition \eqref{eq: dgcbf} for all transitions where the neighborhood does not change, which includes $\bar{\vx} \to \bar{\vx}^+$. Hence,
\begin{align}
    \tilde{B}(o_i^+) - \tilde{B}(o_i) + \alpha\left( \tilde{B}(o_i) \right)
    &= \tilde{B}(\bar{o}_i^+) - \tilde{B}(\bar{o}_i) + \alpha\left( \tilde{B}(\bar{o}_i) \right) \\
    &\leq 0,
\end{align}
and $\tilde{B}$ also satisfies the DGCBF condition \eqref{eq: dgcbf} for transitions where the neighborhood changes.
Thus, $\tilde{B}$ is a DGCBF.
\end{proof}

\subsection{Proof that a DGCBF can be used to construct a DCBF} \label{app: dgcbf_dcbf}
\begin{Theorem} \label{thm: dgcbf_dcbf}
    For a $N$-agent MAS, define $B : \bm{\mathcal{X}} \to \mathbb{R}$ as
    \begin{equation} \label{eq: dgcbf_safety:dcbf_def}
        B(\vx) \coloneqq \max_i \tilde{B}(O_i(\vx)).
    \end{equation}
    Then, $B$ is a DCBF.
\end{Theorem}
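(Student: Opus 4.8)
The plan is to show that $B$ inherits the DCBF property of \Cref{def:dcbf} directly from the DGCBF property of $\tilde B$, reusing the \emph{same} class-$\kappa$ function $\alpha$ (with $\alpha(-r)>-r$ for $r>0$) and the joint policy $\vmu$ that witness $\tilde B$ being a DGCBF in \Cref{def: dgcbf}. First I would fix an arbitrary $\vx \in \bm{\mathcal{X}}$ with $B(\vx) \le 0$. Since $B(\vx) = \max_i \tilde B(O_i(\vx))$, this is equivalent to $\tilde B(O_j(\vx)) \le 0$ for \emph{every} agent $j$, which is exactly the antecedent of the DGCBF implication \eqref{eq: dgcbf}. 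Applying \eqref{eq: dgcbf} then yields, for every agent $i$,
\[
  \tilde B\big(o_i^+(\vx)\big) \;\le\; \tilde B\big(O_i(\vx)\big) - \alpha\big(\tilde B(O_i(\vx))\big), \qquad o_i^+(\vx) \coloneqq O_i\big(f(\vx,\vmu(\vx))\big).
\]

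Next I would upper-bound the infimum in \eqref{eq: dcbf-condition} by evaluating the single admissible control $\vu = \vmu(\vx) \in \bm{\mathcal{U}}$ (admissible since each block $\mu(o_i)\in\mathcal{U}$). For this control, $B\big(f(\vx,\vu)\big) = \max_i \tilde B\big(O_i(f(\vx,\vmu(\vx)))\big) = \max_i \tilde B(o_i^+(\vx))$, so it suffices to prove $\max_i \tilde B(o_i^+(\vx)) \le B(\vx) - \alpha(B(\vx))$. Fix $i$. From the per-agent inequality above, together with $\tilde B(O_i(\vx)) \le B(\vx) \le 0$ and the monotonicity of $r \mapsto r - \alpha(r)$ on $(-\infty,0]$, we get $\tilde B(o_i^+(\vx)) \le \tilde B(O_i(\vx)) - \alpha(\tilde B(O_i(\vx))) \le B(\vx) - \alpha(B(\vx))$. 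Taking the maximum over $i$ gives the desired bound, hence $\inf_{\vu \in \bm{\mathcal{U}}} B(f(\vx,\vu)) - B(\vx) + \alpha(B(\vx)) \le 0$, so $B$ is a DCBF. (Combined with \Cref{thm:dcbf_invariant} and the disjointness $\mathcal{C} \cap \mathcal{A} = \emptyset$, this $B$ is what certifies the $N$-agent MAS safe; the value of the DGCBF is that the \emph{same} $\tilde B$ serves for every $N$.)

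The step I expect to be the crux is passing the maximum through the class-$\kappa$ term, i.e.\ the use that $r \mapsto r - \alpha(r)$ is non-decreasing on $(-\infty,0]$. This is immediate for the linear $\alpha(r) = ar$ with $a \in (0,1)$ used in all experiments, since there $r - \alpha(r) = (1-a)r$; note that the standing condition $\alpha(-r) > -r$ by itself only guarantees $r - \alpha(r) \le 0$ on $(-\infty,0]$, not monotonicity. For a general $\alpha$ I would not use $\alpha$ itself but the function $\bar\alpha$ determined by $r - \bar\alpha(r) \coloneqq \sup_{s \le r}\big(s - \alpha(s)\big)$ on $(-\infty,0]$ — the monotone envelope of $r \mapsto r - \alpha(r)$. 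One then checks (routine, modulo mild regularity of $\alpha$) that $\bar\alpha$ is continuous, strictly increasing, $\bar\alpha(0) = 0$, $\bar\alpha(-r) > -r$ for $r > 0$, and $\bar\alpha \le \alpha$ on $(-\infty,0]$; the last point means the DGCBF inequality still holds with $\bar\alpha$ in place of $\alpha$, while $r \mapsto r - \bar\alpha(r)$ is non-decreasing by construction, so the argument above runs verbatim with $\bar\alpha$.
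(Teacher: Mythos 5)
Your proof is correct and follows essentially the same route as the paper's: restrict to $B(\vx)\le 0$ so the DGCBF antecedent holds, evaluate the infimum at $\vu=\vmu(\vx)$, apply the per-agent inequality \eqref{eq: dgcbf}, and pass the maximum through using monotonicity of $r \mapsto r-\alpha(r)$ (the paper applies it only at the index attaining the max at $\vx^+$, which is equivalent to your max-over-$i$ step). The only difference is at the crux you flag: the paper simply cites \citet{ahmadi2019safe} for the claim that $(1-\alpha)$ is itself an extended class-$\kappa$ (hence monotone) function, whereas you correctly observe this is not implied by $\alpha(-r)>-r$ alone and supply the monotone-envelope workaround — a more careful treatment of a step the paper takes for granted.
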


\begin{proof}
    Let $\mu : \mathcal{O} \to \mathbb{R}$ denote the per-agent control policy corresponding to the DGCBF $\tilde{B}$ in \Cref{def: dgcbf},
    and let $\vmu$ denote the resulting joint control policy.
    Then, under $\mu$, \eqref{eq: dgcbf} implies that for any $i$,
    \begin{equation} \label{eq: dgcbf_safety:tmp0}
        \tilde{B}( o_i^+ (\vx) ) - \tilde{B}( o_i(\vx) ) + \alpha( \tilde{B}( o_i(\vx) ) ) \leq 0,
        \quad\forall \vx \in \bm{\mathcal{X}}.
    \end{equation}

    Now, for a given $\vx \in \bm{\mathcal{X}}$,
    let $\vx^+ = f(\vx, \vmu(\vx))$ denote the next state following $\vmu$.
    Let $i_1$ and $i_2$ denote the index that maximizes \eqref{eq: dgcbf_safety:dcbf_def} at $\vx$ and $\vx^+$ respectively, i.e.,
    \begin{align}
        i_1 &\coloneqq \argmin_i \tilde{B}(O_i(\vx)) \quad \implies B(\vx) = \tilde{B}(O_{i_1}(\vx)), \\
        i_2 &\coloneqq \argmin_i \tilde{B}(O_i(\vx^+)) \quad \implies B(\vx^+) = \tilde{B}(O_{i_2}(\vx^+)). \\
    \end{align}
    Then, using \eqref{eq: dgcbf_safety:tmp0} and the fact that $(1 - \alpha)$ is also an extended class-$\kappa$ function and thus is a monotonic function \citep{ahmadi2019safe}:
    \begin{align}
        0
        &\geq \tilde{B}(O_{i_2}(\vx^+)) - \tilde{B}(O_{i_2}(\vx)) + \alpha(\tilde{B}(O_{i_2}(\vx))), \\
        &= \tilde{B}(O_{i_2}(\vx^+)) - (1 - \alpha) \circ \tilde{B}(O_{i_2}(\vx)), \\
        &\geq \tilde{B}(O_{i_2}(\vx^+)) - (1 - \alpha) \circ \tilde{B}(O_{i_1}(\vx)), \\
        &= B(\vx^+) - (1 - \alpha) \circ B(\vx), \\
        &= B(\vx^+) - B(\vx) + \alpha(B(\vx)).
    \end{align}
    and \eqref{eq: dcbf-condition} holds.
    Thus, $B$ is a DCBF.
\end{proof}
Since $\tilde{B}$ enables the construction of a DCBF, this implies that $\mathcal{C}$, the zero sub-level set of $B$, i.e.,
\begin{equation}
 \mathcal{C} \coloneqq \{ \vx \mid B(\vx) \leq 0 \} = \bigcap_{i}\, \{ \vx \mid \tilde{B}( o_i(\vx) ) \leq 0 \},
\end{equation}
is forward-invariant under $\vmu$ and hence control invariant.

\subsection{DGCBF has Generalizable Safety Guarantees}
\label{app: dgcbf_safety_proof}
In this subsection, we prove that the same DGCBF $\tilde{B}$ from \Cref{def: dgcbf} can guarantee the safety of a MAS with any number of agents $N$.
We will do this by showing that there exists an $\bar{N}$ such that if $\tilde{B}$ satisfies the DGCBF conditions \eqref{eq: dgcbf} for $\bar{N}$ agents,
then the \textbf{same} DGCBF $\tilde{B}$ will also satisfy the DGCBF conditions \eqref{eq: dgcbf}.

Let the maximum distance an agent can travel in a time step is $\bar{d}$.
Given a sensing radius $R$,
define $\bar{N}$ as the maximum number of agents that can be located within a ball of radius $2R + 2\bar{d}$.
For a $N$-agent MAS, let $\bm{\mathcal{X}}_N$ and $\bm{\mathcal{U}}_N$ denote the joint state and control space respectively,
and let $f_N$ denote the corresponding dynamics function. 
For generalizability to an arbitrary number of agents,
we make the additional assumption that the dynamics $f_N$ are decoupled for each agent, i.e.,
\begin{equation}
    x^{k+1}_i = f_0(x^k, u^k),
\end{equation}
for per-agent dynamics function $f_0$.

For a state $\vx \in \bm{\mathcal{X}}_N$, let $\vx_{i,\bar{N}} \in \bm{\mathcal{X}}_{\bar{N}}$ denote the restriction of $\vx$ to that of agent $i$ and its $\bar{N}-1$ closest neighbors.
We then have the following theorem.
\begin{Theorem} \label{thm:dgcbf_generalize_thm}
    Let $\tilde{B}$ satisfy the DGCBF conditions \eqref{eq: dgcbf} for $\bar{N}$ agents, i.e.,
    there exists
    a class-$\kappa$ function $\alpha$ with $\alpha(-r) > -r$ for all $r > 0$ and
    a control policy $\mu : \mathcal{O} \to \mathcal{U}$ satisfying 
    \begin{equation} \label{eq: dgcbf_safety_proof:Nbar}
        \tilde{B}( O_j(\tilde{\vx}) ) \leq 0,\; \forall j \implies \tilde{B}( O_i^+ (\tilde{\vx}) ) - \tilde{B}( O_i(\tilde{\vx}) ) + \alpha( \tilde{B}( O_i(\tilde{\vx}) ) ) \leq 0,
        \quad\forall \tilde{\vx} \in \bm{\mathcal{X}}_{\bar{N}},\; \forall i,
    \end{equation}
    Then, $\tilde{B}$ also satisfies the DGCBF conditions \eqref{eq: dgcbf} for any $N > \bar{N}$ agents, i.e.,
    \begin{equation} \label{eq: dgcbf_safety_proof:N}
        \tilde{B}( O_j(\vx) ) \leq 0,\; \forall j \implies \tilde{B}( O_i^+ (\vx) ) - \tilde{B}( O_i(\vx) ) + \alpha( \tilde{B}( O_i(\vx) ) ) \leq 0,
        \quad\forall \vx \in \bm{\mathcal{X}}_{N},\; \forall i,
    \end{equation}
\end{Theorem}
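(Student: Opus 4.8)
The plan is to show that the DGCBF condition at $N$ agents reduces to the DGCBF condition at $\bar{N}$ agents, by exploiting (i) the decoupled dynamics assumption $x_i^{k+1} = f_0(x_i^k, u_i^k)$ and (ii) the finite-sensing locality of the observation function $O_i$. The key observation is that the value of $\tilde{B}(O_i(\vx))$ and $\tilde{B}(O_i^+(\vx))$ depends only on agent $i$'s observation, which in turn depends only on the states of agents within distance $R$ of agent $i$. Over one timestep, an agent that was outside distance $2R + 2\bar{d}$ cannot be within $R$ of agent $i$ at either $\vx$ or $\vx^+$ (since positions move by at most $\bar d$). Hence the transition of agent $i$ — and the values of $\tilde{B}$ on it — is entirely determined by the restriction $\vx_{i,\bar{N}}$ of $\vx$ to agent $i$ together with its $\bar{N}-1$ nearest neighbors, where $\bar{N}$ is exactly the max number of agents that can fit in a ball of radius $2R + 2\bar{d}$.

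First I would fix $\vx \in \bm{\mathcal{X}}_N$ with $\tilde{B}(O_j(\vx)) \le 0$ for all $j$, and fix an arbitrary agent $i$. I would define $\tilde{\vx} \coloneqq \vx_{i,\bar{N}} \in \bm{\mathcal{X}}_{\bar{N}}$, the sub-configuration of agent $i$ and its $\bar N - 1$ closest neighbors (padding with far-away dummy agents if fewer than $\bar N-1$ neighbors exist, placed outside radius $2R+2\bar d$ so they never affect anything). Next I would verify that, under the per-agent policy $\mu$, the observation of agent $i$ agrees in the two systems: $O_i(\tilde{\vx}) = O_i(\vx)$ and $O_i^+(\tilde{\vx}) = O_i^+(\vx)$. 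For the first equality, any agent in $\mathcal{N}_i(\vx)$ is within distance $R \le 2R+2\bar d$ of agent $i$, hence is among the $\bar N-1$ nearest neighbors retained in $\tilde{\vx}$; agents dropped from $\vx$ are outside $\mathcal{N}_i(\vx)$ and so do not enter $O_i$. For the second equality, I use the decoupled dynamics: agent $i$'s next state $f_0(x_i, u_i)$ and each retained neighbor's next state depend only on their own current state and control, and any agent that could be in $\mathcal{N}_i(\vx^+)$ must be within $R$ of agent $i$ at $\vx^+$, hence within $R + 2\bar d < 2R + 2\bar d$ of agent $i$ at $\vx$, so it too was retained in $\tilde{\vx}$.

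Then I would apply the hypothesis \eqref{eq: dgcbf_safety_proof:Nbar} to $\tilde{\vx}$: the premise $\tilde{B}(O_j(\tilde{\vx})) \le 0$ for all $\bar N$ agents $j$ in $\tilde{\vx}$ holds because each such $O_j(\tilde{\vx})$ equals $O_j(\vx)$ (the same locality argument applied to agent $j$) — though here a subtlety arises: an agent $j$ retained in $\tilde{\vx}$ may have had neighbors in the full system $\vx$ that are absent in $\tilde{\vx}$, so $O_j(\tilde{\vx}) \ne O_j(\vx)$ in general. I expect this to be the main obstacle. The resolution is that \eqref{eq: dgcbf_safety_proof:Nbar} only needs to be invoked with agent $i$ in the distinguished role, and the premise must hold for the particular $\bar N$-agent configuration $\tilde{\vx}$; so I would instead argue that $\tilde{B}(O_j(\tilde{\vx})) \le 0$ follows either by also assuming $\tilde{B}$ is monotone/robust under removing far neighbors (which the attention form in \Cref{thm:attn_dgcbf} gives, since removed agents carry zero weight when far enough), or by noting the premise of \eqref{eq: dgcbf} is a hypothesis we get to assume about the $N$-agent system and we only deduce the conclusion for agent $i$, whose observation is faithfully reproduced. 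Granting the premise for $\tilde{\vx}$, \eqref{eq: dgcbf_safety_proof:Nbar} yields $\tilde{B}(O_i^+(\tilde{\vx})) - \tilde{B}(O_i(\tilde{\vx})) + \alpha(\tilde{B}(O_i(\tilde{\vx}))) \le 0$, and substituting the two observation equalities gives exactly \eqref{eq: dgcbf_safety_proof:N} for agent $i$. Since $i$ was arbitrary, $\tilde{B}$ satisfies the DGCBF conditions for $N$ agents.
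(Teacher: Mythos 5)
Your proposal takes essentially the same route as the paper's proof: restrict $\vx$ to agent $i$ and its $\bar{N}-1$ closest neighbors, use the per-step travel bound $\bar{d}$ together with the decoupled per-agent dynamics to show that both $O_i$ and $O_i^+$ are unchanged by the restriction (these are the paper's \Cref{lem: cur_obs_invariant} and \Cref{lem: next_obs_invariant}, which, like your argument, use $B(\vx)\leq 0$ and the definition of $\bar{N}$ to guarantee that the restriction retains every agent within $2R+2\bar{d}$ of agent $i$), and then invoke \eqref{eq: dgcbf_safety_proof:Nbar} at $\tilde{\vx}=\vx_{i,\bar{N}}$ and substitute the two observation equalities.

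The subtlety you flag is genuine: \eqref{eq: dgcbf_safety_proof:Nbar} is an implication whose premise must be verified at the restricted configuration $\tilde{\vx}$, and a retained agent $j$ that is far from $i$ can lose close-by neighbors under the restriction, so $\tilde{B}(O_j(\tilde{\vx}))\leq 0$ does not follow from $\tilde{B}(O_j(\vx))\leq 0$. Note that the paper's own proof is silent on exactly this point---it applies \eqref{eq: dgcbf_safety_proof:Nbar} without checking its premise at $\vx_{i,\bar{N}}$---so you have identified a gap shared with the paper rather than introduced a new one. Of your two suggested repairs, the second is not sound: the implication is being used at the $\bar{N}$-agent state $\tilde{\vx}$, so assuming its premise only for the original $N$-agent state does not license the conclusion. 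The first repair (insensitivity of $\tilde{B}$ to removed agents, via the attention form of \Cref{thm:attn_dgcbf_formal}) helps only partially: the zero-weight condition \eqref{eq: attn_weight_cond} applies to agents near agent $j$'s sensing boundary, whereas an agent dropped because it is far from $i$ can still be well inside agent $j$'s sensing radius, so $\tilde{B}(O_j(\tilde{\vx}))$ can genuinely differ from $\tilde{B}(O_j(\vx))$. A clean fix is to localize the hypothesis---require the descent inequality at agent $i$ whenever $\tilde{B}(O_j)\leq 0$ holds for all agents $j$ in a sufficiently large neighborhood of $i$, rather than for all agents---under which your reduction, and the paper's, goes through verbatim.
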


We will prove \Cref{thm:dgcbf_generalize_thm} by showing that \eqref{eq: dgcbf_safety_proof:N} holds because it can be reduced to the case of \eqref{eq: dgcbf_safety_proof:Nbar}.
For convenience, define $B$ as in \Cref{thm: dgcbf_dcbf} so that
\begin{equation}
    \tilde{B}( o_j(\vx) ) \leq 0,\; \forall j \quad \iff \quad B(\vx) \leq 0.
\end{equation}
Before we prove \Cref{thm:dgcbf_generalize_thm}, we first prove a few helpful lemmas.

\begin{Lemma}[Restriction leaves observations of all agents within $R+2\bar{d}$ invariant] \label{lem: cur_obs_invariant}
    For any $\vx \in \bm{\mathcal{X}}_N$ such that $B(\vx) \leq 0$,
    the restriction of $\vx$ to that of agent $i$ and its $\bar{N}-1$ closest neighbors leaves the observation of agent $i$
    and all agent within $R+2\bar{d}$ of $i$ invariant, i.e., for any $i$,
    \begin{equation}
        \norm{p_j - p_i} \leq R+2\bar{d} \quad \implies \quad O_j(\vx) = O_j(\vx_{i, \bar{N}})
    \end{equation}
\end{Lemma}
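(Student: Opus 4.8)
The plan is to show that the restriction $\vx \mapsto \vx_{i,\bar{N}}$ deletes only agents that are too far from $i$ to influence the observation of $i$ or of any agent within $R + 2\bar{d}$ of it; everything then follows by reading off the observation function \eqref{eq: obs_fn}. The quantitative input is the definition of $\bar{N}$ as the largest number of agents that can lie in a ball of radius $2R + 2\bar{d}$. So first I would fix $i$ and set $S \coloneqq \{k : \norm{p_k - p_i} \le 2R + 2\bar{d}\}$, noting that $i \in S$ and $|S| \le \bar{N}$. Since every agent outside $S$ is \emph{strictly} farther from $i$ than every agent inside $S$, when we sort all agents by distance to $i$ and keep the first $\bar{N}$ (agent $i$ plus its $\bar{N}-1$ closest neighbors), we retain all of $S$. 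Consequently the states $x_k$ of all $k \in S$, together with the non-agent state $y$ (which the restriction keeps, since $\vx_{i,\bar{N}} \in \bm{\mathcal{X}}_{\bar{N}}$ still carries $y$), survive the restriction unchanged.

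Next, take any $j$ with $\norm{p_j - p_i} \le R + 2\bar{d}$. Then $j \in S$, so $x_j$ is retained. By the triangle inequality, any neighbor $k \in \mathcal{N}_j(\vx)$ satisfies $\norm{p_k - p_i} \le \norm{p_k - p_j} + \norm{p_j - p_i} \le R + (R + 2\bar{d}) = 2R + 2\bar{d}$, so $\mathcal{N}_j(\vx) \subseteq S$ and all of $j$'s neighbor states are retained. Because the restriction only removes agents and never adds them, and we have just shown that none of $\mathcal{N}_j(\vx)$ is removed, the neighborhood itself is unchanged: $\mathcal{N}_j(\vx_{i,\bar{N}}) = \mathcal{N}_j(\vx)$. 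Reading off \eqref{eq: obs_fn}, $O_j$ consists of $o_{jk} = O^a(x_j, x_k)$ for $k \in \mathcal{N}_j$ and $o_j^y = O^y(x_j, y)$, each of which depends only on quantities that the restriction leaves invariant, so $O_j(\vx) = O_j(\vx_{i,\bar{N}})$, which is the claim.

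The argument is essentially bookkeeping — the triangle inequality plus the pigeonhole bound built into the definition of $\bar{N}$ — so there is no serious obstacle. The one point deserving care is the identity $\mathcal{N}_j(\vx_{i,\bar{N}}) = \mathcal{N}_j(\vx)$, which needs both inclusions: ``$\subseteq$'' because restriction cannot create new neighbors, and ``$\supseteq$'' because, as shown, every neighbor of $j$ lies in $S$ and is therefore retained; one should also observe that ties in the ``$\bar{N}-1$ closest'' ranking are harmless, since $S$ and its complement are separated by a strict inequality. I would also remark that the hypothesis $B(\vx) \le 0$ plays no role in this lemma and is stated only because the lemma will be invoked in that regime.
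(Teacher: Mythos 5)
Your main argument is the same as the paper's: the restriction keeps every agent within $2R+2\bar{d}$ of agent $i$, the triangle inequality shows that every neighbor of any agent $j$ with $\norm{p_j-p_i}\le R+2\bar{d}$ lies within $2R+2\bar{d}$ of $i$, hence $\mathcal{N}_j(\vx)=\mathcal{N}_j(\vx_{i,\bar{N}})$ and the observation \eqref{eq: obs_fn} is unchanged. Your write-up is in fact more careful than the paper's on the two-sided inclusion of the neighborhood and on ties in the ``$\bar{N}-1$ closest'' ranking, which is fine.

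The one genuine error is your closing remark that the hypothesis $B(\vx)\le 0$ ``plays no role.'' It is exactly what licenses the step $\abs{S}\le\bar{N}$, which your own argument needs in order to conclude that the restriction retains all of $S$. The constant $\bar{N}$ is the maximum number of agents that can be packed in a ball of radius $2R+2\bar{d}$, and that packing bound is finite only because safe agents keep a minimum pairwise separation (the avoid functions include inter-agent collision, $h^{(1)}(o_i)=2r-\min_{j\in\mathcal{N}_i}\norm{p_i-p_j}$, and $B(\vx)\le 0$ enforces $h^{(1)}\le 0$, i.e., separation of at least $2r$). In an unsafe state the agents may be arbitrarily densely packed, more than $\bar{N}$ of them can lie within $2R+2\bar{d}$ of agent $i$, and the restriction to the $\bar{N}-1$ closest neighbors would then drop a member of $S$, breaking the lemma. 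This is precisely how the paper's proof opens: ``Since $B(\vx)\le 0$, by definition of $\bar{N}$ there can be no more than $\bar{N}-1$ agents within radius $2R+2\bar{d}$ of agent $i$.'' So keep your bookkeeping, but replace the dismissal of the hypothesis with this justification of $\abs{S}\le\bar{N}$.
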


\begin{proof}
    Since $B(\vx) \leq 0$, by definition of $\bar{N}$ there can be no more than $\bar{N} - 1$ agents within radius of $2R + \bar{d}$ of agent $i$.
    Hence, $\vx_{i, \bar{N}}$ will include all agents within a radius of $2R + \bar{d}$ of agent $i$.

    By definition of the observation function $O_i$ \eqref{eq: obs_fn},
    \begin{equation}
        O_i(\vx) = \Big( \{o_{ij}\}_{j\in\mathcal N_i(\vx)},\; o_i^y \Big).
    \end{equation}

$O_j(\vx)$ depends only on all agents $l$ that are at most $R$ away from $j$.
    Since $\vx$ and $\vx_{i, \bar{N}}$ agree on all agents up to $2R + 2\bar{d}$ away from $i$,
    this implies that for all $j$ that are $R + 2\bar{d}$ away from $i$,
\begin{equation}
      \mathcal{N}_j(\vx) = \mathcal{N}_j(\vx_{i,\bar{N}}),
    \end{equation}
    thus the observation is unchanged as well.
\end{proof}

Next, we show that the observation of the \textit{next} state for agent $i$ is also left unchanged after restriction.

\begin{Lemma}[Restriction leaves the next observation for agent $i$ unchanged] \label{lem: next_obs_invariant}
    For any $\vx \in \bm{\mathcal{X}}_N$ such that $B(\vx) \leq 0$,
    let $\vx^+ = f_N(\vx, \vmu(\vx))$ denote the next state under $\vmu$ for $\vx$,
    and $\tilde{\vx}^+ = f_{\bar{N}}(\vx_{i, \bar{N}}, \vmu(\vx_{i, \bar{N}}))$ the next state under $\vmu$ starting from $\vx_{i, \bar{N}}$.
    Then, for any $i$,
    \begin{equation}
         O_i( \vx^+ ) = O_i( \tilde{\vx}^+ ).
    \end{equation}
\end{Lemma}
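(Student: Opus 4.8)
The plan is to reduce the claim to a purely agent-wise statement using the decoupled dynamics and the distributed nature of $\vmu$, and then invoke \Cref{lem: cur_obs_invariant} to push the current-step observation equality through the policy and dynamics to the next step. By definition, $O_i(\vx^+)$ is determined by the states $x_j^+$ of the agents $j$ with $\norm{p_j^+ - p_i^+} \le R$, together with the non-agent state $y^+$; the latter is unaffected by the removal of far-away agents, so $y^+ = \tilde{y}^+$. Hence it suffices to show (a) that the full $N$-agent MAS and the restricted $\bar{N}$-agent MAS agree on which agents lie within $R$ of agent $i$ at the next step, and (b) that for every such agent $j$ we have $x_j^+ = \tilde{x}_j^+$.

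First I would use the displacement bound: since each agent moves at most $\bar{d}$ per step, $\norm{p_j^+ - p_i^+} \le R$ forces $\norm{p_j - p_i} \le R + 2\bar{d}$. So every agent that could possibly belong to $\mathcal{N}_i(\vx^+)$ or $\mathcal{N}_i(\tilde{\vx}^+)$ already lies within $R + 2\bar{d}$ of agent $i$ at the current step. By (the argument in) \Cref{lem: cur_obs_invariant}, all such agents are retained by the restriction $\vx_{i,\bar N}$ and satisfy $O_j(\vx) = O_j(\vx_{i,\bar N})$. Because the policy depends only on the observation, $u_j = \mu(O_j(\vx)) = \mu(O_j(\vx_{i,\bar N})) = \tilde{u}_j$, and the decoupled dynamics $x_j^+ = f_0(x_j, u_j)$ then give $x_j^+ = \tilde{x}_j^+$, establishing (b). Applying the same equality to the positions shows that the predicate $\norm{p_j^+ - p_i^+} \le R$ has the same truth value in both MASs for every candidate $j$, which is (a). Combining (a), (b), and $y^+ = \tilde{y}^+$ yields $O_i(\vx^+) = O_i(\tilde{\vx}^+)$.

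The main obstacle I expect is the bookkeeping that guarantees the restriction actually retains every agent relevant to the next step: one must confirm that $B(\vx) \le 0$ (hence no collisions, so a minimum inter-agent separation holds) forces at most $\bar{N} - 1$ other agents within radius $2R + 2\bar{d}$ of agent $i$, so that $\vx_{i,\bar N}$ — agent $i$ plus its $\bar{N}-1$ nearest neighbors — contains all of them. This is exactly the counting argument behind the definition of $\bar{N}$ and \Cref{lem: cur_obs_invariant}, so the work is in applying it consistently. A secondary point to watch is the factor in the displacement bound: the pairwise distance can change by up to $2\bar{d}$ (not $\bar{d}$) in one step, and keeping the radius $R + 2\bar{d}$ throughout — matching the threshold in \eqref{eq: attn_weight_cond} — avoids an off-by-a-factor error that would break the containment.
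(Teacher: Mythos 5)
Your proposal is correct and follows essentially the same route as the paper's proof: bound the current-step distance of any next-step neighbor by $R+2\bar{d}$, invoke \Cref{lem: cur_obs_invariant} to equate their observations, and push this through the shared policy $\mu$ and the decoupled dynamics $f_0$ to equate their next states. Your additional bookkeeping (checking both $\mathcal{N}_i(\vx^+)$ and $\mathcal{N}_i(\tilde{\vx}^+)$ agree, and that $y^+$ is unaffected) only makes explicit steps the paper leaves implicit.
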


\begin{proof}
    We will prove this by showing that the states of all neighbors $\mathcal{N}_i(\vx^+)$ agree.
    For each new neighbor $j \in \mathcal{N}_i(\vx^+)$, since each agent can travel at most $\bar{d}$ in one step, this implies that
    \begin{equation}
        \norm{p_j - p_i} \leq R + 2 \bar{d}.
    \end{equation}
    Applying \Cref{lem: cur_obs_invariant} gives us that agent $j$'s observation is equal in both cases, i.e.,
    \begin{equation}
        O_j( \vx ) = O_j( \vx_{i, \bar{N}} ).
    \end{equation}
    Hence, the controls $\mu( O_j( \vx ))$, and thus the new states match, i.e.,
    \begin{equation}
        x_j^+ = f_0( x_j, \mu( O_j( \vx )) ) = f_0( x_j, \mu( O_j( \vx_{i, \bar{N}} )) ) = \tilde{x}^+_j.
    \end{equation}
    Since the new states for all agents in $\mathcal{N}_i(\vx^+)$ agree, this implies that the new observation for agent $i$ must also agree.
\end{proof}

We are now ready to prove \Cref{thm:dgcbf_generalize_thm}.
\begin{proof}[Proof of \Cref{thm:dgcbf_generalize_thm}]
    Let $\vx \in \bm{\mathcal{X}}_N$ such that $B(\vx) \leq 0$.
    Then, applying \Cref{lem: cur_obs_invariant} and \Cref{lem: next_obs_invariant} implies that
    the current and next observations for agent $i$ remain unchanged even when considering only the $\bar{N}$ closest agents, i.e.,
    \begin{align}
        O_i(\vx_{i, \bar{N}}) &= O_i(\vx), \\
        O_i^+(\vx_{i, \bar{N}}) &= O_i^+(\vx).
    \end{align}
    Hence, taking $\tilde{x} = \vx_{i, \bar{N}} \in \bm{\mathcal{X}}_{\bar{N}}$ and using 
    \eqref{eq: dgcbf_safety_proof:Nbar},
    \begin{align}
        \tilde{B}( O_i^+ (\vx) ) - \tilde{B}( O_i(\vx) ) + \alpha( \tilde{B}( O_i(\vx) ) )
        &= \tilde{B}( O_i^+ (\vx_{i, \bar{N}}) ) - \tilde{B}( O_i(\vx_{i, \bar{N}}) ) + \alpha( \tilde{B}( O_i(\vx_{i, \bar{N}}) ) ) \\
        &= \tilde{B}( O_i^+ ( \tilde{\vx} ) ) - \tilde{B}( O_i( \tilde{\vx} ) ) + \alpha( \tilde{B}( O_i( \tilde{\vx} ) ) ) \\
        &\leq 0.
    \end{align}
\end{proof}

\Cref{thm:dgcbf_generalize_thm} implies that finding a \textbf{single} DGCBF $\tilde{B}$ that satisfies the DGCBF condition for $\bar{N}$ agents enables the \textbf{same} DGCBF $\tilde{B}$ to \textit{also} be applied to larger numbers of agents $N > \bar{N}$.

\subsection{Proof of \texorpdfstring{\Cref{thm:dpgcbf}}{Corollary~\ref{thm:dpgcbf}}} \label{app: pf:dpgcbf}

\begin{proof}
    Since $\tilde{V}^{h^{(m)},\vmu}(o^0_i) = V^{h^{(m)}, \vmu}_i(\vx^0) \coloneqq \max_{k \geq 0} h^{(m)}(o^k_i)$, applying \Cref{thm:dpcbf} gives us that
    \begin{align}
        &\mathrel{\phantom{=}}\tilde{V}^{h^{(m)},\vmu}(o^+_i) - \tilde{V}^{h^{(m)},\vmu}(o_i) + \alpha\left( \tilde{V}^{h^{(m)},\vmu}(o_i) \right) \\
        &= V^{h^{(m)}, \vmu}_i(\vx^+) - V^{h^{(m)}, \vmu}_i(\vx) + \alpha\left( V^{h^{(m)}, \vmu}_i(\vx) \right) \\
        &\leq 0.
    \end{align}
    Thus, $\tilde{V}^{h^{(m)},\vmu}$ is a DGCBF.
\end{proof}

\section{Policy Loss Details} \label{app: pol_loss_details}
\subsection{Derivation of \eqref{eq: dcbf_safety_filter:decoupled}}
We first start from \eqref{eq: dcbf_safety_filter:stoch_ver2}, which we repeat below for convenience.
\begin{subequations}
\begin{align}
    \min_{\theta}\quad & \mathbb E_{\vx \sim \rho_{0}, \vu\sim\vpi_\theta(\cdot|\vx)}
    \big[ Q^{\vpi_\theta}(\vx, \vu) \big], \label{eq: dcbf_safety_filter:stoch_ver2:repeat:objective} \\
\mathrm{s.t.}\quad & \mathbb E_{\vx\sim\rho^{\vpi_\theta}}\underbrace{\E_{\vu\sim\vpi_\theta(\cdot \mid \vx)} \Big[ \underbrace{\max\big\{0, C^{(m)}(\vx, \vu) \big\}}_{\coloneqq \tilde{C}^{(m)}(\vx, \vu)} \Big]}_{\coloneqq \tilde{C}^{(m)}_\theta(\vx)} \leq 0, \quad \forall m. 
    \label{eq: dcbf_safety_filter:stoch_ver2:repeat:constraint}
\end{align}
\end{subequations}
In the above, we additionally define $\tilde{C}^{(m)}_\theta(\vx) \coloneqq \max\{ 0, C^{(m)}(\vx, \vu) \}$ for convenience.

Borrowing the ideas of gradient projection from multi-objective optimization \citep{yu2020gradient,liu2021conflict},
we combine the gradient from the objective minimization \eqref{eq: dcbf_safety_filter:stoch_ver2:repeat:objective}, and the gradient from constraint satisfaction
\eqref{eq: dcbf_safety_filter:stoch_ver2:repeat:constraint} (equivalent to constraint minimization due to the constraints being non-negative) by projecting the gradient of 
\eqref{eq: dcbf_safety_filter:stoch_ver2:repeat:objective}
such that it is orthogonal to the gradient of \textit{all} $m$ constraints \eqref{eq: dcbf_safety_filter:stoch_ver2:repeat:constraint}.
We can do this in a \textbf{single} backward pass by using \Cref{thm: grad_proj}.

Let $\stopgrad$ denote the \textit{stop gradient} function, such that the gradient of $\stopgrad$ is equal to zero.
Then, for a state distribution $\rho$,
\begin{align}
    \sigma
    &\coloneqq
    \mathbb{E}_{\vx \sim \rho}[ \nabla_\theta \max_m \tilde{C}_\theta^{(m)}(\vx)] \label{eq: app:pg:sigma_inside} \\
&= \mathbb{E}_{\vx \sim \rho}\, \mathbb{E}_{\vu \sim \vpi_\theta(\vx)}\left[ \nabla_\theta \log \vpi_\theta(\vx, \vu)\, \max_m \tilde{C}^{(m)}(\vx, \vu) \right], \\
&= \mathbb{E}_{\vx \sim \stopgrad(\rho)}\, \mathbb{E}_{\vu \sim \stopgrad(\vpi_\theta(\vx))}\left[ \nabla_\theta \log \vpi_\theta(\vx, \vu)\, \max_m \tilde{C}^{(m)}(\vx, \vu) \right], \\
&= \nabla_\theta\, \mathbb{E}_{\vx \sim \stopgrad(\rho)}\, \mathbb{E}_{\vu \sim \stopgrad(\vpi_\theta(\vx))}\left[ \log \vpi_\theta(\vx, \vu)\, \max_m \tilde{C}^{(m)}(\vx, \vu) \right], \label{eq: app:pg:sigma}
\end{align}
where the second line uses the score function gradient (\Cref{app: policy-gradient}), and
\begin{align}
    g
    &= \mathbb{E}_{\vx \sim \rho^{\vpi_\theta}} 
    \left[
    \ind{\max_m \tilde{C}^{(m)}_\theta(\vx) \leq 0}\,
    \mathbb{E}_{
    \vu \sim \vpi_\theta(\cdot|\vx)}
    \left[  \nabla_\theta \log \vpi_\theta(\vx, \vu) Q^{\vpi_\theta}(\vx, \vu)
    \right] \right], \label{eq: app:pg:g_inside} \\
&= \mathbb{E}_{\vx \sim \stopgrad(\rho^{\vpi_\theta})} 
    \left[
    \ind{\max_m \tilde{C}^{(m)}_\theta(\vx) \leq 0}\,
    \mathbb{E}_{
    \vu \sim \stopgrad(\vpi_\theta(\cdot|\vx))}
    \left[ \nabla_\theta \log \vpi_\theta(\vx, \vu) \stopgrad(Q^{\vpi_\theta}(\vx, \vu))
    \right] \right], \\
&= \nabla_\theta \mathbb{E}_{\vx \sim \stopgrad(\rho^{\vpi_\theta})} 
    \left[
    \ind{\max_m \tilde{C}^{(m)}_\theta(\vx) \leq 0}\,
    \mathbb{E}_{
    \vu \sim \stopgrad(\vpi_\theta(\cdot|\vx))}
    \left[ \log \vpi_\theta(\vx, \vu) \stopgrad(Q^{\vpi_\theta}(\vx, \vu))
    \right] \right],  \label{eq: app:pg:g}
\end{align}
where the second line follows from the policy gradient theorem.
Let $\tilde{g} \coloneqq \nu \sigma^{(m)} + g$.
, where we take the $\rho$ in the definition of $\sigma^{(m)}$ to be equal to $\rho^{\vpi_\theta}$ (we discuss the implications of this in the next subsection \Cref{app: pol_loss_details:pol_grad}):
\begin{Theorem} \label{thm: g_plus_sigma_gradient}
    The combined gradient
    $\tilde{g}$ can be computed as the gradient of the loss function $L$ defined in \eqref{eq: dcbf_safety_filter:decoupled}, i.e.,
    \begin{equation}
        \tilde{g}
        = \nabla_\theta L(\theta), 
    \end{equation}
    where
    \begin{align}
        L(\theta) &\coloneqq \E_{\vx \sim \stopgrad(\rho^{\vpi_\theta})} \E_{\vu \sim \stopgrad(\vpi_\theta(\cdot|\vx))} \big[ \log \vpi_\theta(\vx, \vu)\, \stopgrad\big(\tilde{Q}(\vx, \vu, \theta) \big) \big], \\
\tilde{Q}(\vx, \vu, \theta) &\coloneqq
        \begin{dcases}
            Q^{\vpi_\theta}(\vx, \vu), & \tilde{C}^{(m)}_\theta(\vx) \leq 0,\; \forall m, \\
            \nu \max_m \tilde{C}^{(m)}(\vx, \vu), & \text{otherwise}.
        \end{dcases} \\[2pt]
        &= \ind{\max_m \tilde{C}^{(m)}_\theta(\vx) \leq 0} \stopgrad(Q^{\vpi_\theta}(\vx, \vu)) + \nu \max_m \tilde{C}^{(m)}(\vx, \vu).
    \end{align}
\end{Theorem}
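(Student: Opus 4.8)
\noindent\textbf{Proof plan. } The statement is essentially that $L$ has been \emph{constructed} so that, once the $\stopgrad$'s are resolved, $\nabla_\theta L$ splits exactly into the two ingredient gradients $g$ and $\nu\sigma$ already derived in \eqref{eq: app:pg:g_inside}--\eqref{eq: app:pg:g} and \eqref{eq: app:pg:sigma_inside}--\eqref{eq: app:pg:sigma} (with $\rho=\rho^{\vpi_\theta}$ in the latter, as noted before the theorem); so $\nabla_\theta L = g + \nu\sigma = \tilde g$ will follow from linearity alone, with no new estimate. First I would record each ingredient in surrogate-loss form. By \eqref{eq: app:pg:g}, $g = \nabla_\theta L_g(\theta)$ where $L_g(\theta)\coloneqq \E_{\vx\sim\stopgrad(\rho^{\vpi_\theta})}\big[\ind{\max_m \tilde C^{(m)}_\theta(\vx)\le 0}\,\E_{\vu\sim\stopgrad(\vpi_\theta(\cdot\mid\vx))}[\log\vpi_\theta(\vx,\vu)\,\stopgrad(Q^{\vpi_\theta}(\vx,\vu))]\big]$, and by \eqref{eq: app:pg:sigma}, $\sigma=\nabla_\theta L_\sigma(\theta)$ where $L_\sigma(\theta)\coloneqq \E_{\vx\sim\stopgrad(\rho^{\vpi_\theta})}\E_{\vu\sim\stopgrad(\vpi_\theta(\cdot\mid\vx))}[\log\vpi_\theta(\vx,\vu)\,\max_m \tilde C^{(m)}(\vx,\vu)]$. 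Since $\ind{\max_m \tilde C^{(m)}_\theta(\vx)\le 0}$ depends on $\vx$ only, I would move it inside the inner $\vu$-expectation of $L_g$, so that both surrogates carry the common nested expectation $\E_{\vx}\E_{\vu}[\log\vpi_\theta(\vx,\vu)\,(\cdot)]$.

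Next I would add them. By linearity of the frozen expectations, $L_g(\theta)+\nu L_\sigma(\theta) = \E_{\vx\sim\stopgrad(\rho^{\vpi_\theta})}\E_{\vu\sim\stopgrad(\vpi_\theta(\cdot\mid\vx))}\big[\log\vpi_\theta(\vx,\vu)\big(\ind{\max_m \tilde C^{(m)}_\theta(\vx)\le 0}\stopgrad(Q^{\vpi_\theta}(\vx,\vu)) + \nu\max_m \tilde C^{(m)}(\vx,\vu)\big)\big]$. The bracketed multiplier is precisely the indicator form of $\tilde Q(\vx,\vu,\theta)$ from the statement; since the $Q$-term already sits behind $\stopgrad$ and $\tilde C^{(m)}(\vx,\vu)$ is a fixed function of $(\vx,\vu)$ (it is evaluated through the fixed constraint-value functions, not through $\theta$), inserting the outer $\stopgrad$ in $L$'s definition changes neither the value nor the gradient. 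Hence $L_g+\nu L_\sigma = L$, so $\nabla_\theta L = \nabla_\theta L_g + \nu\nabla_\theta L_\sigma = g + \nu\sigma = \tilde g$. Along the way I would also check the two displayed forms of $\tilde Q$ coincide $\vpi_\theta(\cdot\mid\vx)$-a.s.: on the event $\{\max_m \tilde C^{(m)}_\theta(\vx)\le 0\}$, the identity $\tilde C^{(m)}_\theta(\vx)=\E_{\vu}[\tilde C^{(m)}(\vx,\vu)]$ with a nonnegative integrand forces $\max_m \tilde C^{(m)}(\vx,\vu)=0$ a.s., so the extra $\nu\max_m\tilde C^{(m)}$ term drops and the indicator form reduces to the case form.

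The step I expect to be the main obstacle is the $\stopgrad$ accounting, not any analysis. It rests on using, twice, the surrogate identity that freezing a sampling distribution turns a score-function estimator into an ordinary gradient, namely $\nabla_\theta \E_{\vu\sim\stopgrad(\vpi_\theta)}[\log\vpi_\theta(\vu)\,F(\vu)] = \E_{\vu\sim\vpi_\theta}[\nabla_\theta\log\vpi_\theta(\vu)\,F(\vu)]$ for $\theta$-independent $F$, together with its state-distribution analogue (the policy-gradient theorem); both are exactly the identities already invoked in \Cref{app: policy-gradient} and in the derivations of $g$ and $\sigma$, so nothing new is needed. The other thing to get right is that wrapping the entire advantage $\tilde Q$ in $\stopgrad$ inside $L$ is what guarantees $\nabla_\theta$ acts only on the $\log\vpi_\theta$ factor; granting that, the rest is substitution.
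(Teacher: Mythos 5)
Your proposal is correct and follows essentially the same route as the paper's proof: write $g$ and $\sigma$ (with $\rho=\rho^{\vpi_\theta}$) as gradients of frozen-distribution surrogate losses, add them by linearity, and recognize the combined multiplier of $\log\vpi_\theta$ as $\tilde{Q}$ wrapped in $\stopgrad$, which is harmless since the only $\theta$-dependence it hides is already frozen or contributes zero gradient. Your additional check that the two displayed forms of $\tilde{Q}$ agree almost surely (via $\E_{\vu}[\tilde{C}^{(m)}(\vx,\vu)]\le 0$ forcing $\tilde{C}^{(m)}(\vx,\vu)=0$ a.s.) is a nice touch the paper leaves implicit, but the argument is otherwise the same.
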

\begin{proof}
    Taking $\rho = \rho^{\vpi_\theta}$ for $\sigma$ and summing the expressions for $\sigma$ and $g$ in \eqref{eq: app:pg:sigma} and \eqref{eq: app:pg:g} respectively gives
    \begin{align}
    \begin{split}
    \hspace{-.9em}
    \nu \sigma + g
        &= \nu \nabla_\theta\, \mathbb{E}_{\vx \sim \stopgrad(\rho^{\vpi_\theta})}\, \mathbb{E}_{\vu \sim \stopgrad(\vpi_\theta(\cdot | \vx))}\left[ \log \vpi_\theta(\vx, \vu)\, \max_m \tilde{C}^{(m)}(\vx, \vu) \right] \\
        &\quad + \nabla_\theta \mathbb{E}_{\vx \sim \stopgrad(\rho^{\vpi_\theta})} 
    \left[
    \ind{\max_m \tilde{C}^{(m)}_\theta(\vx) \leq 0}\,
    \mathbb{E}_{
    \vu \sim \stopgrad(\vpi_\theta(\cdot|\vx))}
    \left[ \log \vpi_\theta(\vx, \vu) \stopgrad(Q^{\vpi_\theta}(\vx, \vu))
    \right] \right]
    \end{split} \\[3pt]
    &=
    \nabla_\theta\, \mathbb{E}_{\vx \sim \stopgrad(\rho^{\vpi_\theta})}\, \mathbb{E}_{\vu \sim \stopgrad(\vpi_\theta(\cdot | \vx))}\left[ \log \vpi_\theta(\vx, \vu)
    \left(
        \nu \max_m \tilde{C}^{(m)}(\vx, \vu)
        + \stopgrad(Q^{\vpi_\theta}(\vx, \vu))
    \right)
    \right] \raisetag{3.5ex}\\
    &= 
    \nabla_\theta\, \mathbb{E}_{\vx \sim \stopgrad(\rho^{\vpi_\theta})}\, \mathbb{E}_{\vu \sim \stopgrad(\vpi_\theta(\cdot | \vx))}\Big[ \log \vpi_\theta(\vx, \vu)
    \stopgrad\big(
        \underbrace{\nu \max_m \tilde{C}^{(m)}(\vx, \vu)
        + Q^{\vpi_\theta}(\vx, \vu)
        }_{\coloneqq \tilde{Q}(\vx, \vu, \theta)}
    \big)
    \Big] \\
    &= 
    \nabla_\theta\, \mathbb{E}_{\vx \sim \stopgrad(\rho^{\vpi_\theta})}\, \mathbb{E}_{\vu \sim \stopgrad(\vpi_\theta(\cdot | \vx))}\left[ \log \vpi_\theta(\vx, \vu)
    \stopgrad\big( \tilde{Q}(\vx, \vu, \theta) \big)
    \right]
    \end{align}
\end{proof}
In other words, \Cref{thm: g_plus_sigma_gradient} implies that $\tilde{g}$ can be computed using a single backward pass.

\subsection{Implications of minimizing the constraint violation over $\rho^{\pi_\theta}$} \label{app: pol_loss_details:pol_grad}
Note that a key step that allows us to move the gradient operator between the inside \eqref{eq: app:pg:sigma_inside} and outside \eqref{eq: app:pg:sigma} for the expression of $\sigma$ is our use of $\stopgrad$, because this allows this equivalent holding even when we take $\rho = \rho^{\pi_\theta}$.

\textit{Without} the $\stopgrad$ and taking $\rho = \rho^{\vpi_\theta}$, we would obtain that \eqref{eq: app:pg:sigma} is \textit{instead} equivalent to
\begin{align}
\hspace{-0.8em}
    \nabla_\theta \E_{\vx \sim \rho^{\vpi_\theta}} \E_{\vu \sim \vpi_\theta(\cdot | \vx)} [ \max_m \tilde{C}^{(m)}(\vx, \vu)]
    &= \nabla_\theta \E_{\vx_0 \sim \rho_0, \vu^k \sim \vpi_\theta(\cdot | \vx^k)} \left[\, \underbrace{\sum_{k=0}^\infty \max_m \tilde{C}^{(m)}(\vx^k, \vu^k)}_{\coloneqq Q^{C,\vpi_\theta}(\vx_0, \vu)} \right] \label{eq: tmp:cc} \\
    &\propto
    \E_{\vx \sim \rho^{\vpi_\theta}} \E_{\vu \sim \vpi_\theta(\cdot | \vx)} \Big[
        \nabla_\theta \log \vpi_\theta(\vx, \vu)\; Q^{C,\vpi_\theta}(\vx, \vu)
    \Big], \label{eq: tmp:bb}\\
    &\not= \mathbb{E}_{\vx \sim \rho^{\vpi_\theta}}\, \mathbb{E}_{\vu \sim \vpi_\theta(\vx)}\left[ \nabla_\theta \log \vpi_\theta(\vx, \vu)\, \max_m \tilde{C}^{(m)}(\vx, \vu) \right]. \label{eq: tmp:aa}
\end{align}
Note that the second line comes from the use of the \textbf{policy gradient theorem}, while the expression in \eqref{eq: tmp:aa} comes from the application of the \textbf{score function gradient} \Cref{app: policy-gradient}.
Consequently, while \eqref{eq: tmp:aa} will minimize the maximum DCBF violation $\tilde{C}^{(m)}$,
\eqref{eq: tmp:bb} minimizes the \textit{sum} of \textbf{future} DCBF violations as well.

Another way to see this is to note that \eqref{eq: tmp:cc} can be viewed as an optimal control problem with \textit{cost} equal to the DCBF violation.
In other words, in \eqref{eq: tmp:cc} the policy will additionally try to move to states where the DCBF violation is small as opposed to changing only the control $\vu$ such that it satisfies the DCBF conditions in \eqref{eq: tmp:aa}.

\paragraph{Experimental Validation.}
To validate our intuition above, we conduct experiments in the \textsc{Transport2} environment to compare using \eqref{eq: tmp:cc} with \eqref{eq: tmp:aa}, and plot the training curves in \Cref{fig:cbf_reward}.
The results show that using \eqref{eq: tmp:cc} converges much slower in cost, which matches the intuition above.
Namely, \eqref{eq: tmp:cc} \textit{additionally} tries to avoid \textit{states} where the DCBF constraint violation is high, which leads to unnecessary conservatism.

\begin{figure}[t]
    \centering
    \includegraphics[width=\linewidth]{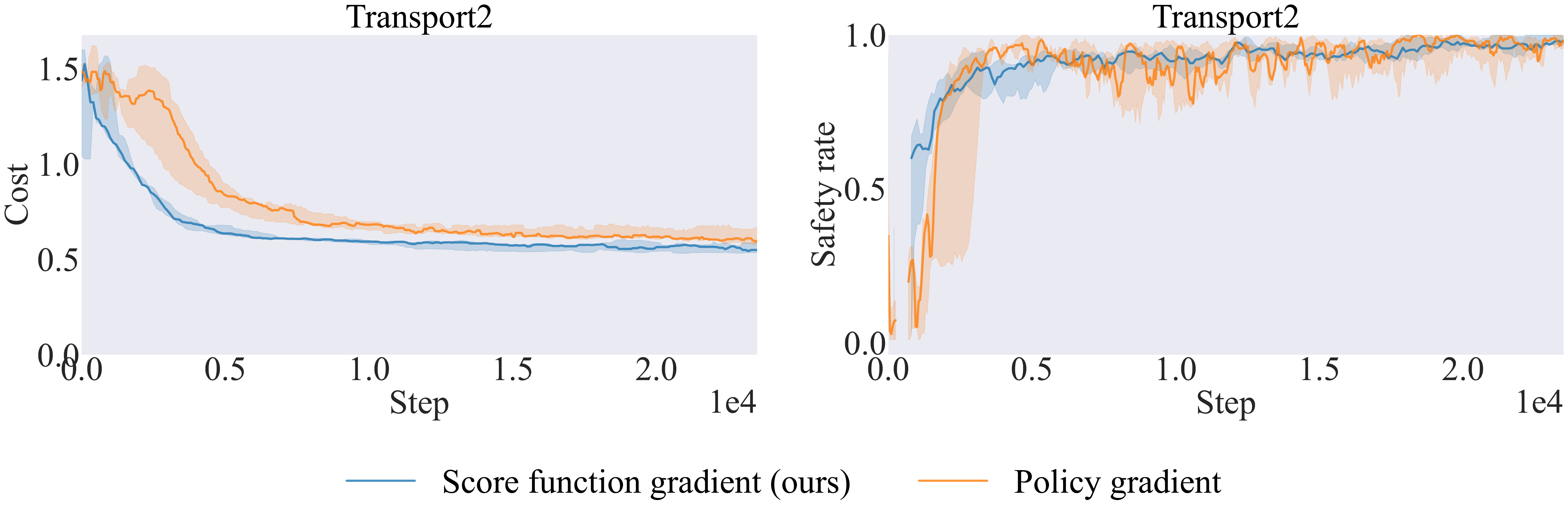}
    \caption{
    In comparison to DGPPO which uses \eqref{eq: tmp:aa},
    using \eqref{eq: tmp:cc} converge much slower in cost,
    which matches the intuition that \eqref{eq: tmp:cc} is optimizing for the wrong objective due to unnecessarily avoiding states where the DCBF constraint violation is high.
}
    \label{fig:cbf_reward}
\end{figure}

\section{Experiments}\label{app: experiments}

\subsection{Computation resources}

The experiments are run on a 13th Gen Intel(R) Core(TM) i7-13700KF CPU with 64GB RAM and an NVIDIA GeForce RTX 4090 GPU. The training time is around $12$ hours for $2\times 10^5$ steps for \ourname{}, $14$ hours for \baselinename{Lagr} and $10$ hours for \baselinename{Penalty}. 

\subsection{Environments}\label{app: experiments-environments}

\subsubsection{LiDAR environments}

In the LiDAR environments, we assume that the agents have a radius of $r=0.05$ and a local sensing radius $R=0.5$ such that one agent can observe other agents or obstacles only when they are within its sensing radius. Agents use LiDAR to detect obstacles. For each agent, there are $32$ evenly-spaced LiDAR rays originating from each agent measuring the relative location of obstacles. To reduce the size of the multi-agent graph, the $8$ shortest LiDAR rays are returned. 

We use directed graphs $\mathcal G = (\mathcal V, \mathcal E)$ to represent the LiDAR environments. $\mathcal V$ is the set of nodes containing the objects in the environments, including agents $\mathcal V_a$, goals/landmarks $V_g$, and the hitting points of LiDAR rays (obstacles) $\mathcal V_o$. The edges $\mathcal E\subseteq \{(i, j)\;|\;i\in \mathcal V_a, j\in \mathcal V\}$ denote the information flow from a sender node $j$ to a receiver node (agent) $i$. An edge $(i,j)$ exists only if the distance between node $i$ and $j$ are smaller than the sensing radius $R$. The neighbor nodes of agent $i$ is defined as $\mathcal N_i\coloneqq\{j\;|\;(i,j)\in\mathcal E\}$, so that the information flow happens between the agents and their neighbors. The node features $v_i$ include the state of the node $x_i$ and a one-hot encoding of the type of the node $i$ (e.g., agent, goal/landmark, LiDAR hitting points). The edge features $e_{ij}$ include the information passed from node $j$ to node $i$, including the relative positions and velocities. 

In all LiDAR environments, we include $3$ rectangle-shaped obstacles, and the agents need to avoid inter-agent collision and agent-obstacle collisions.

We consider $4$ LiDAR environments: \textsc{Target}, \textsc{Spread}, \textsc{Line}, and \textsc{Bicycle}:
\paragraph{\textsc{Target}:} The agents need to reach their pre-assigned goals (\Cref{fig: lidarnav}).

\paragraph{\textsc{Spread}:} The agents need to collectively cover a set of goals without having access to an assignment (\Cref{fig: lidarspread}).

\paragraph{\textsc{Line}:} The agents need to form a line between two given landmarks (\Cref{fig: lidarline}).

\paragraph{\textsc{Bicycle}:} The agents follow the more difficult bicycle dynamics. The task here is the same as \textsc{Target} (\Cref{fig: lidarbicycle}).

The agents in the first $3$ environments follow the double integrator dynamics. The state of agent $i$ is $x_i = [p^x_i, p^y_i, v^x_i, v^y_i]^\top$, where $[p^x_i,p^y_i]^\top \coloneqq p_i\in\mathbb R^2$ is the position of agent $i$, and $[v^x_i, v^y_i]$ is its velocity. The control inputs are $u_i = [a^x_i, a^y_i]^\top$, which are the acceleration along the x-axis and y-axis. The agents follow the dynamics
\begin{equation}
    \dot x_i = \begin{bmatrix}
        v^x_i & v^y_i & a^x_i & a^y_i
    \end{bmatrix}^\top.
\end{equation}
We limit the control inputs of the agents to be within $[-1, 1]$, and also the velocities to be within $[-10, 10]$. In the \textsc{Bicycle} environment, the state of agent $i$ is defined with $x_i = [p^x_i, p^y_i, \cos\theta, \sin\theta, v]^\top$, where $\theta$ is the heading and $v$ is the speed. The control inputs are $u_i = [\delta, a]^\top$, including the steering angle $\delta$ and the acceleration $a$. The agents follow the bicycle dynamics given by 
\begin{equation}
    \dot x_i = \begin{bmatrix}
        v \cos\theta & v \sin\theta & -v\sin\theta\tan\delta & v\cos\theta\tan\delta & a
    \end{bmatrix}^\top.
\end{equation}
The control inputs are limited by $v\in[-10, 10]$ and $\delta\in[-1.47, 1.47]$. In all LiDAR environments, we use a simulation time step of $0.03$ seconds and a total horizon of $128$ time steps. For all LiDAR environments, the edge features are defined as the relative positions and relative velocities between the nodes. 

In LiDAR environments, the constraint function $h$ contains two parts: agent-agent collisions and agent-obstacle collisions. The agent-agent collisions $h^{(1)}$ function is defined as 
\begin{align}
    h^{(1)}(o_i) = 2r - \min_{j\in\mathcal N_i} \|p_i - p_j\|, 
\end{align}
and the agent-obstacle collision $h^{(2)}$ function is 
\begin{align}
    h^{(2)}(o_i) = r - \min_{j\in\mathcal N_i} \|p_i - p_j\|.
\end{align}

For the cost functions $l$, we consider two types of them. The first type is used in the \textsc{Target} and the \textsc{Bicycle} environments, where the agents need to \textit{reach} their pre-assigned goals. We define this type of cost function with
\begin{align}
    l(\vx, \vu) = \frac{1}{N}\sum_{i=1}^N \left(0.01\|p_i - p_i^\mathrm{goal}\| + 0.001\mathrm{sign}\left(\mathrm{ReLU}(\|p_i - p_i^\mathrm{goal}\| - 0.01)\right) + 0.0001\|u_i\|^2\right),
\end{align}
where the first term penalizes the agents if they cannot reach the goal, the second term penalizes the agents if they cannot reach the goal exactly, and the third term encourages small controls. The second type of the cost functions is used in the \textsc{Spread} and the \textsc{Line} environments, where the agents need to \textit{cover} some goals/landmarks. We define this type of cost function with 
\begin{align}
    l(\vx, \vu) = \frac{1}{N}\sum_{j=1}^N \min_{i\in\mathcal V_a}&\left(0.01\|p_i - p_j^\mathrm{goal}\| + 0.001\mathrm{sign}\left(\mathrm{ReLU}(\|p_i - p_j^\mathrm{goal}\| - 0.01)\right)\right. \nonumber\\
    & \left.+ 0.0001\|u_j\|^2\right).
\end{align}
Here, each goal finds its nearest agent and penalizes the whole team with the distance between them. In this way, the optimal policy of the agents is to cover all goals collaboratively. 

\subsubsection{MuJoCo environments}
For the \textsc{Transport} environment, we model the agents as double integrators and control the forces applied to each agent using the MuJoCo simulator \citep{mujoco}. We limit the control inputs of the agents to be within $[-1, 1]$. The agents need to collaboratively push a box from the inside so that the box reaches a given goal while avoiding colliding with each other.

We use a similar cost function $l(\vx, \vu)$ as the \textsc{Target} environment but only for the box, i.e.,
\begin{equation}
    l(\vx, \vu) = 0.01\|p_{\text{box}} - p_{\text{box}}^\mathrm{goal}\| + 0.001\mathrm{sign}\left(\mathrm{ReLU}(\|p_{\text{box}} - p_{\text{box}}^\mathrm{goal}\| - 0.01)\right)
\end{equation}

We usee a single constraint function $h^{(1)}$ for the agent-agent collision, defined as 
\begin{align}
    h^{(1)}(o_i) = 2r - \min_{j\in\mathcal N_i} \|p_i - p_j\|, 
\end{align}
where $p_i$ denotes the position of agent $i$, and $r$ is the radius of the agent.

\subsubsection{VMAS environments}
We use the 
\textsc{ReverseTransport} (which we call \textsc{Transport2}) and \textsc{Wheel}
environments from VMAS \citep{bettini2022vmas,bettini2024benchmarl}. Note that the obstacles in the VMAS environments are not represented using LiDAR but using states including their positions and sizes.
In both environments, the agents are modeled as double integrators and control the forces applied to each agent. We limit the control inputs of the agents to be within $[-1, 1]$.

\paragraph{\textsc{Transport2}:} In this environment, $N$ agents are placed in a square red package and must push the package from the inside to a goal location.
The red package and the goal location are randomly sampled.
Unlike the original \textsc{ReverseTransport} environment, we include the following two additional safety constraints:
\begin{itemize}[leftmargin=2em]
    \item \textbf{Inter-agent collision avoidance:} The agents must avoid colliding with each other.
    \item \textbf{Package collision avoidance:} The center of the package must avoid colliding with three randomly placed circular obstacles.
\end{itemize}

We use a similar cost function $l(\vx, \vu)$ as the \textsc{Target} environment but only for the package, i.e.,
\begin{equation}
    l(\vx, \vu) = 0.01\|p_{\text{package}} - p_{\text{package}}^\mathrm{goal}\| + 0.001\mathrm{sign}\left(\mathrm{ReLU}(\|p_{\text{package}} - p_{\text{package}}^\mathrm{goal}\| - 0.01)\right)
\end{equation}

We usee a two constraint functions $h^{(1)}$, $h^{(2)}$ for the agent-agent collision and package-obstacle collisions respectively. The agent-agent collision function $h^{(1)}$ is defined as 
\begin{align}
    h^{(1)}(o_i) = 2r - \min_{j\in\mathcal N_i} \|p_i - p_j\|, 
\end{align}
where $p_i$ denotes the position of agent $i$, and $r$ is the radius of the agent.
The package-obstacle collision function $h^{(2)}$ is defined as
\begin{align}
    h^{(2)}(o_i) = r_{\text{obs}} - \min_{q \in \{1, 2, 3\} } \|p_{\text{package}} - p_{q}\|.
\end{align}

\paragraph{\textsc{Wheel}:} In this environment, $N$ agents must collectively rotate a line anchored to the origin with a large mass.
Unlike the original \textsc{Wheel} environment, we modify the goal to be a target angle that the line must rotate to.
We also include the following two additional safety constraints:
\begin{itemize}[leftmargin=2em]
    \item \textbf{Inter-agent collision avoidance:} The agents must avoid colliding with each other.
    \item \textbf{Line collision avoidance:} The angle of the line must stay outside of a certain range of angles. 
\end{itemize}

We use the same cost function $l(\vx, \vu)$ as the \textsc{Target} environment but on the angle of the line, i.e.,
\begin{equation}
    l(\vx, \vu) = \frac{1}{N}\sum_{i=1}^N \left(0.01\|p_i - p_i^\mathrm{goal}\| + 0.001\mathrm{sign}\left(\mathrm{ReLU}(\|p_i - p_i^\mathrm{goal}\| - 0.01)\right) + 0.0001\|u_i\|^2\right).
\end{equation}

We usee a two constraint functions $h^{(1)}$, $h^{(2)}$ for the agent-agent collision and line-obstacle collisions respectively. The agent-agent collision function $h^{(1)}$ is defined as 
\begin{align}
    h^{(1)}(o_i) = 2r - \min_{j\in\mathcal N_i} \|p_i - p_j\|, 
\end{align}
where $p_i$ denotes the position of agent $i$, and $r$ is the radius of the agent.
The package-obstacle collision function $h^{(2)}$ is defined as
\begin{align}
    h^{(2)}(o_i) = r_{\text{obs}} - \abs{\theta_{\text{line}} - \theta_{\text{obs}}},
\end{align}
where the absolute value on the angle $\abs{\cdot}$ is defined as the minimum angle between the two angles.

\subsection{Implement details and hyperparameters}

In our experiment, we let all agents share the same parameters for their policies and value functions. More specifically, we parameterize the agent's policy with $\pi_\theta:\mathcal O \to \mathcal U$, cost-value function with $V^l_\phi:\bm{\mathcal X}\to\mathbb R$, and constraint-value function $V^{h^{(m)}}_\psi: \mathcal O\to\mathbb R^m$ using graph transformers \citep{shi2020masked} with parameters $\theta$, $\phi$, and $\psi$, respectively. Since the cost-value function $V^l_\phi$ is centralized, after the graph transformer, we compute the average of all node features and pass that to a final layer (multi-layer perceptron) to obtain the global cost value for the MAS. 

In \Cref{tab: params-shared}, we provide the value of the common hyperparameters for \algname{} and the baselines. Besides these common hyperparameters, the value of the unique hyperparameters of \algname{} are provided in \Cref{tab: params-alg}. 

\begin{table}[t]
    \centering
    \caption{Common hyperparameters of \algname{} and the baselines.}
    \label{tab: params-shared}
    \begin{tabular}{lc|lc}
        \toprule
        Hyperparameter & Value & Hyperparameter & Value \\
        \midrule
        policy GNN layers & 2 & RNN type & GRU \\
        massage passing dimension & 32 & RNN data chunk length & 16\\
        GNN output dimension & 64 & RNN layers & 1 \\
        number of attention heads & 3 & number of sampling environments & 128 \\
        activation functions & ReLU & gradient clip norm & 2 \\
        GNN head layers & (32, 32) & entropy coefficient & 0.01 \\
        optimizer & Adam & GAE $\lambda$ & 0.95 \\
        discount $\gamma$ & 0.99 & clip $\epsilon$ & 0.25 \\
        policy learning rate & 3e-4 & PPO epoch & 1 \\
        $V^l$ learning rate & 1e-3 & batch size & 16384 \\
        network initialization & Orthogonal & layer normalization & True \\
        $V^l$ GNN layers & 2 &  \\
        \bottomrule
    \end{tabular}
\end{table}

\begin{table}[t]
    \centering
    \caption{Unique hyperparameters of \algname{}}
    \label{tab: params-alg}
    \begin{tabular}{l|c}
        \toprule
        Hyperparameter & Value \\
        \midrule
        $V^h$ GNN layers & 1 \\
        $a$ & 0.3 \\
        $\nu$ & Scheduled. Initialized at $\nu=1$, then doubled at $0.5$ and $0.75$ of the total update steps. \\
        \bottomrule
    \end{tabular}
\end{table}

\subsection{Implementation of the baselines}\label{app: implementation-baseline}

We implement a JAX \citep{jax2018github} version of the baselines following their original implementations:

\begin{itemize}[leftmargin=0.5cm]
    \item InforMARL: \href{https://github.com/nsidn98/InforMARL}{https://github.com/nsidn98/InforMARL} (MIT license)
    \item MAPPO-L: \href{https://github.com/chauncygu/Multi-Agent-Constrained-Policy-Optimisation}{https://github.com/chauncygu/Multi-Agent-Constrained-Policy-Optimisation} (MIT License)
\end{itemize}

\subsection{Training curves}\label{app: training-curve}

Here, we provide the cost and safety rate during training for all algorithms in \Cref{fig: train_all}. We can observe that \ourname{} achieves stable training in all environments with only one constant set of hyperparameters. 

\begin{figure}[t]
    \centering
    \includegraphics[width=0.995\columnwidth]{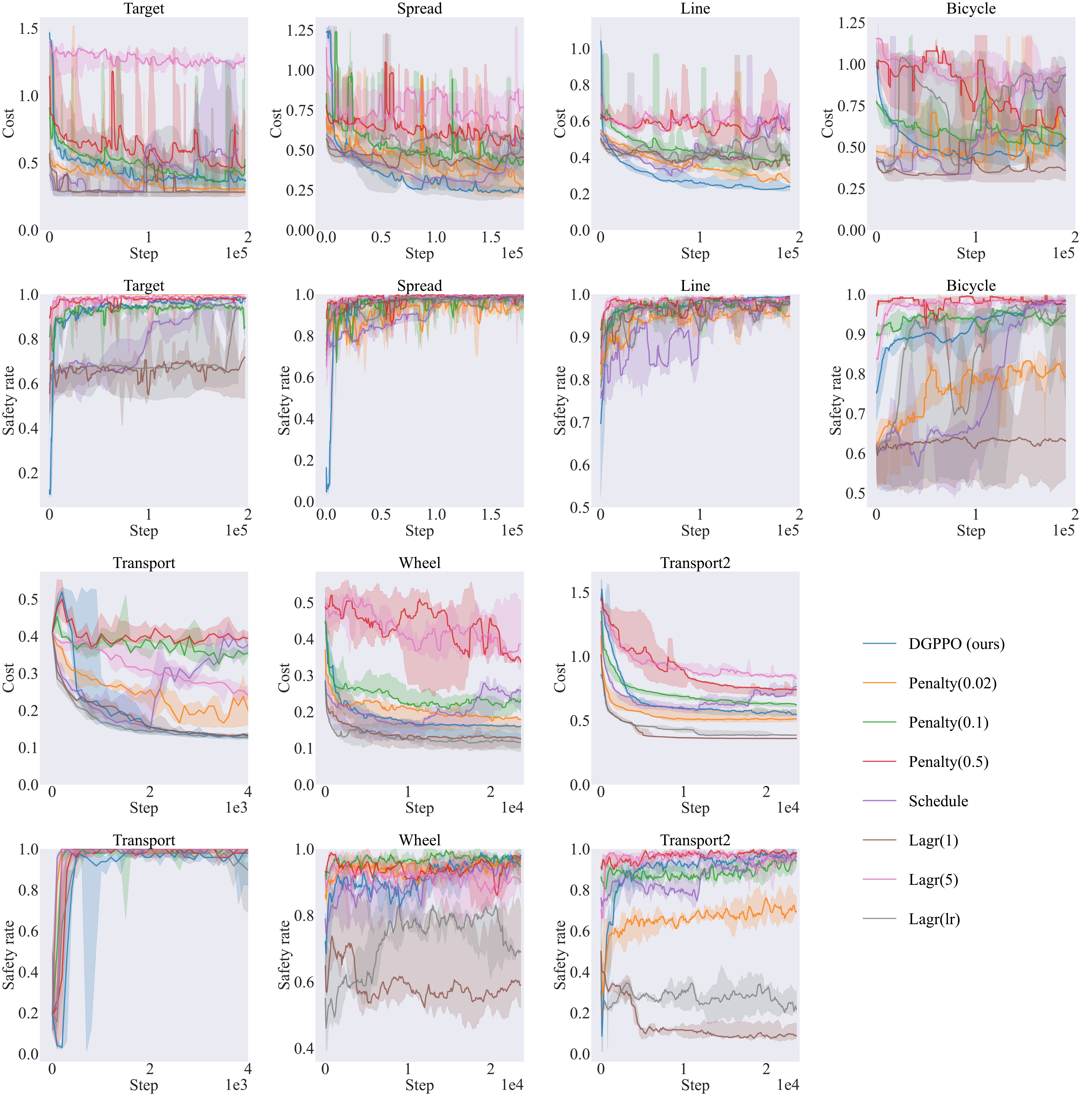}
    \caption{Costs and safety rates of \algname{} and the baselines during training.}
    \label{fig: train_all}
\end{figure}

\subsection{Additional ablation studies}\label{app: ablation}

\subsubsection{Comparison between the decoupling method and the coupled methods}\label{app: decouple-vs-couple}

In \Cref{sec: cbf-model-free}, we have introduced a decoupling method to performance gradient descent update following \Cref{eq: dcbf_safety_filter:decoupled}. Here, we empirically compare the decoupling method \eqref{eq: dcbf_safety_filter:decoupled} with the CRPO-style coupling method \eqref{eq: crpo_ver2} and the CRPO-style coupling method with the constraint being on $C(x, u)$ instead of $\max\{0, C(x, u)\}$ (\Cref{eq: dcbf_safety_filter:stoch_ver1}). We conduct experiments in the \textsc{Transport2} environment and compare the safety rate and cost of the converged policies with different methods in \Cref{fig: crpo-ablation}. We can observe that DGPPO achieves a much lower cost compared with the coupling methods. This is because the coupling methods perform gradient descent to minimize $Q^{\vpi_\theta}$ only when the \textit{whole trajectory} is safe, i.e., $\mathbb E_{\vx\sim\rho^{\vpi_\theta}}[\tilde C_\theta(\vx)] \leq 0$. This is much more conservative than the safety requirement on the per-transition level used by the decoupling method, especially in the multi-agent case. Therefore, the coupling method has little chance to minimize $Q^{\vpi_\theta}$ during training but focuses on safety, resulting in a safe policy with poor performance. On the other hand, if the coupling method is performed with $C(x, u)$ instead of $\max\{0, C(x, u)\}$ (\Cref{eq: dcbf_safety_filter:stoch_ver1}), the learned policy is no longer safe. This matches our discussion in \Cref{sec: cbf-model-free}.

\begin{figure}[t]
    \centering
    \includegraphics[width=0.6\columnwidth]{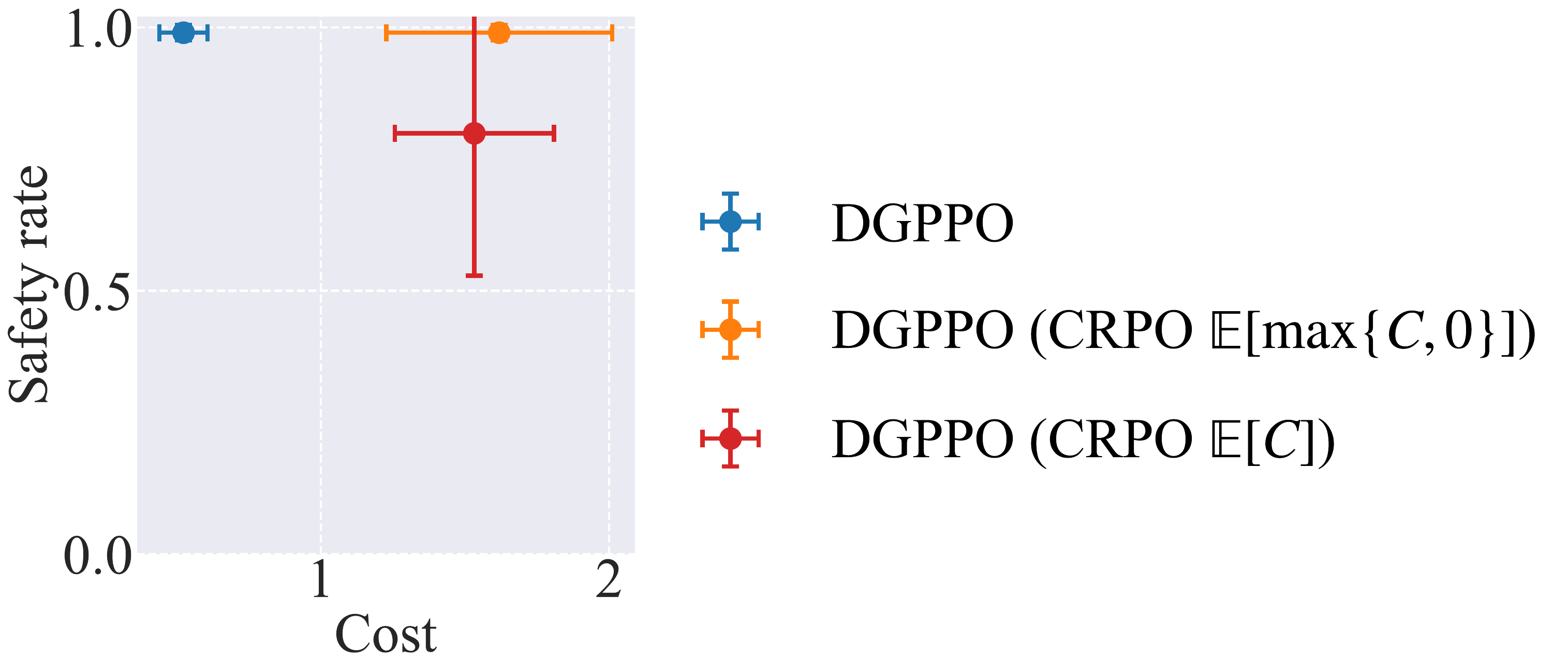}
    \caption{Comparison between \algname{} with the decoupling method and the CRPO-style update with $\mathbb E[\max\{0, C\}]$ and with $\mathbb E[C]$.}
    \label{fig: crpo-ablation}
\end{figure}

\subsubsection{Provide the baselines with more data}\label{app: more-data}

In \Cref{sec: final_method}, we introduced that \algname{} requires sampling with both a stochastic and a deterministic policy. This means that \algname{} requires twice as much data per update step compared with the baselines, although the update steps are the same. Here, we answer the question that \textit{what if the baselines are provided with double data}? We choose the \textsc{Wheel} environment and select the three best baselines in \Cref{fig: safe_cost_main}, namely \baselinename{Penalty($0.02$)}, \baselinename{Penalty($0.1$)}, and \baselinename{Schedule}. To double the data, we consider two situations: doubling the batch size and keeping the training steps (similar to \algname{}), and doubling the training steps and keeping the batch size. The results are shown in \Cref{fig: safe_cost_more_data}, where the semi-transparent colors show the original performance of the baselines, and the non-transparent colors show the performance of baselines after doubling the data. We can observe that after doubling the data, it is uncertain how the performance of the baselines changes. For example, \baselinename{Penalty($0.02$)} performs better than before with doubling the batch size in \Cref{fig: safe_cost_2bs}, but performs worse with doubling training steps in \Cref{fig: safe_cost_2step}. On the contrary, \ourname{} consistently outperforms all baselines. 

\begin{figure}[t]
    \centering
    \begin{subfigure}{.495\textwidth}
        \centering
        \includegraphics[width=\columnwidth]{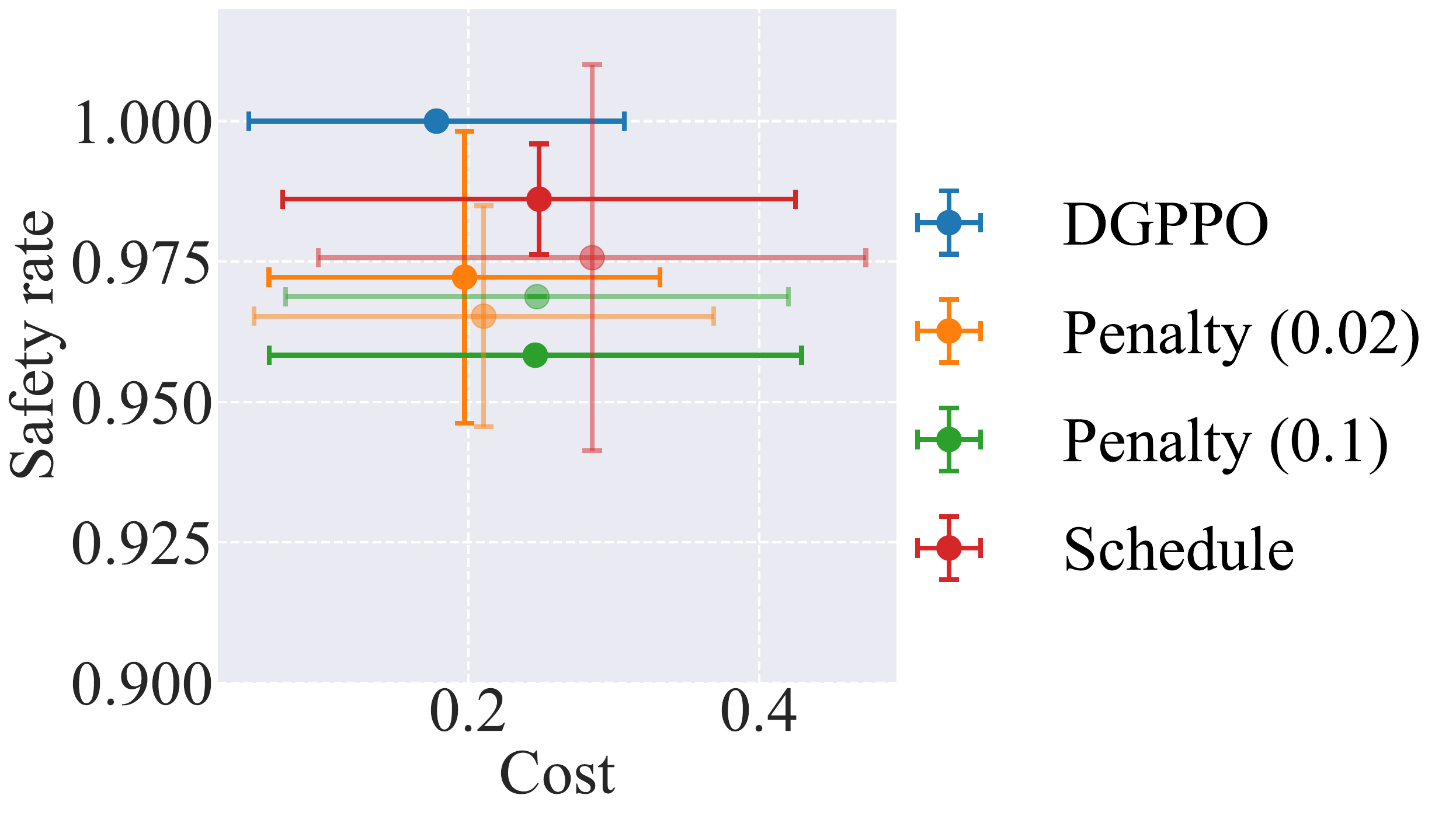}
        \caption{Doubling the batch size.}
        \label{fig: safe_cost_2bs}
    \end{subfigure}
    \begin{subfigure}{.495\textwidth}
        \centering
        \includegraphics[width=\columnwidth]{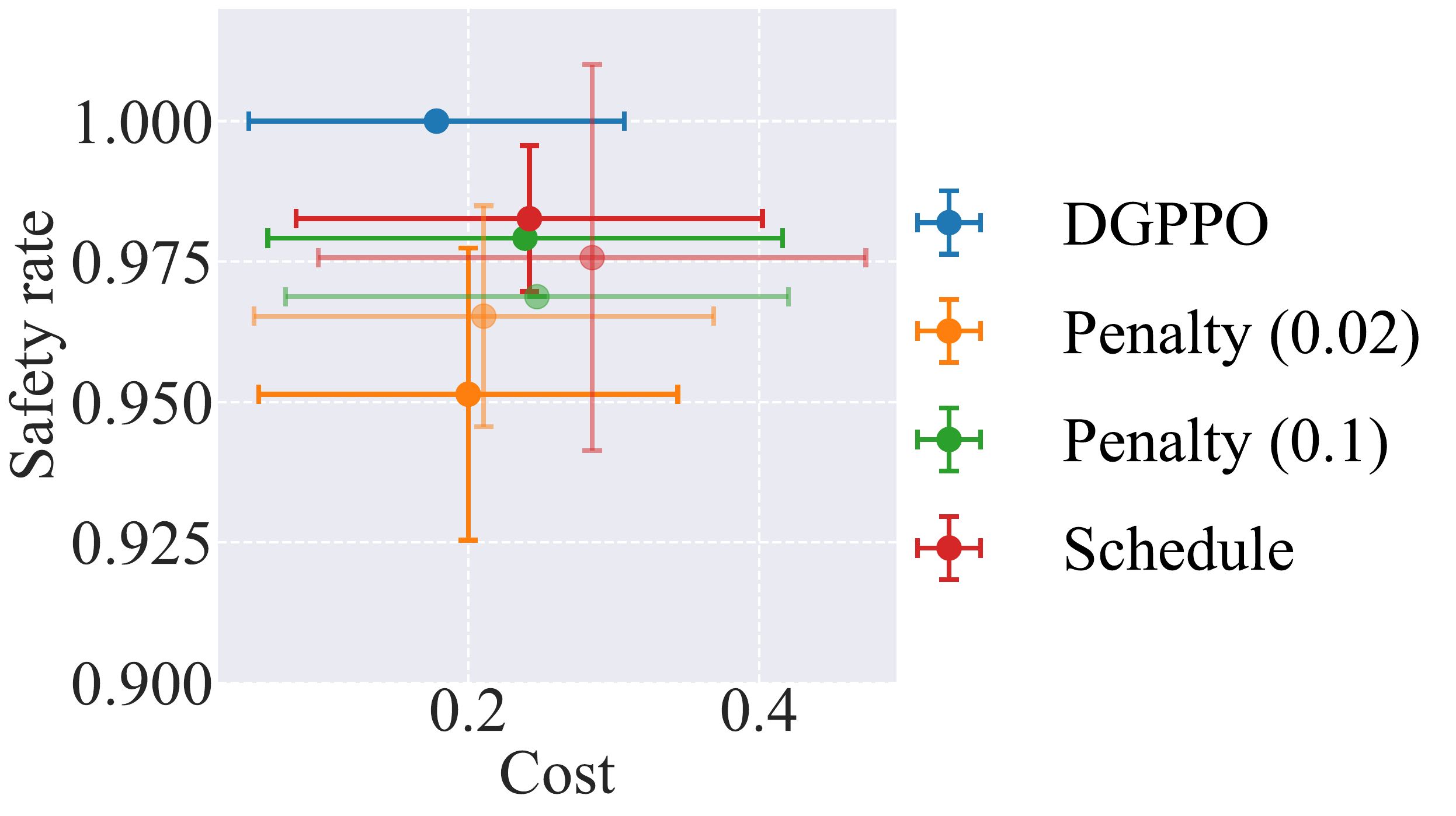}
        \caption{Doubling the training steps.}
        \label{fig: safe_cost_2step}
    \end{subfigure}
    \caption{Costs and safety rates of \algname{} and three best baselines in the \textsc{Wheel} environment. We also plot the original performance of the baselines in semi-transparent colors.}
    \label{fig: safe_cost_more_data}
\end{figure}

\subsubsection{Comparison with constrained optimization without a CBF} \label{app: no-cbf}

As the proposed algorithm \algname{} is based on CBF, one natural question to ask is \textit{what if we do not learn the CBF} but directly perform constrained optimization? To answer the question, we consider another baseline which changes the constraint in \Cref{eq: dcbf_safety_filter:stoch_ver2:constraint} to $\mathbb E_{\vx\sim\rho^{\vpi_\theta}}\left[\max\{0, h^{(m)}(\vx)\}\right] \leq 0$ while keep all other parts the same as \algname{}. We compare \algname{} (\baselinename{Learned CBF}) with this new baseline (\baselinename{No CBF}) in the \textsc{Transport2} environment. The results are shown in \Cref{fig: no-cbf}, which suggests that \baselinename{No CBF} cannot achieve a safety rate that is as high as \baselinename{Learned CBF}. Intuitively, it is because CBF not only constrains entering the avoid set, but also constrains the rate at which the agent can approach the safe-unsafe boundary. Therefore, it is more robust to estimation errors than directly doing constrained optimization. 

\begin{figure}[t]
    \centering
    \includegraphics[width=0.8\columnwidth]{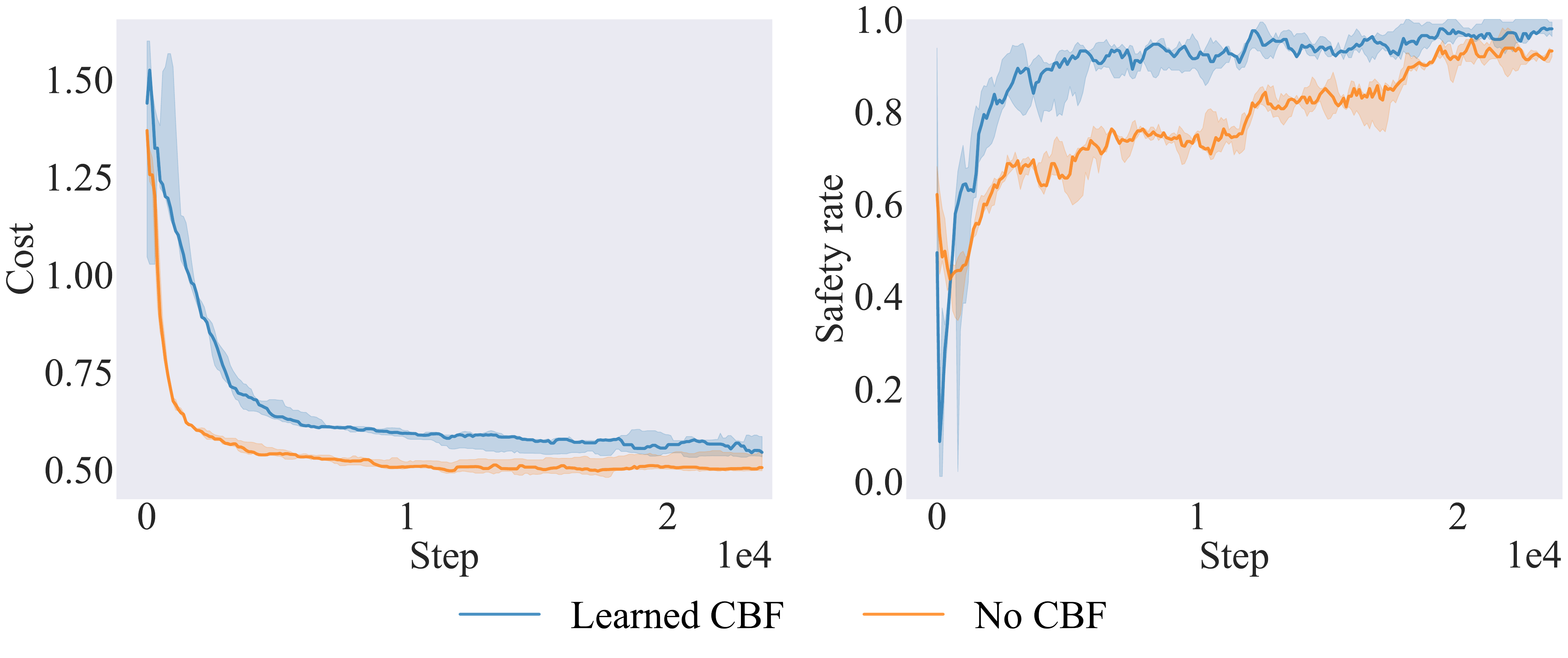}
    \caption{Comparison between \algname{} with doing constrained optimization without a CBF.}
    \label{fig: no-cbf}
\end{figure}

\subsubsection{Sensitivity analysis of $\nu$}

In \Cref{sec: ablation}, we show that our proposed $\nu$ scheduling method achieves the best result compared with other choices of $\nu$.
In particular, $\nu>2$ results in slower convergence in the cumulative cost.
Here, we further perform experiments to demonstrate the influence of $\nu$ on \algname{}. We consider the \textsc{Spread} environment with $\nu = 1, 2, 4, 6$, and train \algname{} until convergences. The results are shown in \Cref{fig: nu_long}, where the mean cost and standard deviation of \ourname{} and the baseline with the best performance (\baselinename{Schedule}) are shown in dashed lines and shades. We can observe that although the convergence of \algname{} becomes slower with larger $\nu$, \textit{the converged costs are the same}, and are much lower than \baselinename{Schedule}. This phenomenon is different from the baselines, where different choices of hyperparameters directly affect the converged costs (See \Cref{fig: safe_cost_main}). 

\begin{figure}[t]
    \centering
    \includegraphics[width=0.5\linewidth]{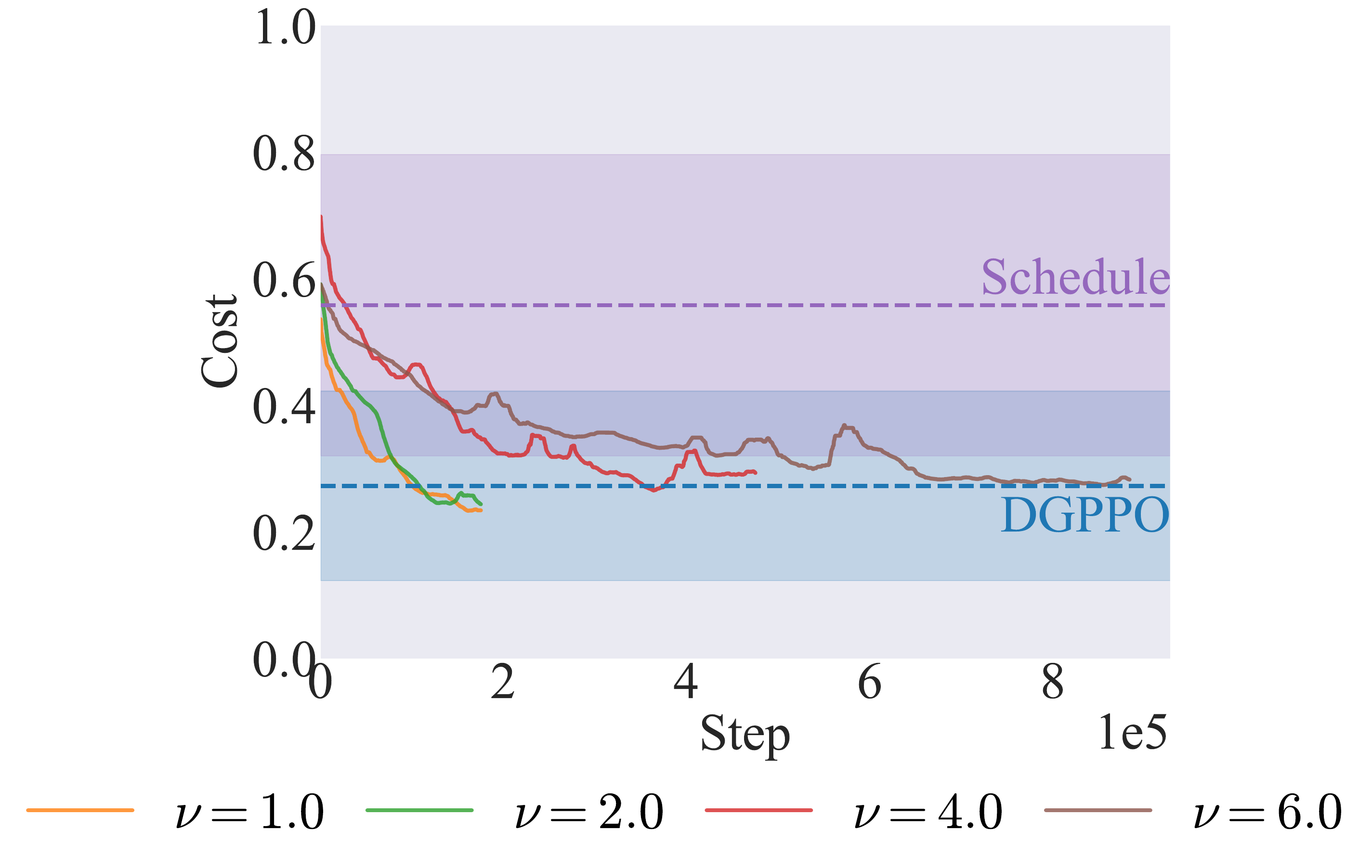}
    \caption{Influence of different $\nu$ on the convergence speed and the converged result of \algname{}. The dashed lines show the mean of \ourname{} and the baseline with the best performance (\baselinename{Schedule}), and the shades show the standard deviation.}
    \label{fig: nu_long}
\end{figure}

\subsection{Additional environments in the safe multi-agent MuJoCo environments}

To further demonstrate the ability of \algname{} in environments with complex discrete-time dynamics, here we consider another benchmark named safe multi-agent MuJoCo \citep{gu2023safe}. We use the Safe \textsc{HalfCheetah(2x3)} and the Safe \textsc{Coupled HalfCheetah(4x3)} tasks, where the agents control different subsets of joints of one or two cheetahs. The first number in the parenthesis denotes the number of agents, while the second number shows the number of joints that each agent controls. The agents need to work collaboratively to maximize the forward velocity but avoid colliding with a wall that moves forward at a predefined velocity. The results are shown in \Cref{fig: cheetah}, which suggests the same results as the main experiments in \Cref{sec: results}, that \ourname{} has the best performance with a fixed set of hyperparameters. 

\begin{figure}[t]
    \centering
    \includegraphics[width=0.98\linewidth]{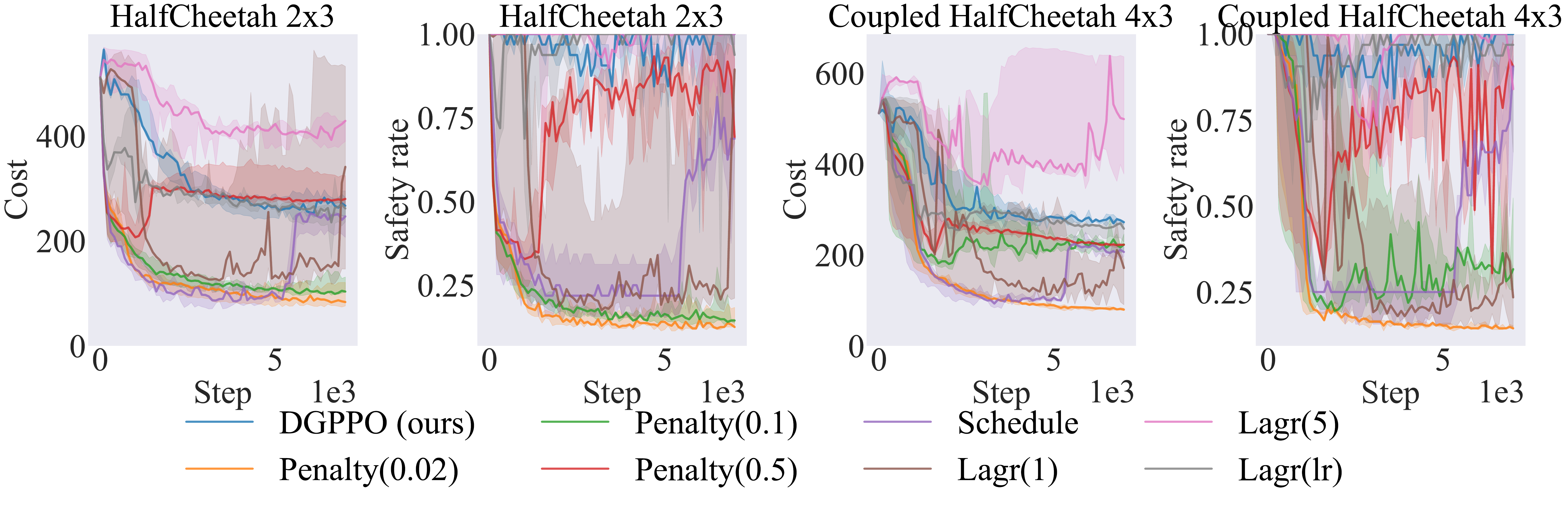}
    \caption{Cost and safety rates of DGPPO and the baselines during training in the Safe multi-agent MuJoCo environments.}
    \label{fig: cheetah}
\end{figure}

\subsection{Scalability and Generalizability}

Here we test the scalability and generalizability of \algname{}. Following \citet{zhang2024gcbf+}, we define scalability as the number of agents during training, and generalizability as the ability to be deployed with more agents during test time.

Considering scalability, \algname{} has a similar performance as its based method GCBF+ \citep{zhang2024gcbf+} as we have shown in \Cref{sec: results} that \algname{} can be trained on $7$ agents, and GCBF+ is trained with $8$ in its original paper. In addition to \Cref{fig: train_increase_n} in the main pages, here we also provide the comparison of the costs and safety rates of the converged policies of each algorithm in \Cref{fig: safe-cost-increase-n}. 

\begin{figure}
    \centering
    \includegraphics[width=0.8\columnwidth]{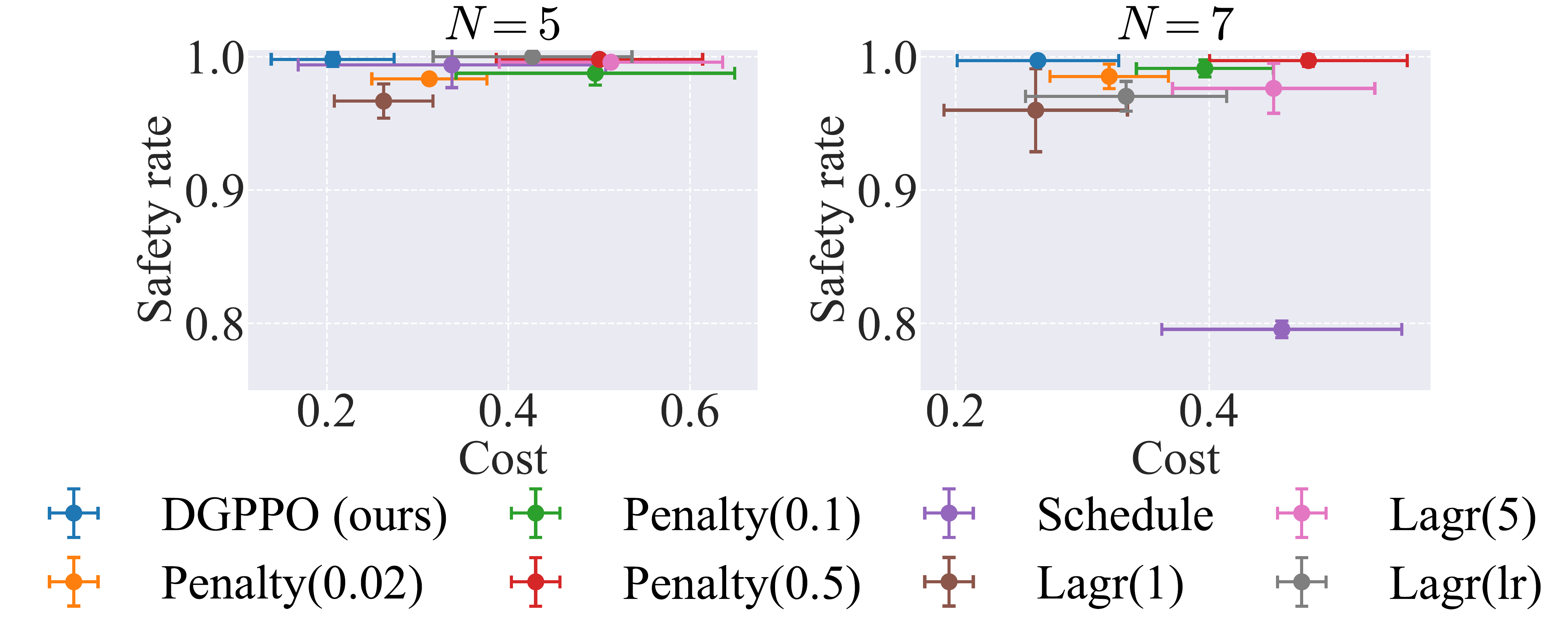}
    \caption{Costs and safety rates of the converged policies of \algname{} and the baselines in environments with $N=5, 7$.}
    \label{fig: safe-cost-increase-n}
\end{figure}

Considering generalizability, GCBF+ can be deployed on $512$ agents without significant performance loss after training. Here, we perform a new experiment in the \textsc{Target} environment where \algname{} is also trained with $8$ agents. In \Cref{tab: generalize}, we show the test results of \algname{} deployed with larger numbers of agents. We observe that \algname{} maintains high safety rates and low costs when deployed on up to $512$ agents.

However, the above results are obtained in environments with the same agent density as the training environment. We cannot deploy DGPPO in environments with significantly higher agent density than the training environment because RL algorithms are sensitive to distribution shifts. We leave handling large distribution shifts to future work.

\begin{table}[t]
    \centering
    \caption{Generalizability test results of \algname{}.}
    \label{tab: generalize}
    \begin{tabular}{c|cc}
    \toprule
    Number of agents & Safety rate & Normalized cost \\
    \midrule
    8 & $1.000\pm0.000$ & $1.673\pm0.430$ \\
    16 & $0.992\pm0.088$ & $1.784\pm0.316$
    \\
    32 & $0.987\pm0.112$ & $1.748\pm0.235$ \\
    64 & $0.986\pm0.118$ & $1.799\pm0.418$ \\
    128 & $0.982\pm0.133$ & $1.839\pm0.323$ \\
    256 & $0.985\pm0.122$ & $1.823\pm0.366$ \\
    512 & $0.985\pm0.123$ & $1.821\pm0.390$ \\
    \bottomrule
    \end{tabular}
\end{table}

\subsection{Code}

The code of our algorithm and the baselines are provided in the `dgppo.zip' file in the supplementary materials and online at \url{https://github.com/MIT-REALM/dgppo}.

\section{More discussion about \algname{}}

\subsection{Advantages on not depending on a nominal policy}

As discussed in \Cref{sec: intro}, one of the drawbacks of the CBF-based methods \citep{wang2017safety,zhang2024gcbf+} is that they require a nominal policy that can achieve high task performance. Here we further discuss why relying on a nominal performant policy is \textit{not} a good idea. 

\paragraph{Requirement of Simple or Known Dynamics.}
Controller design usually requires the dynamics to be simple or requires knowledge of the dynamics.
The PID controllers in \citet{zhang2024gcbf+} are constructed for the unicycle dynamics. More generally, PID controllers are usually only used with single-input single-output systems. For more complicated systems, one could use LQR or MPC, but this requires full knowledge of the dynamics. In addition, PID controllers are much more difficult to apply in environments with complex \textit{contact dynamics}, for example, our \textsc{Transport}, \textsc{Wheel}, and \textsc{Transport2} environments. 

\paragraph{Deadlocks.} 
Another drawback is that the CBF-QP approach of \citet{zhang2024gcbf+} leads to deadlocks, as discussed in Section VIII of \citet{zhang2024gcbf+} or theoretically in e.g. \citet{grover2021deadlock}.
This is because the safety filter approach of \citet{zhang2024gcbf+} only minimizes deviations from the nominal policy at the current time step, even if this leads to a deadlock at a future time step. In contrast, minimizing the cumulative cost directly takes future behavior into account and hence will try to avoid deadlocks. Here we perform an additional experiment to demonstrate this. We apply the converged controller trained with GCBF+ and \algname{} respectively in the \textsc{Target} environment shown in \Cref{fig: deadlock-comparison}, where the agent needs to get around a large static obstacle and reach the goal. In the figure, we can observe that the GCBF+ policy gets in a deadlock behind the obstacle because the GCBF+ policy is myopic and only considers safety safe and minimizes the deviation from the reference controller at the \textit{current} timestep. Therefore, stopping behind the obstacle is the optimal solution for GCBF+. On the contrary, the \algname{} policy successfully reaches the goal because it minimizes the \textit{long-horizon} cumulative cost. 

\begin{figure}[t!]
    \centering
    \begin{subfigure}{0.45\columnwidth}
        \centering
        \includegraphics[trim={1.3cm 0 1.3cm 0},clip,angle=90,width=\columnwidth]{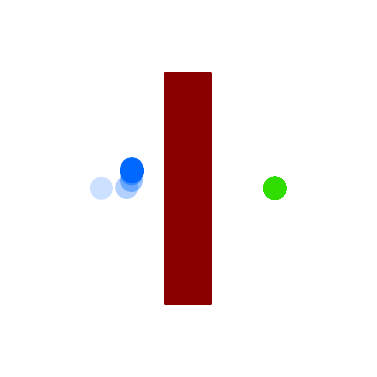}
        \caption{GCBF+ \citep{zhang2024gcbf+}}
    \end{subfigure}
    \begin{subfigure}{0.45\columnwidth}
        \centering
        \includegraphics[trim={1.3cm 0 1.3cm 0},clip,angle=90,width=\columnwidth]{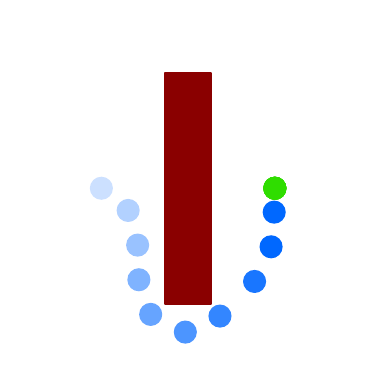}
        \caption{\algname{}}
    \end{subfigure}
    \caption{Comparison of the converged policies learned using \algname{} and GCBF+ \citep{zhang2024gcbf+} in a \textsc{Target} environment, where the agent (blue) needs to avoid the large obstacle (red) and reach the goal (green). The GCBF+ policy is myopic and gets stuck in a deadlock, while the \algname{} policy avoids the obstacle and reaches the goal to minimize the cumulative cost.}
    \label{fig: deadlock-comparison}
\end{figure}

\end{document}